\documentclass{article}
\usepackage[affil-it]{authblk}
\usepackage[utf8]{inputenc}
\usepackage[a-1b]{pdfx}
\usepackage{amsthm}
\usepackage{hyperref}
\usepackage{booktabs}
\usepackage{adjustbox}
\newtheorem{theorem}{Theorem}

\newtheorem{corollary}[theorem]{Corollary}
\newtheorem{proposition}[theorem]{Proposition}
\usepackage[margin=1in]{geometry}

\usepackage{graphics,multicol}
\usepackage{natbib}
\usepackage{epsfig}
\usepackage{wrapfig}
\usepackage{tikz}
\usetikzlibrary{positioning,arrows.meta}
\usepackage{float}
\usepackage{subcaption}
\usepackage{amsmath,amsfonts}
\usepackage{amssymb}
\usepackage{rotating}
\usepackage{pdfpages}
\usepackage{color}
\usepackage{epic}
\usepackage{eepic}
\usepackage[english]{babel}
\usepackage{enumerate}
\usepackage{url}
\usepackage{xcolor}
\usepackage{algorithm}
\usepackage{algpseudocode}
\usepackage{tabularx}
\usepackage{array}
\usepackage{booktabs}

\newcommand{\R}{\mathbb{R}}

\newcommand{\Dcal}{\mathcal{D}}

\newcommand{\Dist}{\mathrm{Dist}}

\title{Stochastic Dynamics on Persistence Diagram Space via Reinforcement Learning}

\author[1]{Farzana Nasrin}

\affil[1]{Department of Mathematics,
       University of Tennessee,
       Knoxville, TN -37916, USA}

\date{}

\begin{document}

\maketitle

\begin{abstract}

Persistence diagrams (PDs) provide stable and interpretable summaries of multiscale
topological structure. While substantial progress has been made in the
statistical analysis of PDs, existing literature often treats diagrams
as static objects and provide limited frameworks for probabilistic
modeling and stochastic evolution on PD space.
We introduce a reinforcement learning framework for stochastic
dynamics on PD space, where diagrams evolve through
topology aware local edit operations. The dynamics define controlled Markov
processes on spaces of finite PDs with variable
cardinality. We establish conditions under which the induced Markov
chains are irreducible, aperiodic, and geometrically ergodic, implying
the existence of unique stationary probability laws on PD space.
To guide the dynamics toward scientifically relevant topological targets, we formulate objectives that encompass distribution matching, task specific topological statistics, and structure-preserving compression.
The resulting rewards balance task specific distributional targets, diagram fidelity, and complexity reduction, and yield a framework for adaptive topological simplification and probabilistic modeling.
Experiments on synthetic and neuroimaging PDs
demonstrate that the proposed framework can preserve dominant
topological structure while reducing diagram complexity. 
% These results suggest a new probabilistic
% perspective in which PDs
% are modeled as states of stochastic topological processes rather than
% static summaries alone.
\end{abstract}

\textbf{keywords.}
Persistence diagrams, topological data analysis,
reinforcement learning, stochastic dynamics,

probabilistic modeling

\section{Introduction}
\label{sec:intro}
Topological data analysis (TDA) provides tools for extracting qualitative and multiscale structure from complex data. Among these tools, persistence diagrams (PDs) encode the birth and death of homological features, such as connected components, loops, and voids, across a filtration~\citep{edelsbrunner2008persistent,oudot2015persistence}. Their stability and interpretability have made them useful in statistical learning and applications~\citep{Liu2019, Gabella2021, Heydenreich2021, Maroulas2021, Townsend2020, Ichinomiya2020, Bernstein2020, Deshmukh2023, Sun2021, Tymochko2020, Amezquita2023, Papamarkou2022randomPD, Nasrin_2024}. Nevertheless, collections of PDs obtained from noisy or heterogeneous observations present two practical challenges: characterizing the topological variability of the underlying population~\citep{mileyko2011probability,turner2014frechet,maroulas2019nonparametric} and reducing diagram complexity while controlling distortion of the information needed for subsequent analysis~\citep{sheehy2021sketching}.

Existing methods analyze PDs through functional summaries, kernels, probability measures, and point-process models~\citep{bubenik2015landscapes,reininghaus2015stable,carriere2017sliced,mileyko2011probability,Adler2019,maroulas2019bayesian}. Our objective is complementary: to learn transformations that address two related problems, namely, reproducing population-level topological variability and simplifying individual diagrams while controlling information loss. The first supports applications such as uncertainty analysis and rare event simulation, whereas the second can reduce the cost of repeated diagram comparisons while preserving subject level topology and cohort organization. Both problems are naturally sequential because each addition, deletion, or movement changes the current diagram and therefore affects which subsequent edits are available and useful. A fixed pointwise rule cannot account for these cumulative effects or adapt its decisions to the remaining transformation budget. We therefore formulate PD transformation as a sequential decision problem in which a reinforcement learning (RL) policy observes the current diagram, selects a valid topology aware edit, and receives task specific feedback based on the consequences of the resulting sequence~\citep{sutton2018reinforcement}. In this formulation, RL provides a common mechanism for learning both stochastic population behavior and adaptive topology preserving simplification.

For a fixed policy, topology aware edit operations induce a Markov transition kernel on PD space, whose stationary law provides a probabilistic model for the policy's long run behavior. Using general-state-space Markov chain arguments based on irreducibility, aperiodicity, and drift and minorization conditions~\citep{meyn2012markov}, we establish geometric ergodicity and the existence of a unique stationary distribution, with convergence independent of the initial diagram. The ergodic theorem then justifies approximating expectations under the stationary distribution by averages along finite policy generated trajectories and provides the basis for consistency of the empirical distributional objective. Stable discrepancies, including sliced Wasserstein constructions and kernel maximum mean discrepancy (MMD)~\citep{carriere2017sliced,gretton2012kernel}, can therefore compare the policy induced behavior with empirical target distributions or task specific topological summaries. RL controls this behavior by selecting successive edits according to a task specific reward, providing a common framework for population level modeling, rare event recovery, uncertainty oriented sampling, and topology aware simplification.

We evaluate the framework through synthetic rare event recovery and topology preserving compression of ABIDE neuroimaging PDs~\citep{dimartino2014abide}, illustrating its use for both distributional modeling and sequential simplification.
The main contributions of this paper are summarized as follows.

\begin{itemize}

\item We introduce an RL framework for stochastic dynamics on PD space, in which topology aware edit operations induce controlled Markov processes acting directly on PD points.

\item We establish irreducibility, aperiodicity, and geometric ergodicity of the induced dynamics, yielding unique stationary probability laws and convergence of the policy-controlled process.

\item We develop distributional objectives based on stable statistical discrepancies, enabling probabilistic modeling and topology preserving simplification directly on PD space.

\item Through synthetic and ABIDE experiments, we demonstrate rare event recovery and sequential compression that preserves both diagram level topology and cohort level geometry.

\end{itemize}

The remainder of the paper is organized as follows. Section \ref{sec:background} reviews PDs, statistical models on PD space, and the required background on RL and MCs. Section~ \ref{sec:framework} presents the proposed framework, including the diagram state space, topology aware edit operations and Section \ref{sec:theory} establishes the theoretical properties, including existence of a unique stationary distribution and convergence of empirical trajectory measures. Section \ref{sec:algorithms} discusses the practical realization of the framework. Section~\ref{sec:experiments} presents the synthetic rare event recovery and ABIDE neuroimaging experiments. Finally, Section \ref{sec:discussion} discusses the implications, limitations, and future directions of the work, while some technical proofs are provided in the Appendix.

%%%%%%%%%%%%%%%%%%%%%%%%%%%%%%%%%%%%%%%%%%%%%%%%
\section{Background} \label{sec:background}

Persistence diagrams (PDs) provide multiscale summaries of topological features in data, making them well suited for probabilistic modeling and statistical learning. In this section, we review the construction of PDs, existing statistical approaches on diagram space, and the reinforcement learning (RL) background needed for our framework.

\subsection{Persistence Diagrams}

We briefly recall the construction of PDs; see \cite{edelsbrunner2008persistent,oudot2015persistence,ghrist2008barcodes} for details. 
Let $(X,d)$ be a metric space or a domain endowed with a filtration. 
Two standard examples are:
\begin{enumerate}
    \item \textbf{Sublevel set filtrations:} In a \emph{sublevel set filtration} \citep{edelsbrunner2008persistent,oudot2015persistence}, one considers a scalar function $f:X \to \mathbb{R}$, such as pixel intensity in a grayscale image. The sublevel sets $S_r = \{x \in X: f(x) \leq r\}$ expand as the threshold $r$ increases, and topological features such as connected components ($H_0$), loops ($H_1$), and voids ($H_2$) appear and disappear. Their birth and death times define points in a PD, where persistence reflects lifetime across scales. Figure~\ref{fig:SL} top row illustrates this process: as $r$ increases, new components form, merge, and occasionally create loops, producing a PD that summarizes image topology. 
    \begin{figure}[h!]
    	\centering
    	\subfloat[]{\includegraphics[width=1.25in, height = 
    	1in]{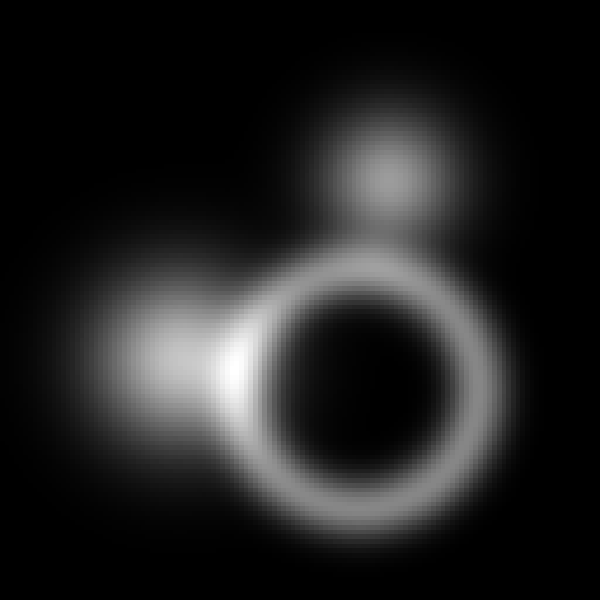}}\hspace{0.1in}
            	\subfloat[]{\includegraphics[width=1.25in, height = 
    	1in]{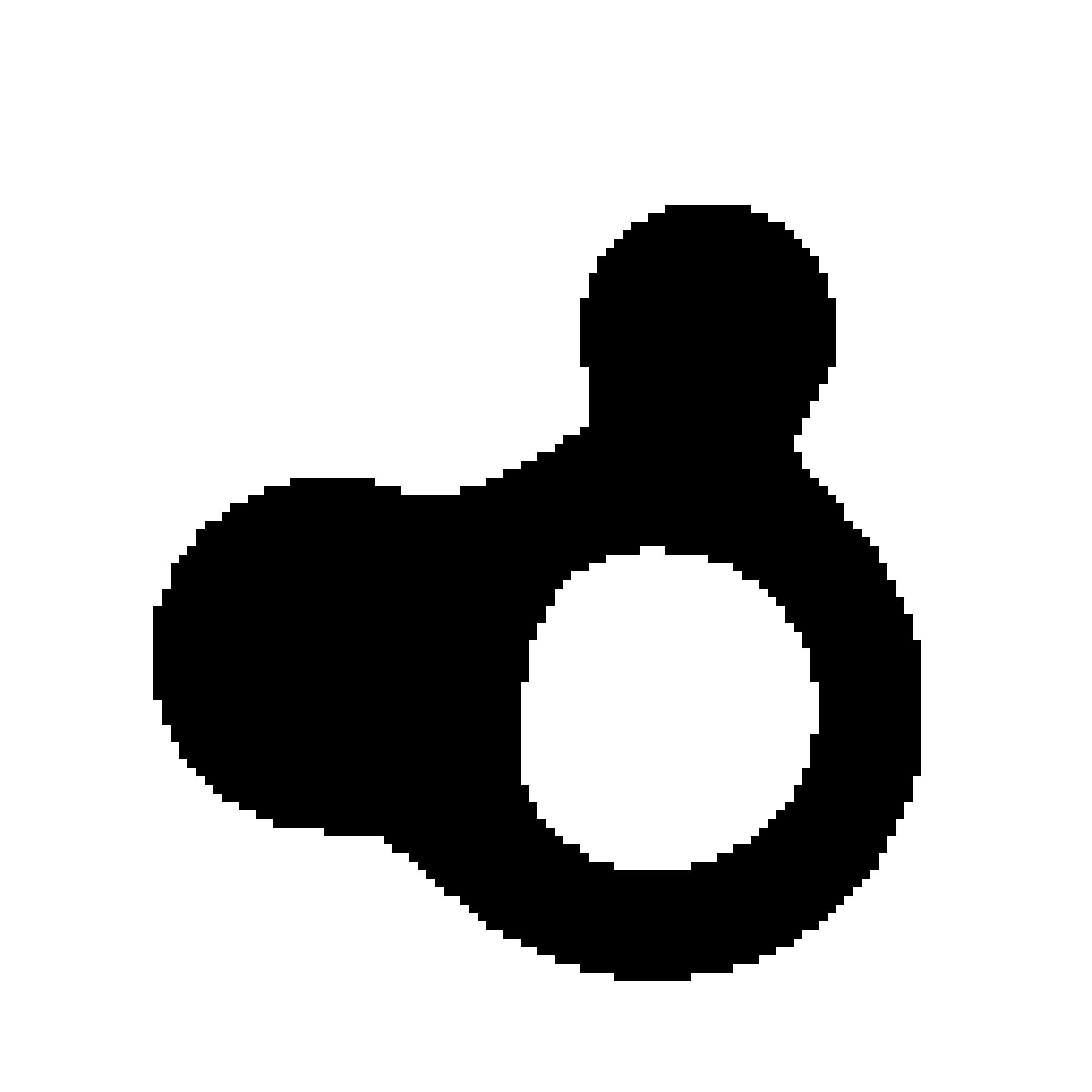}}\hspace{0.1in}
                \subfloat[]{\includegraphics[width=1.25in, height = 
    	1in]{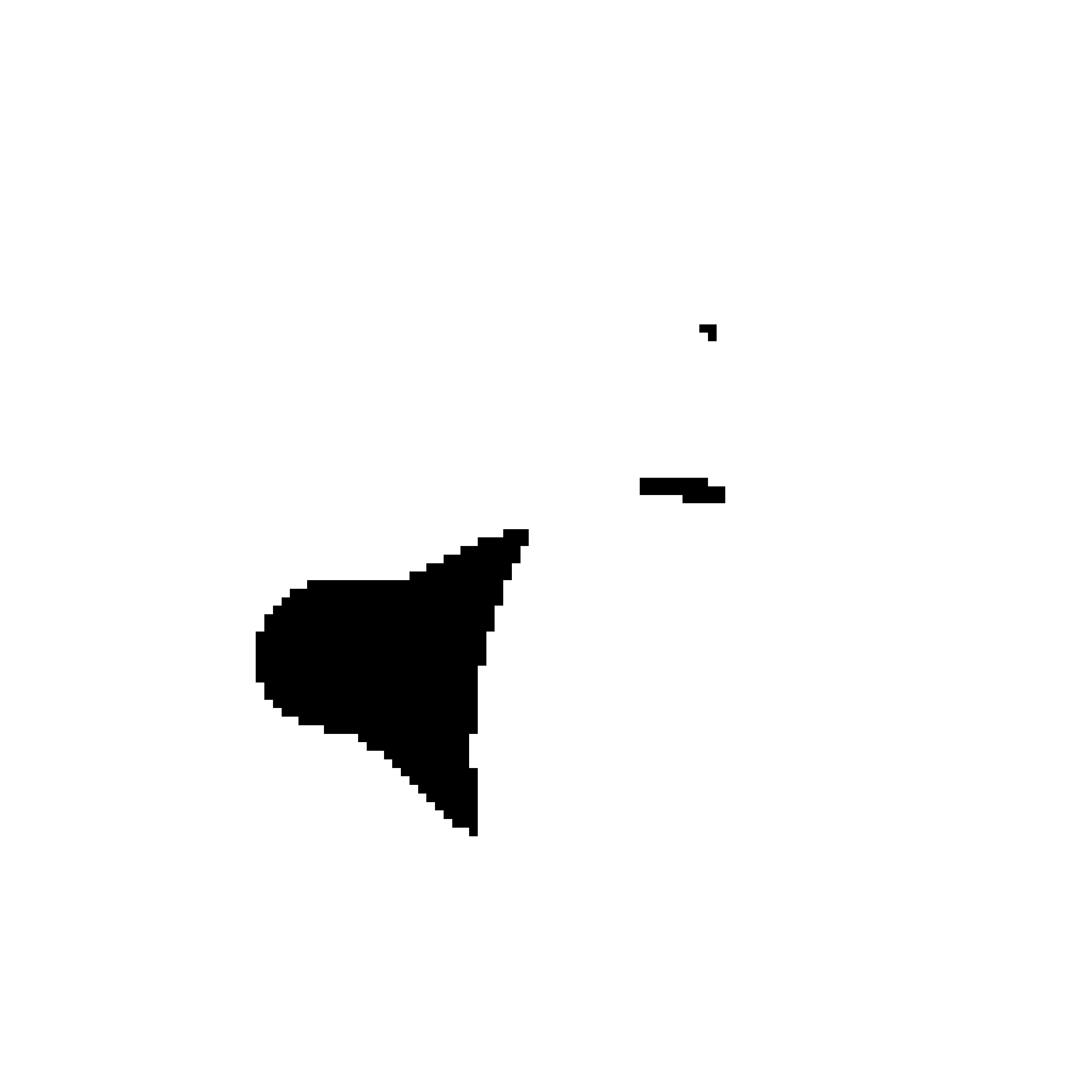}} \hspace{0.1in}
    		\subfloat[]{\includegraphics[width=1.5in, height = 
    	1.25in]{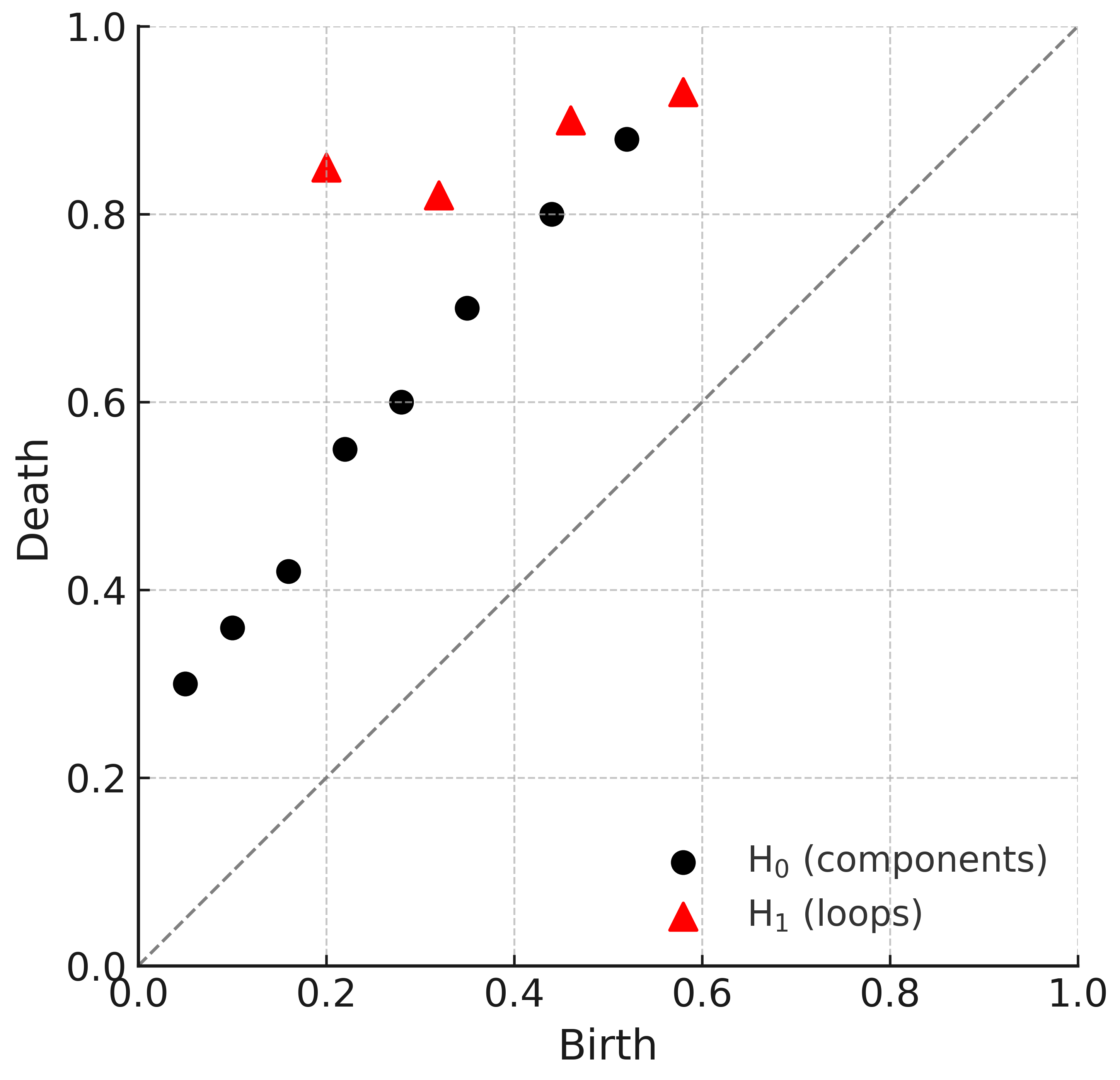}}\\
    	
\subfloat[]{\includegraphics[width=1.25in, height = 
    	1in]{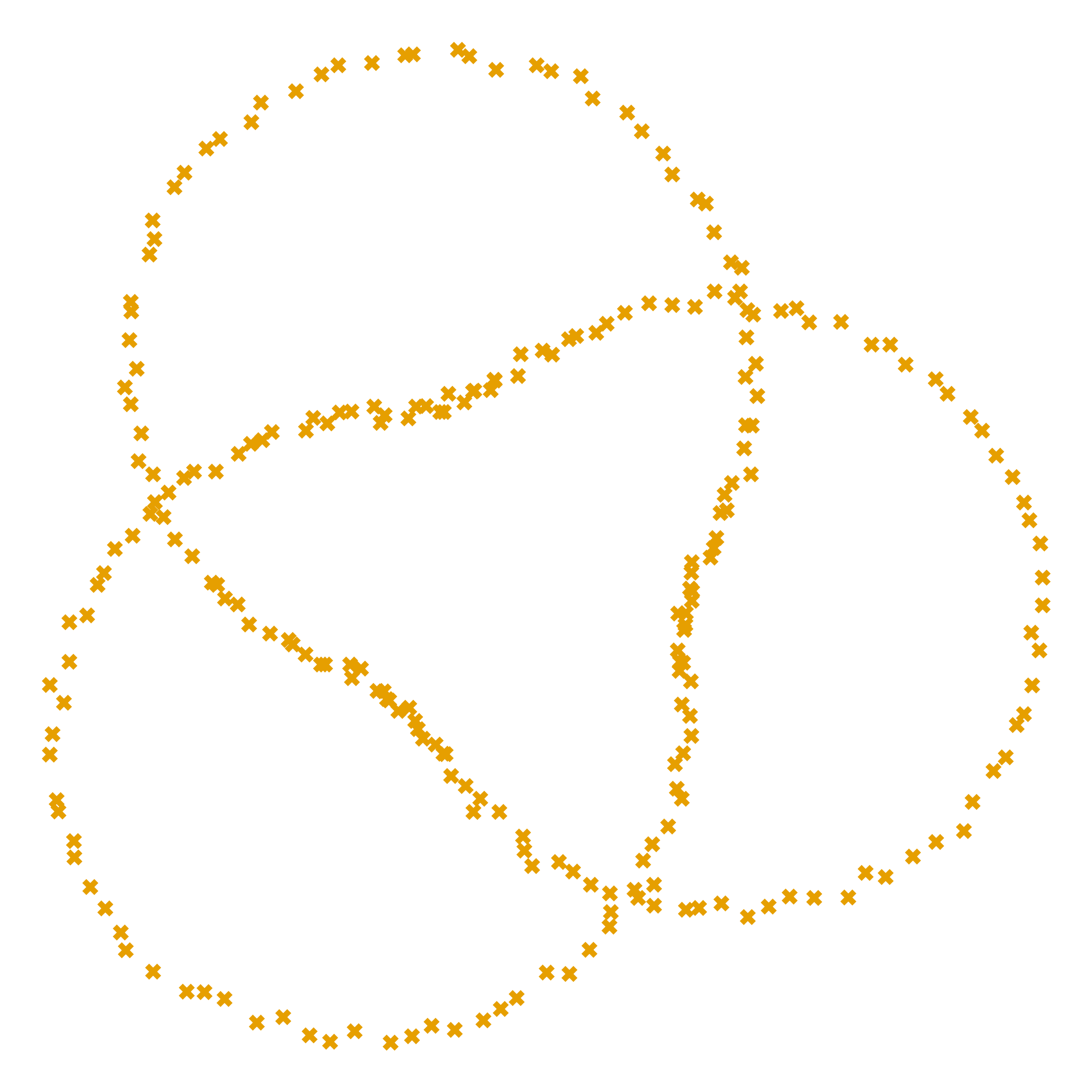}}\hspace{0.1in}
    	\subfloat[]{\includegraphics[width=1.25in, height = 
    	1in]{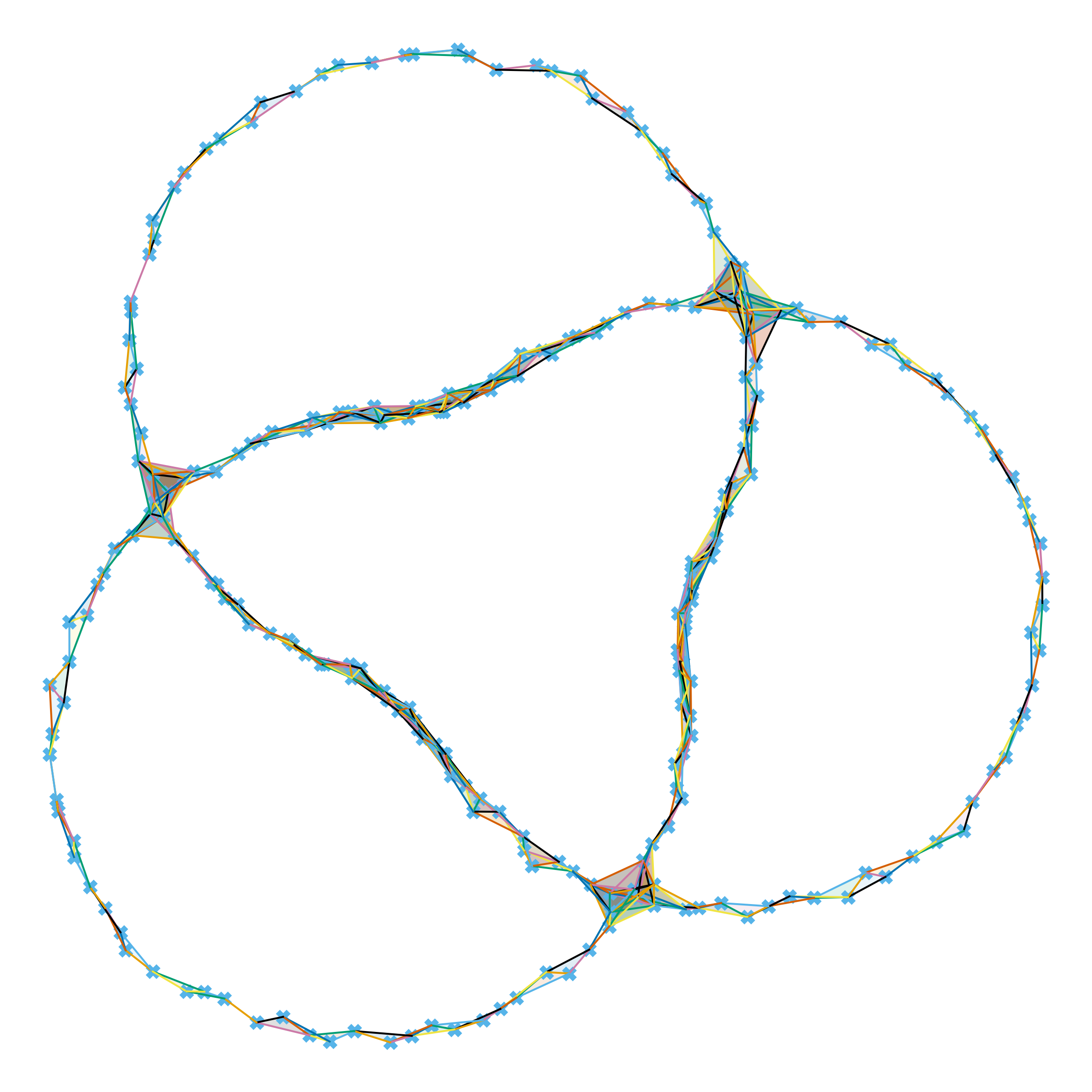}}\hspace{0.1in}
    	\subfloat[]{\includegraphics[width=1.25in, height = 
    	1in]{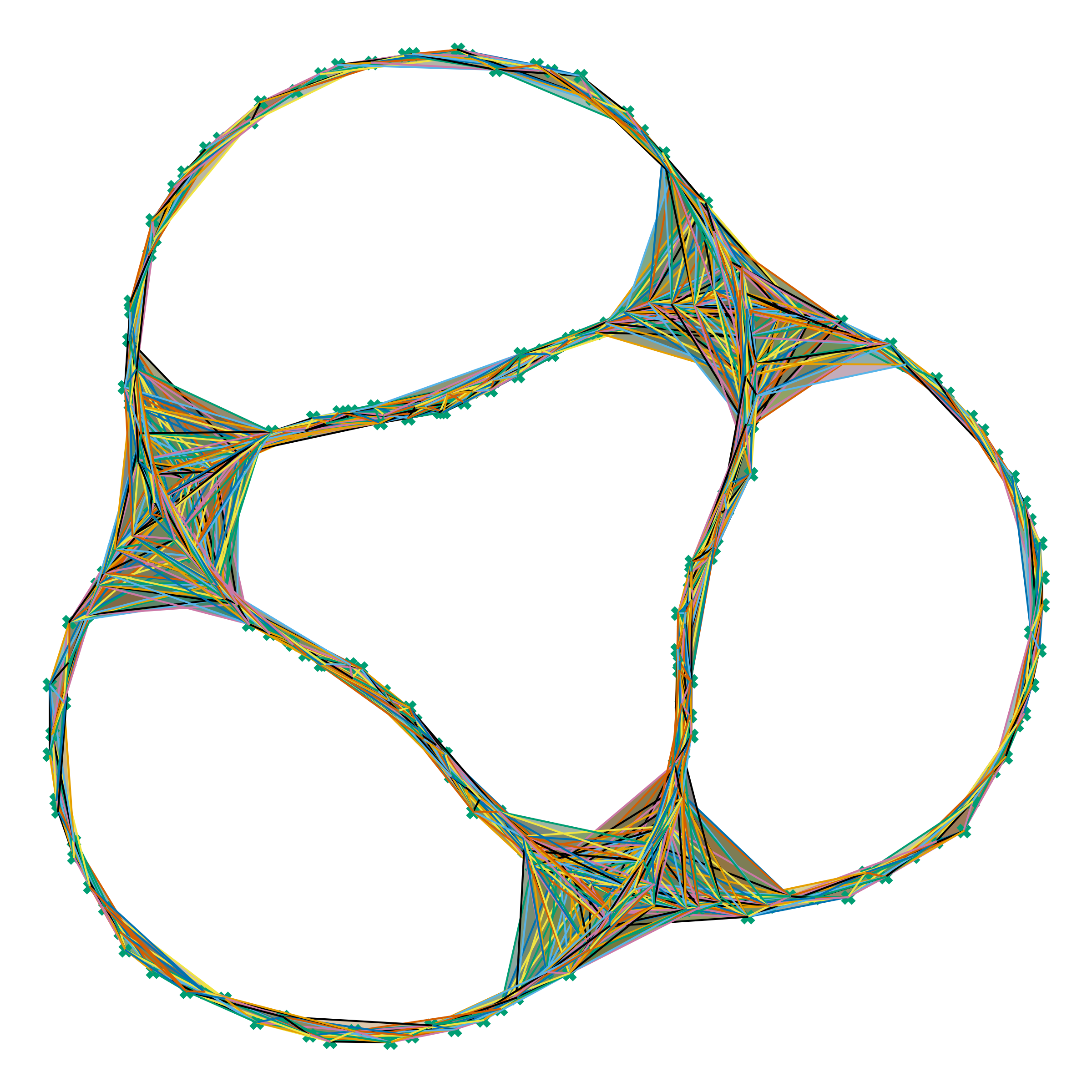}} \hspace{0.1in}
        \subfloat[]{\includegraphics[width=1.5in, height = 
    	1.25in]{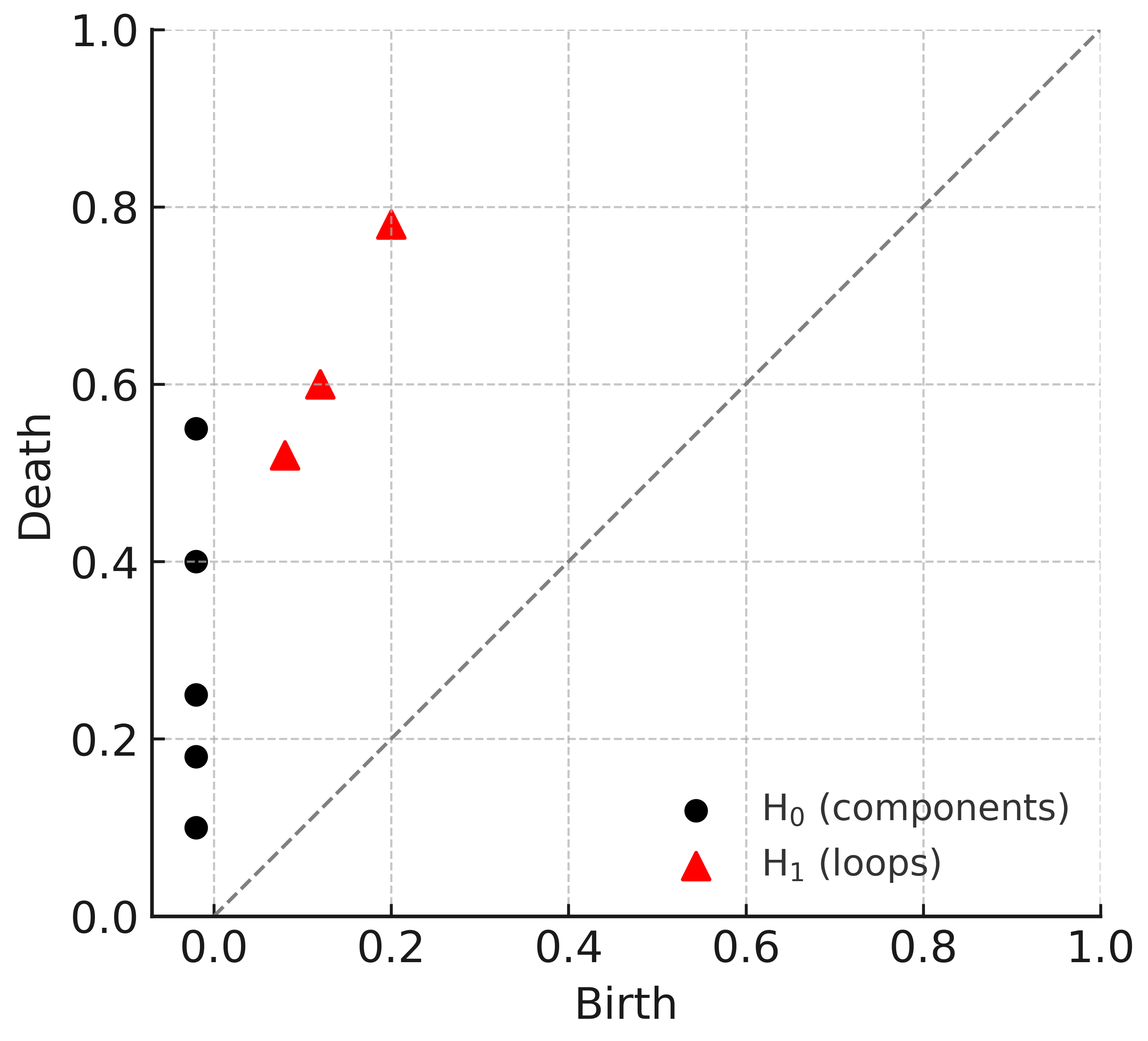}}\\

    	\caption{{Top row: (a) is a toy grayscale image, (b) - (c) are two intermediate steps of the sublevel set filtration corresponding to $r=0.2$, and $r=0.6$, respectively,  and (d) is the corresponding PD.   
        Bottom row: VR filtration for a trefoil-knot point cloud. 
 A 2D projection of a noisy trefoil sampling is shown (e). 
As the distance threshold $\epsilon$ increases, edges are added to form Rips complexes (f) - (g), and topological features appear and vanish across scales.   Their birth and death times are summarized in the PD (h).
    	}}
    	\label{fig:SL}
    \end{figure}
    \item \textbf{Vietoris Rips filtrations:} The \emph{Vietoris Rips (VR) complex} \citep{ghrist2008barcodes} provides a natural construction for point cloud data. Here, points are connected if their pairwise distance is below a threshold $\epsilon$, and higher dimensional simplices are added whenever all faces are present. As $\epsilon$ grows, connected components merge, loops emerge, and higher order cycles eventually fill in. Because it relies only on pairwise distances, the Rips filtration applies broadly to arbitrary metric datasets. Figure~\ref{fig:SL}  bottom row demonstrates this using a trefoil-knot point cloud: increasing $\epsilon$ introduces edges and simplices, recording topological events in a PD that captures the geometry of the knot. 
    % This example highlights how PDs provide a unified language for extracting multiscale structure from both scalar fields and metric data.

%     \begin{figure}[!h]
%     	\centering
%     	\subfloat[]{\includegraphics[width=1.5in, height = 
%     	1.5in]{figures/trefoil_point_cloud.png}}\hspace{0.1in}
%     		\subfloat[]{\includegraphics[width=2in, height = 
%     	1.5in]{figures/trefoil_rips_pd_shifted.png}}\\
    	
%     	\subfloat[]{\includegraphics[width=1.5in, height = 
%     	1.5in]{figures/trefoil_rips_eps_006.png}}\hspace{0.1in}
%     	\subfloat[]{\includegraphics[width=1.5in, height = 
%     	1.5in]{figures/trefoil_rips_eps_01.png}}\hspace{0.1in}
%     	\subfloat[]{\includegraphics[width=1.5in, height = 
%     	1.5in]{figures/trefoil_rips_eps_016.png}}

%     	\caption{VR filtration for a trefoil-knot point cloud. 
% A 2D projection of a noisy trefoil sampling is shown (a). 
% As the distance threshold $\epsilon$ increases, edges are added to form Rips complexes (c) - (e), 
% and topological features appear and vanish across scales. 
% Their birth and death times are summarized in the PD (b), with black circles for $H_0$ (components) and red triangles for $H_1$ (loops).}
%     	\label{fig:rips}
%     \end{figure}

\end{enumerate}
% As the scale parameter (e.g., $r$ or $\varepsilon$) increases, homological features appear and disappear: connected components ($H_0$), loops ($H_1$), and higher-dimensional cycles. 
Each feature is assigned a birth time $b$ (when it appears) and death time $d$ (when it vanishes), producing a multiset of points $(b,d)$ in the wedge
\[
\mathbb{W} = \{(b,d)\in\R^2: b<d\}.
\]
The resulting multiset of birth--death pairs is called the PD.
Stability theorems show that small perturbations of the input data induce
bounded changes in PDs under bottleneck and Wasserstein
distances~\citep{cohen2007stability,chazal2015stochastic}. We denote by
$\mathcal{D}$ the space of all finite PDs, modeled as
finite multisets of points in the wedge $\mathbb{W}$. 
This space is commonly equipped with bottleneck or $p$-Wasserstein metrics,
which measure optimal matchings between diagram points while allowing
unmatched features to be paired with the diagonal.  This metric structure on $\mathcal{D}$ provides the geometric foundation for defining probability measures and stochastic dynamics directly on PD space.

\subsection{Statistical Models for Persistence Diagrams}

A wide range of statistical methodologies have been developed for PDs:

\paragraph{Functional and vector representations.}
Persistence landscapes and silhouettes embed PDs into functional spaces, enabling classical statistical analysis and convergence theory~\citep{bubenik2015landscapes,chazal2015stochastic}. Persistence images instead provide stable finite-dimensional vectors suitable for standard statistical and machine learning methods~\citep{adams2017persistence}.

\paragraph{Kernel, optimal-transport, and geometric methods.}
Positive definite and persistence weighted kernels provide reproducing kernel representations of PDs~\citep{reininghaus2015stable,kwitt2015statistical,kusano2016persistence}, while sliced Wasserstein constructions enable stable and computationally tractable diagram comparisons~\citep{carriere2017sliced}. The metric geometry of PD space also supports probability measures, Fr\'echet summaries, and nonparametric density estimation~\citep{mileyko2011probability,turner2014frechet,maroulas2019nonparametric}.

% \paragraph{Geometric probability and nonparametric inference.}The metric geometry of PD space supports probability measures and statistical quantities such as expectations, variances, and Fréchet means~\citep{mileyko2011probability,turner2014frechet}. Nonparametric methods have also been developed to estimate probability density functions for random PDs using random set and kernel density formulations~\citep{maroulas2019nonparametric}.

\paragraph{Point-process, Bayesian, and neural models.}
Point-process models treat off-diagonal persistence points as structured spatial patterns. Gibbs, Poisson, and pairwise-interaction formulations capture dependence among points, support posterior intensity estimation, and generate random PDs~\citep{Adler2019,maroulas2019bayesian,Papamarkou2022randomPD}, while Bayesian extensions permit inference on generator parameters~\citep{Nasrin_2024}. Neural methods instead learn task-dependent representations, including trainable persistence-barcode features and the permutation-invariant PersLay architecture~\citep{hofer2019learning,Carrire2019PersLayAN}.

% \paragraph{Neural representation learning.}Neural approaches learn task dependent representations of PDs rather than relying on a fixed vectorization. Examples include trainable representations of persistence barcodes~\citep{hofer2019learning} and PersLay, a permutation-invariant neural network layer designed specifically for PDs~\citep{carriere2020perslay}.

These approaches provide powerful tools for representing, comparing, estimating, and generating PDs. Our objective is complementary: we model a diagram as the state of a controlled stochastic process and learn how it should change through a sequence of local edit operations. This sequential formulation makes the choice of each edit depend on the current diagram and on the cumulative objective of the task, motivating the probabilistic foundation developed in the following section.

% Despite their success, most existing approaches either impose strong modeling assumptions, operate through auxiliary embedding spaces, or treat persistence diagrams as static random objects. In particular, they do not provide a mechanism for modeling sequential local transitions directly on diagram space, which is central to the stochastic framework developed in this paper.
% These limitations motivate a complementary approach: modeling PDs through stochastic transition mechanisms defined by controlled local edits.
% This requires a careful probabilistic foundation for distributions on diagram space.

% ------------------------------------------------------------
\subsection{Probability Measures on Persistence Diagrams}
\label{sec:prob_PD}

To formally define the probabilistic structure required to define and analyze distributions on PD space we view
$(\Dcal,d_{\Dcal})$ as a metric space equipped with the
$p$--Wasserstein distance
\[
W_p(D,D') 
= \left(
\inf_{\gamma\in\Gamma(D,D')}
\sum_{(x,y)\in\gamma} \|x-y\|_\infty^p
\right)^{1/p},
\]
where $\Gamma(D,D')$ denotes bijections between diagrams after including
diagonal points with infinite multiplicity.
The bottleneck distance $W_\infty$ is the $p\to\infty$ case.
These metrics satisfy fundamental stability properties under perturbations of the
underlying data~\citep{cohen2007stability,chazal2015stochastic}.

    Persistence diagrams equipped with bottleneck or $p$-Wasserstein distances form a metric space suitable for probability theory, weak convergence, and statistical analysis \citep{mileyko2011probability,turner2014frechet,chazal2015stochastic}. Let $\mathcal B(\mathcal D)$ denote the Borel $\sigma$-algebra generated by the open subsets of $(\mathcal D,d_{\mathcal D})$, and let $\mathcal P(\mathcal D)$ denote the set of all Borel probability measures on $(\mathcal D,\mathcal B(\mathcal D))$. A probability measure $\mu\in\mathcal P(\mathcal D)$ therefore represents a distribution on PD space, arising naturally from stochastic filtrations, sampling variability, Bayesian models, or algorithmically generated stochastic dynamics.

% Given observed diagrams $\{D_i\}_{i=1}^n$, the empirical distribution is
% \[
% \hat{\mu}
% =
% \frac1n\sum_{i=1}^n \delta_{D_i}.
% \]

Let $\{\mu_n\}_{n\geq1}\subset \mathcal P(\mathcal D)$ be a sequence of
probability measures on PD space.
We say that $\mu_n$ converges weakly to $\mu$, written
$\mu_n \Rightarrow \mu$, if
\[
\int f\, d\mu_n \;\to\; \int f\, d\mu
\qquad\text{for all bounded continuous } f:\mathcal D\to\mathbb R.
\]
% Intuitively, weak convergence means that bounded continuous summary
% statistics of the generated diagrams converge in expectation, which is
% an appropriate notion of convergence for variable cardinality,
% non-Euclidean objects such as PDs. This notion will be
% used when discussing convergence of the law of an RL driven Markov chain
% to its stationary distribution.
% ------------------------------------------------------------
\subsection{Reinforcement Learning and Markov Chains}
\label{sec:rl_background}

Reinforcement learning concerns sequential decision making in stochastic environments ~\citep{sutton2018reinforcement,haarnoja2018soft}. Let $\mathcal S$ denote the state space and $\mathcal A$ the action space. We use $S_t$ and $A_t$ for the random state and action at time $t$, and $s_t$ and $a_t$ for their realized values. More generally, $s\in\mathcal S$ denotes a current state, $a\in\mathcal A$ an action, and $s'\in\mathcal S$ a possible next state.
For the purposes of this paper, a Markov decision process is specified by:
\begin{itemize}
    \item a state space $\mathcal S$ with Borel $\sigma$-algebra $\mathcal B(\mathcal S)$;
    \item an action space $\mathcal A$ with Borel $\sigma$-algebra $\mathcal B(\mathcal A)$;
    \item a transition kernel $P$, where $P(E\mid s,a)=\Pr(S_{t+1}\in E\mid S_t=s,A_t=a)$, $E\in\mathcal B(\mathcal S)$; and
    \item a one-step reward function $r:\mathcal S\times\mathcal A\times\mathcal S\to\mathbb R$, where $r(s,a,s')$ evaluates the transition from $s$ to $s'$ under
action $a$.
\end{itemize}

\noindent A stochastic policy $\pi_\theta$ assigns a probability distribution over
actions to each state, where $\theta\in\Theta$ is the vector of trainable
policy parameters. For $U\in\mathcal B(\mathcal A)$,
\[
\pi_\theta(U\mid s)
=
\Pr(A_t\in U\mid S_t=s).
\]
When $\mathcal A$ is finite, $\pi_\theta(a\mid s)$ denotes the probability
of selecting action $a$ in state $s$.
Given the current state, the agent and environment generate
$
A_t\sim\pi_\theta(\cdot\mid S_t)$, and 
$S_{t+1}\sim P(\cdot\mid S_t,A_t)$.
For a fixed policy, $\{S_t\}_{t\geq0}$ is a Markov chain with transition
kernel
\[
P_\theta(E\mid s)
=
\int_{\mathcal A}
P(E\mid s,a)\,d\pi_\theta(a\mid s),
\qquad E\in\mathcal B(\mathcal S).
\]
When $\mathcal A$ is finite, this becomes
$
P_\theta(E\mid s)
=
\sum_{a\in\mathcal A}
\pi_\theta(a\mid s)P(E\mid s,a).
$
A probability measure $\mu_\theta\in\mathcal P(\mathcal S)$ is stationary
for the policy induced chain if
\[
\mu_\theta(E)
=
\int_{\mathcal S}
P_\theta(E\mid s)\,d\mu_\theta(s),
\qquad E\in\mathcal B(\mathcal S).
\]
% In conventional reinforcement learning, policies are typically optimized to maximize long-term cumulative reward. In contrast, our primary object of interest is the stationary distribution $\mu_\theta$ induced by the policy. By designing reward functions that reflect statistical or geometric objectives, the policy can be optimized so that its long-run behavior approximates a desired distribution on PDs. This connection between policy optimization and stationary laws forms the foundation of the framework developed in Section~\ref{sec:framework}.

%%%%%%%%%%%%%%%%%%%%%%%%%%%%%%%%%%%%%%%%%%%%%%%%

\section{Policy Controlled Dynamics on PD Space}
\label{sec:framework}

In this section, we formulate PDs as states of a
policy controlled stochastic process. The goal is to define valid
transition dynamics directly on diagram space and to use
RL to shape the long run distribution of the induced
Markov chain (MC). 
% We first specify the compact diagram state space and the
% local edit operations, then introduce the policy dependent transition
% kernel and the distributional objective used to align the resulting
% stationary law with observed PDs.

\subsection{PD State Space and Edit Operations}
 \label{sec:state_action}

Recall from Section \ref{sec:background} that PD points are defined on the wedge
$\mathbb W
=
\{(b,d)\in\mathbb R^2:0\leq b<d\}
$. Let $K\subset\mathbb W$ be a compact
set of admissible PD coordinates. Let $\mathcal D_K$ denote
the space of all finite PDs whose off-diagonal points
lie in $K$.
We treat $\Dcal_K$ as our state space. 
Restricting the dynamics to a compact wedge simplifies the
ergodicity analysis developed later in the paper by ensuring bounded
PD coordinates and controlled diagram complexity.
The stochastic dynamics are generated through a collection of local edit
operations on PDs. 
% While the basic operations are generic,
% their proposal distributions may be parameterized in topology aware ways
% using persistence values or empirical geometric structure derived from data.
\begin{enumerate}
    \item \textbf{Add:} insert a new point $(b,d)\in\mathbb{W}_K$ into $D$;
    \item \textbf{Move:} perturb an existing point $(b,d)\in D$ by a small vector $(\Delta b,\Delta d)$ and project back into $\mathbb{W}_K$;
    \item \textbf{Delete:} remove an existing point from $D$.
\end{enumerate}

In practice, the proposal mechanisms for these edit operations can be made topology aware and we explain this in Section \ref{sec:algorithms}. 
% For example, add moves may sample new points from empirical distributions in birth-death coordinates, including two regime proposals that distinguish low persistence noise from high persistence signal. Similarly, delete moves may favor removal of low persistence features, while move proposals may use anisotropic perturbations to better respect the geometry of observed diagram distributions. Such data-informed proposals improve the realism of stochastic diagram evolution by preserving dominant topological structure while suppressing unstable noise.
Perturbed coordinates are projected back into the admissible region through
$
\Pi:\mathbb R^2\longrightarrow K,
$
which preserves the birth--death ordering and ensures that the resulting diagram remains in \(\mathcal D_K\); the theory requires only this validity property, not a particular projection formula. These edits resemble local mechanisms used in spatial point-process samplers and random PD generators~\citep{moller2003statistical,Papamarkou2022randomPD}, but their use is controlled by a learnable policy that adapts the dynamics to the task objective.

%     \subsubsection{Reward Function}

%     \subsubsection{Policy and Transition Kernel}

% \subsection{Distributional Learning Objective}

% \subsection{Policy-Induced Markov Dynamics}

\subsection{Policy Induced Markov Dynamics}
\label{sec:policy_mc}

We now specialize the RL framework from
Section~\ref{sec:rl_background} to PD space.
Given a current diagram $D_t \in \mathcal D_K$, the admissible
local operations in Section~\ref{sec:state_action} induce stochastic dynamics on $\mathcal D_K$, yielding a
policy controlled MC whose long run behavior defines an
implicit distributional model for PD.

\subsubsection{Policy and Transition Kernel}

% Let $\pi_\theta(a\mid D)$, with 
% $a\in\mathcal A, D\in\mathcal D_K$,
% denote a stochastic policy parameterized by $\theta\in\Theta$.
At time $t$, given the current diagram $D_t$, an action
$A_t \sim \pi_\theta(\cdot\mid D_t)$
is sampled, and the environment applies the corresponding edit
operation through a transition kernel
$
P(D_{t+1}\mid D_t,A_t).
$
Thus, the next state depends only on the current diagram and selected
action, so the resulting stochastic process
$D_{t\ge0}$ forms an MC on $\mathcal D_K$.
% Marginalizing over actions yields the policy induced transition kernel 
% \[P_\theta(D,B)=
% \sum_{a\in\mathcal A}
% \pi_\theta(a\mid D),
% P(D,B\mid a),
% \qquad
% B\in\mathcal B(\mathcal D_K),\] 
% which fully characterizes the stochastic evolution of persistence
% diagrams under policy $\pi_\theta$.
% Under suitable regularity conditions, analyzed in Section~4, the
% induced Markov chain admits a unique stationary distribution
% $\mu_\theta \in \mathcal P(\mathcal D_K)$.
For a measurable set $E\in\mathcal B(\mathcal D_K)$, define the transition kernel
$
P(E\mid D,a)
=
\Pr(D_{t+1}\in E\mid D_t=D,A_t=a).
$
% Thus, $P(E\mid D,a)$ is the probability that the next PD belongs to $E$, given the current diagram $D$ and action $a$.
For a fixed policy $\pi_\theta$, the policy induced transition kernel is
\[
P_\theta(E\mid D)
=
\sum_{a\in\mathcal A}
\pi_\theta(a\mid D)\,
P(E\mid D,a),
\qquad
E\in\mathcal B(\mathcal D_K).
\]
% A probability measure
% $\mu_\theta\in\mathcal P(\mathcal D_K)$ is stationary for
% $P_\theta$ if
% \[
% \mu_\theta(E)
% =
% \int_{\mathcal D_K}
% P_\theta(E\mid D)\,d\mu_\theta(D),
% \qquad
% E\in\mathcal B(\mathcal D_K).
% \]
Under the conditions established in Section~\ref{sec:theory},
$P_\theta$ admits a unique stationary distribution, which we denote by
$\mu_\theta$. The subscript $\theta$ indicates that this distribution
depends on the policy parameters. The measure $\mu_\theta$ represents
the long run distribution of PDs generated by the
policy and is the probabilistic object that the learning framework seeks
to control.

% \subsubsection{Reward Function}

% A reward function assigns a numerical score to persistence diagrams in
% order to guide the stochastic dynamics toward desired geometric or
% topological structure.
% Let $d_{\mathcal D}$ denote a stable metric on persistence-diagram
% space, such as the bottleneck distance or a $p$-Wasserstein distance.
% A simple example of a state-based reward is
% \[
% r(D)=
% -\frac{1}{n}
% \sum_{i=1}^{n}
% d_{\mathcal D}(D,D_i),
% \]
% where $\{D_i\}_{i=1}^{n}\subset\mathcal D_K$ is a collection of target
% diagrams. The negative sign ensures that diagrams closer to the target collection
% receive larger rewards, so maximizing cumulative reward is equivalent to
% minimizing the average discrepancy from the observed persistence diagrams.
% More generally, reward functions may incorporate persistence-aware
% regularization, distribution-matching objectives, or structural
% penalties that suppress unstable near-diagonal topology while preserving
% dominant persistent features.

\subsection{Distributional Objective}
\label{sec:learning_objective}

Let $
\hat{\mu}
=
\frac{1}{n}
\sum_{i=1}^{n}
\delta_{D_i}
$
denote the empirical distribution of observed PDs
\(\{D_i\}_{i=1}^{n}\subset\mathcal D_K\).  The objective is to learn
policy such that the stationary distribution
\(\mu_\theta\) induced by the policy controlled MC reproduces
the topological characteristics of \(\hat{\mu}\) that are relevant to
the task.

To accommodate both full distributional matching and task specific objectives, let $(\mathcal Z,d_{\mathcal Z})$ be a complete separable metric space
containing the persistence aware representations, and let $\phi:\mathcal D_K\longrightarrow\mathcal Z$ be a measurable map.
%The space $\mathcal Z$ is the space in which the chosen representation of a PD takes its values. 
%For example, $\mathcal Z$ may be $\mathcal D_K$ itself, a Euclidean space of summary statistics, a function space containing Betti curves, or the binary space ${0,1}$ for a rare event indicator. 
We define
\begin{equation}
\label{eq:Jtheta}
J(\theta)
=
\Dist\!\left(
\phi_{\#}\mu_\theta,
\phi_{\#}\hat{\mu}
\right)
+
\lambda\Omega(\pi_\theta),
\end{equation}
where \(\phi_{\#}\mu\) denotes the pushforward of \(\mu\) through
\(\phi\), \(\Dist\) is a discrepancy between probability measures on
\(\mathcal Z\) and \(\Omega(\pi_\theta)\) is an optional regularization
term with $\lambda\geq0$.  
% The pushforward allows the learning objective to compare the distributions of task relevant topological quantities without requiring full diagram matching.  
Depending on the application, \(\Dist\) may be a Wasserstein or
sliced Wasserstein distance, a kernel maximum mean discrepancy, or
another discrepancy compatible with the chosen representation.
Regularization may be used to control diagram complexity, edit cost,
policy entropy, or other task dependent requirements.  The framework is
therefore not tied to one particular discrepancy or topological
summary.

Because the stationary distribution \(\mu_\theta\) is generally not
available in closed form, it is approximated using a policy-generated
trajectory \(D_1,\ldots,D_T\):
$
\hat{\mu}_{\theta,T}
=
\frac{1}{T}
\sum_{t=1}^{T}
\delta_{D_t}.
$
The corresponding empirical objective is
\begin{equation}
\label{eq:empirical_distributional_objective}
\hat J_T(\theta)
=
\Dist\!\left(
\phi_{\#}\hat{\mu}_{\theta,T},
\phi_{\#}\hat{\mu}
\right)
+
\lambda\Omega(\pi_\theta).
\end{equation}
Choosing \(\phi\) to be the identity compares complete distributions of
PDs.  Choosing a lower-dimensional \(\phi\) instead
matches the induced distribution of a task specific topological
statistic.  Both are instances of the same persistence aware
distributional objective.

The framework also accommodates finite-horizon 
transformations.  Suppose a policy induces a map
\(C_\theta:\mathcal D_K\to\mathcal D_K\), 
%such as a denoising or
%compression procedure, 
and let \(\Delta\) be a
discrepancy between an input diagram and its transformed version.  A
paired objective is
\begin{equation}
\label{eq:paired_objective}
J_{\mathrm{pair}}(\theta)
=
\frac{1}{n}
\sum_{i=1}^{n}
\Delta\!\left(D_i,C_\theta(D_i)\right),
\end{equation}
possibly subject to a task constraint such as a prescribed diagram
cardinality.  
% The subject matched pairs
% \((D_i,C_\theta(D_i))\) define a coupling between the original empirical
% distribution and its pushforward
% \((C_\theta)_{\#}\hat{\mu}\). 
The paired objective
controls a transport based discrepancy between the original and
transformed populations.  
% This formulation connects
% topology preserving transformation tasks with the broader
% distributional framework.
Direct likelihood based learning on PD space is
generally difficult \citep{Nasrin_2024} because PDs are
variable cardinality objects in a non-Euclidean metric space without a
natural global coordinate system.  Equations~\eqref{eq:Jtheta}--%
\eqref{eq:paired_objective} instead permit learning through simulated
trajectories, empirical measures, and geometry aware discrepancies.

% \begin{figure}[h!]
% \centering
% \includegraphics[trim=-2.5cm 9cm 0cm 3cm,
% clip,
% width=7in,height= 2in]{figures/RL_PD_framework}

% \caption{Logical structure of the proposed framework. PDs evolve through topology aware edit operations controlled by an RL policy. The induced stochastic dynamics define a Markov kernel whose stationary law is matched to the empirical diagram distribution through a distributional objective.}
% \label{fig:framework_flow}

% \end{figure}

\subsection{Reward Function}
\label{sec:reward_function}

The reward is the optimization signal derived from the
distributional objective.
When the objective is evaluated at the trajectory or episode level, the
terminal reward is defined by
\begin{equation}
\label{eq:terminal_reward}
\mathcal R(\mathcal{T})
=
-\hat J_T(\theta),
\end{equation}
where \(\mathcal{T}=(D_0,a_0,\ldots,D_T)\) denotes the generated trajectory.
% Thus, maximizing expected reward is equivalent to minimizing the
% empirical distributional objective.
When the objective can be evaluated after individual edits, denser
feedback may be obtained from the stepwise improvement
\begin{equation}
\label{eq:incremental_reward}
r_t
=
J_t-J_{t+1},
\end{equation}
where \(J_t\) denotes the current value of the relevant 
objective.  
% An edit that decreases the objective receives positive
% reward, whereas an edit that increases it receives negative reward.  
In
an undiscounted finite-horizon episode, these rewards telescope:
$
\sum_{t=0}^{T-1}r_t
=
J_0-J_T.
$
Maximizing cumulative reward therefore minimizes the final objective
whenever \(J_0\) is fixed.  
% For discounted learning,
% the unmodified sum no longer telescopes exactly; one may therefore
% retain the terminal reward in Equation~\eqref{eq:terminal_reward} or use
% a discount-compatible potential-based shaping term when exact policy
% invariance is required.

For a discounted return with discount factor $\zeta\in(0,1)$, the
incremental reward $J_t-J_{t+1}$
does not telescope exactly. The simplest alternative is therefore to use
the terminal reward in Equation~\eqref{eq:terminal_reward}. If
intermediate feedback is desired, one may instead introduce a potential
function $\Phi:\mathcal D_K\to\mathbb R$ and add the shaping reward
\[
F_t
=
\zeta\Phi(D_{t+1})-\Phi(D_t).
\]
This construction changes the intermediate feedback while preserving
the optimal policies under the standard conditions for potential-based
reward shaping~\citep{ng1999policy}. For example, choosing
$\Phi(D)=-J(D)$ gives
$
F_t
=
J(D_t)-\zeta J(D_{t+1}).
$

Equations~\eqref{eq:terminal_reward} and
\eqref{eq:incremental_reward} provide two implementations of the same
principle.  Distributional objectives are naturally assigned as
trajectory level rewards, while diagram level fidelity or
transformation objectives may provide stepwise rewards.  The choice
depends on the scale at which the task objective can be estimated, but
the sign convention remains consistent throughout: the objective is
minimized and the corresponding reward is maximized.

\section{Theoretical Properties}
\label{sec:theory}

Because the proposed framework defines an implicit distribution through policy induced Markov dynamics rather than an explicit density, its theoretical validity requires a well-defined long run law and reliable trajectory based approximations. We establish irreducibility, aperiodicity, and drift and minorization conditions, yielding a unique stationary distribution and convergence independent of the initial diagram. We then prove convergence of empirical trajectory measures to this law, thereby justifying the Monte Carlo approximation used in the distributional objective and simulation-based policy optimization.

\subsection{Existence of Stationary Distribution}

To establish existence and uniqueness of the stationary distribution, we
verify four standard properties of the policy induced MC:
$\psi$-irreducibility, aperiodicity, a Foster–Lyapunov drift condition,
and a minorization condition on a small set. The technical statements and
their full proofs are deferred to Appendix~\ref{appendix:props}.

% \begin{theorem}[Existence and uniqueness of stationary distribution]
% \label{thm:stationary}
% Let $\{D_t\}_{t\ge0}$ be the MC on
% $(\mathcal D_K,\mathcal B(\mathcal D_K))$
% induced by policy $\pi_{\theta}$ and transition kernel $P_\theta$.
% Suppose the conditions verified in
% Appendix~\ref{appendix:props} hold: the chain is
% $\psi$-irreducible, aperiodic, and satisfies the stated
% Foster–Lyapunov drift and minorization conditions. Then there exists a
% unique stationary probability measure
% $
% \mu_\theta\in\mathcal P(\mathcal D_K)
% $
% such that
% \[
% \mu_\theta(B)
% =
% \int_{\mathcal D_K} P_\theta(B \mid D)d\mu_\theta(D),
% \qquad
% B\in\mathcal B(\mathcal D_K).
% \]
% Moreover, for any initial diagram $D_0\in\mathcal D_K$,  probability law (distribution) 
% $\mathcal L(D_t)$ of $D_t$ converges to $\mu_\theta$ in total variation:
% \[
% \|\mathcal L(D_t)-\mu_\theta\|_{\mathrm{TV}}
% \longrightarrow 0
% \qquad
% \text{as }t\to\infty,
% \]
% %where $\|\cdot\|_{\mathrm{TV}}$ denotes total variation distance between
% %probability measures.
% \end{theorem}

\begin{theorem}[Existence, uniqueness, and geometric ergodicity]
\label{thm:stationary}
Fix \(\theta\), and let \(\{D_t\}_{t\geq0}\) be the policy-induced
Markov chain on \(\mathcal D_K\) with transition kernel \(P_\theta\).
Suppose that the assumptions of
Propositions~\ref{prop:irreducible_app}--\ref{prop:minorization_app}
hold. Then the chain admits a unique stationary probability measure
\(\mu_\theta\).
Moreover, there exist constants \(M_V<\infty\) and
\(\varrho_V\in(0,1)\) such that
\[
\left\|
P_\theta^t(\,\cdot\mid D)-\mu_\theta
\right\|_{\mathrm{TV}}
\leq
M_VV(D)\varrho_V^t,
\qquad
D\in\mathcal D_K,\quad t\geq0,
\]
where \(P_\theta^t\) denotes the \(t\)-step transition kernel.
Consequently,
\[
\mathcal L(D_t)\longrightarrow\mu_\theta
\]
in total variation for every initial diagram.
\end{theorem}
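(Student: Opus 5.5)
The plan is to obtain the statement from the geometric ergodicity theorem for \(\psi\)-irreducible, aperiodic chains on general state spaces, in the form of Meyn and Tweedie (Theorem 15.0.1 together with Theorem 16.0.1 of \citep{meyn2012markov}). That theorem says the following. Suppose the chain is \(\psi\)-irreducible and aperiodic. Suppose also there is a function \(V:\mathcal D_K\to[1,\infty)\), a petite (in particular, small) set \(C\), and constants \(\lambda\in(0,1)\), \(b<\infty\) with
\[
P_\theta V(D)\le \lambda V(D)+b\,\mathbf 1_C(D),\qquad D\in\mathcal D_K.
\]
Then the chain is positive Harris recurrent with a unique invariant probability measure \(\mu_\theta\). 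It is also \(V\)-uniformly ergodic, meaning \(\|P_\theta^t(\cdot\mid D)-\mu_\theta\|_V\le M_V V(D)\varrho_V^t\). Propositions~\ref{prop:irreducible_app}--\ref{prop:minorization_app} are designed to supply exactly these hypotheses, so the proof is mostly a matter of assembling them correctly.

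The steps, in order, are as follows.

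\emph{Irreducibility.} Take \(\psi\)-irreducibility from Proposition~\ref{prop:irreducible_app}. The intuition is that with positive probability the policy deletes points down to the empty diagram and then adds points with a proposal density that is positive on \(K\).

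\emph{Aperiodicity.} Take aperiodicity from the next proposition. Heuristically, a positive-probability self-loop, for instance a move returning to the same set or a delete followed by an add, gives period one.

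\emph{Drift.} Use the Foster--Lyapunov drift inequality for \(V\), which I expect to be of the form \(V(D)=1+\kappa|D|\) or \(e^{\kappa|D|}\), with \(C=\{D:|D|\le N\}\).

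\emph{Minorization.} Use the minorization \(P_\theta^{m}(D,\cdot)\ge \epsilon\,\nu(\cdot)\) for \(D\in C\). This makes \(C\) small, hence petite.

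With these four inputs the cited theorem applies directly and yields existence and uniqueness of \(\mu_\theta\) together with the bound in the \(V\)-norm. Since \(V\ge1\), the total variation norm is dominated by the \(V\)-norm, \(\|\cdot\|_{\mathrm{TV}}\le\|\cdot\|_V\) (up to the factor-of-two convention). This gives the displayed inequality.

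For the final claim, fix an initial diagram \(D_0\), so that \(\mathcal L(D_t)=P_\theta^t(\cdot\mid D_0)\). Since \(V(D_0)<\infty\), the bound tends to zero as \(t\to\infty\). For a random initial law \(\nu_0\), write \(\mathcal L(D_t)=\int P_\theta^t(\cdot\mid D)\,d\nu_0(D)\) and apply dominated convergence. This is legitimate because total variation distances are bounded by \(2\).

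The main obstacle is making sure the hypotheses fit together with the same \(V\) and \(C\). The drift must hold outside precisely the set \(C\) on which minorization is proved, and \(C\) must be a sublevel set of \(V\), or at least contained in one, for the Meyn--Tweedie theorem to apply. If the propositions state drift with a different exceptional set, I will first try enlarging \(C\) to \(\{V\le R\}\) for large \(R\). I will then check that minorization still holds there, since diagrams in \(\mathcal D_K\) with bounded cardinality can all be driven to the empty diagram in \(N\) steps with probability bounded below. If only a weaker drift \(PV\le V-1+b\mathbf 1_C\) is available, one gets ergodicity but not the geometric rate. In that case I would need to strengthen the drift using the exponential choice of \(V\), exploiting a uniform lower bound on the delete probability at large cardinalities.
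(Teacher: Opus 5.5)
Your proposal is correct and takes essentially the same route as the paper: both feed the four appendix propositions ($\psi$-irreducibility, aperiodicity, geometric drift, minorization on a small set) into the Meyn--Tweedie geometric ergodic theorem and read off existence, uniqueness, and the $V$-weighted total-variation bound. One correction: the paper's Lyapunov function is $V(D)=1+\frac{1}{n}\sum_{i=1}^{n} W_2(D,D_i^*)^2$ rather than a cardinality-based one, its exceptional set is $\mathcal G_V$, and the minorization proposition is stated on that same $\mathcal G_V$, so the compatibility worry in your last paragraph is resolved by construction (and the geometric ergodic theorem only requires the drift set to be petite, not a sublevel set of $V$).
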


\begin{proof}
Propositions~\ref{prop:irreducible_app}--%
\ref{prop:minorization_app} establish
\(\psi\)-irreducibility, aperiodicity, geometric Foster--Lyapunov
drift, and minorization on the corresponding small set. The geometric
drift theorem for general-state-space Markov chains
\citep{meyn2012markov} therefore gives positive Harris recurrence and
geometric ergodicity. Positive Harris recurrence guarantees existence
of an invariant probability measure, and \(\psi\)-irreducibility
guarantees its uniqueness. The stated geometric total-variation bound
and convergence follow.
\end{proof}

% \begin{proof}
% Fix $\theta$. 
% % By the construction in Section~\ref{sec:policy_mc}, the
% % policy $\pi_\theta$  defines a MC
% % $\{D_t\}_{t\ge0}$ on the measurable state space
% % $(\mathcal D_K,\mathcal B(\mathcal D_K))$ with transition kernel
% % $P_{\theta}$.
% Appendix~\ref{appendix:props} verifies that this chain is
% $\psi$-irreducible and aperiodic, and satisfies the required
% Foster–Lyapunov drift and minorization conditions. By the standard
% geometric ergodicity theorem for general-state-space MC
% \citep{meyn2012markov}, these conditions imply positive Harris
% recurrence and geometric ergodicity. Positive Harris recurrence gives the
% existence of an invariant probability measure, and
% $\psi$-irreducibility implies that this invariant probability measure is
% unique. We denote it by $\mu_\theta$.
% The invariance property of $\mu_\theta$ is precisely
% \[
% \mu_\theta(B)
% =
% \int_{\mathcal D_K} P_\theta(B \mid D)d\mu_\theta(D),
% \qquad
% B\in\mathcal B(\mathcal D_K).
% \]
% Geometric ergodicity further implies convergence of the law of the chain
% to the stationary distribution in total variation. Therefore, for every
% initial diagram $D_0\in\mathcal D_K$,
% \[
% \|\mathcal L(D_t)-\mu_\theta\|_{\mathrm{TV}}
% \to0
% \qquad
% \text{as }t\to\infty.
% \]
% This proves existence, uniqueness, invariance, and convergence to the
% stationary distribution.
% \end{proof}
%\subsection{Convergence of the Law}

Theorem~\ref{thm:stationary} gives total variation convergence to the stationary law. Because the learning objectives use geometric discrepancies on PD space, the following corollary states the corresponding weak convergence and, under an additional second moment condition, convergence in the Wasserstein-\(2\) distance \(W_2\). 
% For the Wasserstein convergence results below, we assume that
% \((\mathcal D_K,d_{\mathcal D})\) is a Polish metric space. 

Equipped with the \(p\)-Wasserstein metric, the space of PDs is complete and separable, and hence Polish~\citep{mileyko2011probability}. Since \(K\subset\mathbb W\) is compact and separated from the diagonal, the restricted space \(\mathcal D_K\) is a closed subspace and is therefore also Polish.
% The space of PDs equipped with \(p\)-Wasserstein metric is complete and separable~\citep{mileyko2011probability}.  The restricted state space \(\mathcal D_K\) inherits these properties.  Indeed, because \(K\) is a compact subset of the strict wedge
% \(\mathbb W=\{(b,d)\in\mathbb R^2:b<d\}\), it is separated from the diagonal \(\Delta\): there exists
% \[
% \eta_K
% =
% \inf_{x\in K}d(x,\Delta)
% >0.
% \]
% Diagrams of different cardinalities are therefore separated by a positive distance, so every Cauchy sequence in \(\mathcal D_K\) eventually lies in a fixed-cardinality stratum
% \[
% \mathcal D_K^k
% =
% \{D\in\mathcal D_K:|D|=k\}.
% \]
% Each \(\mathcal D_K^k\) is the permutation quotient of the compact space \(K^k\) and is consequently compact.  It follows that \(\mathcal D_K\) is complete.  Moreover, since it is a countable union of separable fixed-cardinality strata, it is separable.  Hence \((\mathcal D_K,d_{\mathcal D})\) is a Polish metric space.
As in
Section~\ref{sec:learning_objective}, any representation space
\((\mathcal Z,d_{\mathcal Z})\) is also assumed to be Polish.

\begin{corollary}[Convergence of the law]
\label{cor:law_w2}
Under the assumptions of Theorem~\ref{thm:stationary},
$
\mathcal L(D_t)\Rightarrow\mu_\theta$ as $t\to\infty.
$
If, in addition, their second moments satisfy, for some
\(D_0\in\mathcal D_K\),
\[
\int_{\mathcal D_K}
d_{\mathcal D}(D,D_0)^2\,\mathcal L(D_t)(dD)
\longrightarrow
\int_{\mathcal D_K}
d_{\mathcal D}(D,D_0)^2\,\mu_\theta(dD)
<\infty,
\]
then
\[
W_2\!\left(\mathcal L(D_t),\mu_\theta\right)
\longrightarrow 0.
\]
\end{corollary}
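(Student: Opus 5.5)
The proof has two parts: weak convergence, which is a direct consequence of Theorem~\ref{thm:stationary}, and $W_2$ convergence, which follows from the standard characterization of Wasserstein convergence on Polish spaces.

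\emph{Weak convergence.} Fix the initial diagram and write $\nu_t=\mathcal L(D_t)$. Theorem~\ref{thm:stationary} gives $\|\nu_t-\mu_\theta\|_{\mathrm{TV}}\to0$. For any bounded continuous (indeed any bounded measurable) $f:\mathcal D_K\to\mathbb R$,
\[
\Bigl|\int f\,d\nu_t-\int f\,d\mu_\theta\Bigr|\le 2\|f\|_\infty\,\|\nu_t-\mu_\theta\|_{\mathrm{TV}},
\]
where the factor depends on the normalization of the TV norm. The right side tends to $0$, so $\nu_t\Rightarrow\mu_\theta$.

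\emph{$W_2$ convergence.} The space $(\mathcal D_K,d_{\mathcal D})$ is Polish, as noted just before the corollary. I would therefore invoke the standard characterization of convergence in the Wasserstein space $\mathcal P_2$ over a Polish space (Villani, Theorem 6.9). It says $W_2(\nu_t,\mu)\to0$ if and only if two conditions hold: $\nu_t\Rightarrow\mu$, and
\[
\int d_{\mathcal D}(D,D_0)^2\,d\nu_t\to\int d_{\mathcal D}(D,D_0)^2\,d\mu
\]
for some, equivalently any, reference point $D_0$.

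Before applying it, I check that every measure involved lies in $\mathcal P_2(\mathcal D_K)$. For $\mu_\theta$, finiteness of the second moment is part of the hypothesis. For each $\nu_t$, the second moments converge to a finite limit, so they are finite for all sufficiently large $t$, which is all an asymptotic statement needs. Alternatively, finiteness for every $t$ may follow from the drift bound with $V$, if $d_{\mathcal D}^2\lesssim V$.

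Combining the weak convergence from the first part with the assumed convergence of second moments, the characterization gives $W_2(\nu_t,\mu_\theta)\to0$.

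\emph{Main obstacle.} The only real subtlety is the equivalence between weak convergence plus second-moment convergence and $W_2$ convergence. This amounts to uniform integrability of $d_{\mathcal D}(\cdot,D_0)^2$ under $\{\nu_t\}$.

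If one wants to avoid citing Villani, the plan is a truncation argument. Weak convergence on a Polish space gives tightness. Bounded-continuous truncations $\min(d^2,R)$ converge by weak convergence, and the assumed moment convergence then controls the tails $\int_{d^2>R}d^2\,d\nu_t$ uniformly in $t$. From there one builds near-optimal couplings, for example via Skorokhod representation, and applies dominated convergence to the squared distance. I would simply cite the theorem.
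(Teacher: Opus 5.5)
Your proposal is correct and matches the paper's proof: total-variation convergence from Theorem~\ref{thm:stationary} gives weak convergence, and the assumed second-moment convergence, combined with Villani's characterization of $W_2$ convergence on a Polish space, gives $W_2(\mathcal L(D_t),\mu_\theta)\to 0$. Your additional checks are refinements of the same route, not a different argument. These are that each $\mathcal L(D_t)$ lies in $\mathcal P_2$ for large $t$, and the sketched truncation and uniform-integrability alternative.
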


\begin{proof}
Theorem~\ref{thm:stationary} gives
\[
\left\|\mathcal L(D_t)-\mu_\theta\right\|_{\mathrm{TV}}
\longrightarrow 0
\qquad\text{as }t\to\infty.
\]
Total variation convergence implies weak convergence; hence
\[
\mathcal L(D_t)\Rightarrow\mu_\theta
\qquad\text{as }t\to\infty,
\] The assumed convergence
of second moments and the standard characterization of Wasserstein
convergence~\citep{villani2009optimal} then give
\[
W_2\!\left(\mathcal L(D_t),\mu_\theta\right)
\longrightarrow 0
\qquad\text{as }t\to\infty.
\]
\end{proof}

\subsection{Empirical Measure Convergence}

Because the stationary distribution is not available in closed form,
we approximate it using samples from a policy-generated trajectory.

\begin{theorem}[Empirical and pushforward measure convergence]
\label{thm:empirical}
Under the assumptions of Theorem~\ref{thm:stationary}, define
$
\hat{\mu}_{\theta,T}
=
\frac{1}{T}
\sum_{t=1}^{T}\delta_{D_t}$.
Then
\[
\hat{\mu}_{\theta,T}
\Rightarrow
\mu_\theta
\qquad\text{almost surely as }T\to\infty.
\]
If, for some \(D_0\in\mathcal D_K\),
\[
\int_{\mathcal D_K}
d_{\mathcal D}(D,D_0)^2\,\mu_\theta(dD)
<\infty,
\]
then
$
W_2\!\left(\hat{\mu}_{\theta,T},\mu_\theta\right)
\longrightarrow 0$ almost surely.
More generally, let
\(\phi:\mathcal D_K\to\mathcal Z\) be measurable.  If, for some
\(z_0\in\mathcal Z\),
\[
\int_{\mathcal D_K}
d_{\mathcal Z}\!\left(\phi(D),z_0\right)^2
\,\mu_\theta(dD)
<\infty,
\]
then $
W_{2,\mathcal Z}
\left(
\phi_{\#}\hat{\mu}_{\theta,T},
\phi_{\#}\mu_\theta
\right)
\longrightarrow 0$ almost surely.

\end{theorem}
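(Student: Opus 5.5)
The plan has three steps: an ergodic theorem for positive Harris chains combined with a countable convergence-determining class of test functions, then upgrading weak convergence to $W_2$ via a second-moment law of large numbers, and finally the same argument in the representation space.

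\textbf{Step 1: almost sure weak convergence.} Theorem~\ref{thm:stationary} makes the chain positive Harris recurrent with invariant law $\mu_\theta$. The strong law of large numbers for positive Harris chains \citep{meyn2012markov} then gives, for every $f\in L^1(\mu_\theta)$ and every initial diagram,
\[
\frac1T\sum_{t=1}^T f(D_t)\to \int f\,d\mu_\theta \quad\text{a.s.}
\]
This alone is not enough, because the null set depends on $f$. To remove that dependence I will use that $(\mathcal D_K,d_{\mathcal D})$ is Polish. On a separable metric space there is a countable family $\{f_k\}$ of bounded Lipschitz functions that is convergence determining; for example, take functions built from distances to a countable dense set, closed under finite minima with rational truncations. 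Intersecting the countably many probability-one events shows that, almost surely, $\int f_k\,d\hat\mu_{\theta,T}\to\int f_k\,d\mu_\theta$ for all $k$ simultaneously. This yields $\hat\mu_{\theta,T}\Rightarrow\mu_\theta$ a.s.

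\textbf{Step 2: $W_2$ convergence.} I will use the characterization \citep{villani2009optimal} that $W_2$ convergence is equivalent to weak convergence together with convergence of second moments about a fixed point $D_0$. The function $g(D)=d_{\mathcal D}(D,D_0)^2$ lies in $L^1(\mu_\theta)$ by hypothesis. Applying the Harris ergodic theorem to $g$ gives $\frac1T\sum_t g(D_t)\to\int g\,d\mu_\theta$ a.s. Adding this one further null set to those from Step 1 gives weak convergence and second-moment convergence on a common probability-one event. It also follows that $\hat\mu_{\theta,T}$ has finite second moment, since it is a finite atomic measure. Hence $W_2(\hat\mu_{\theta,T},\mu_\theta)\to0$ a.s.

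\textbf{Step 3: pushforwards.} The pushforward $\phi_\#\hat\mu_{\theta,T}=\frac1T\sum_t\delta_{\phi(D_t)}$ is exactly the empirical measure of the process $Z_t=\phi(D_t)$. For any $h$ that is bounded Lipschitz on the Polish space $\mathcal Z$, the composition $h\circ\phi$ is bounded and measurable, so it lies in $L^1(\mu_\theta)$. The Harris ergodic theorem applies to it without any continuity of $\phi$, because the ergodic theorem only requires integrability, not continuity. Repeating Step 1 with a countable convergence-determining class on $\mathcal Z$ gives $\phi_\#\hat\mu_{\theta,T}\Rightarrow\phi_\#\mu_\theta$ a.s. Repeating Step 2 with $g_{\mathcal Z}(D)=d_{\mathcal Z}(\phi(D),z_0)^2$, which is integrable by assumption, gives convergence in $W_{2,\mathcal Z}$.

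\textbf{Main obstacle.} The delicate point is the validity of the ergodic theorem for every initial diagram rather than only for $\mu_\theta$-almost every one. For this I rely on Harris recurrence, and not just $\psi$-irreducibility, which is provided by the drift and minorization assumptions behind Theorem~\ref{thm:stationary}. I would cite the Meyn--Tweedie result, which states that for positive Harris chains the SLLN holds from every starting point. The only remaining technical step is the existence of the countable convergence-determining class, which is standard for separable metric spaces.
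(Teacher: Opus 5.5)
Your proposal is correct and follows essentially the same route as the paper. For almost sure weak convergence, both use a countable convergence-determining class on the Polish space, the Markov chain ergodic theorem, and an intersection of countably many probability-one events. For $W_2$, both apply the ergodic theorem to $d_{\mathcal D}(D,D_0)^2$ and then Villani's characterization. For the pushforward, both repeat the argument on $\mathcal Z$ using $h\circ\phi$ for bounded continuous $h$. Your explicit appeal to positive Harris recurrence, which gives the strong law from every initial diagram rather than only $\mu_\theta$-almost every one, fills in a point the paper's proof leaves implicit.
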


\begin{proof}

Because \(\mathcal D_K\) is Polish, it admits a countable
convergence-determining family of bounded continuous functions
\(\{f_j\}_{j\geq1}\). For every \(j\), the Markov chain ergodic
theorem~\citep{meyn2012markov} gives
\[
\int_{\mathcal D_K}f_j(D)\,
\hat{\mu}_{\theta,T}(dD)
\longrightarrow
\int_{\mathcal D_K}f_j(D)\,
\mu_\theta(dD)
\qquad\text{almost surely}.
\]
By countability, these limits hold simultaneously on a common
probability-one event, and therefore
\[
\hat{\mu}_{\theta,T}\Rightarrow\mu_\theta
\qquad\text{almost surely}.
\]
Under the stated second-moment condition, applying the ergodic theorem
to \(D\mapsto d_{\mathcal D}(D,D_0)^2\) also gives convergence of
second moments. The characterization of Wasserstein
convergence~\citep{villani2009optimal} then yields $W_2\!\left(\hat{\mu}_{\theta,T},\mu_\theta\right)
\longrightarrow0$ almost surely.

For the pushforward result, choose a countable convergence-determining
family of bounded continuous functions \(\{g_j\}_{j\geq1}\) on
\(\mathcal Z\). Since \(\phi\) is measurable, each \(g_j\circ\phi\) is
bounded and measurable. The same ergodic argument gives
\[
\phi_{\#}\hat{\mu}_{\theta,T}
\Rightarrow
\phi_{\#}\mu_\theta
\qquad\text{almost surely}.
\]
Applying the ergodic theorem to
\(D\mapsto d_{\mathcal Z}(\phi(D),z_0)^2\) gives convergence of the
corresponding second moments, and hence
\[
W_{2,\mathcal Z}
\left(
\phi_{\#}\hat{\mu}_{\theta,T},
\phi_{\#}\mu_\theta
\right)
\longrightarrow0
\qquad\text{almost surely}.
\]
\end{proof}

Theorem~\ref{thm:empirical} shows that finite policy-generated
trajectories consistently approximate both the stationary law and its
task-specific pushforward.  The following proposition transfers this
convergence to the empirical learning objective.

\begin{proposition}[Consistency of the empirical distributional objective]
\label{prop:objective_consistency}
Fix \(\theta\), and let
\(\phi:\mathcal D_K\to\mathcal Z\) be the measurable representation
defined in Section~\ref{sec:learning_objective}.  Suppose that, for some
\(z_0\in\mathcal Z\),
\[
\int_{\mathcal D_K}
d_{\mathcal Z}\!\left(\phi(D),z_0\right)^2
\,\mu_\theta(dD)
<\infty.
\]
Define
$
J(\theta)
=
W_{2,\mathcal Z}
\left(
\phi_{\#}\mu_\theta,
\phi_{\#}\hat{\mu}
\right)$
and
$
\hat J_T(\theta)
=
W_{2,\mathcal Z}
\left(
\phi_{\#}\hat{\mu}_{\theta,T},
\phi_{\#}\hat{\mu}
\right)$. 
Then
\[
\hat J_T(\theta)
\longrightarrow
J(\theta)
\qquad\text{almost surely as }T\to\infty.
\]
\end{proposition}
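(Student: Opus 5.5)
The plan is to use the triangle inequality for the Wasserstein-\(2\) metric on \(\mathcal P_2(\mathcal Z)\) and then invoke the pushforward part of Theorem~\ref{thm:empirical}. First, check that all three measures \(\phi_{\#}\mu_\theta\), \(\phi_{\#}\hat\mu_{\theta,T}\) and \(\phi_{\#}\hat\mu\) lie in \(\mathcal P_2(\mathcal Z)\), so that every \(W_{2,\mathcal Z}\) distance appearing is finite and the triangle inequality is available.

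For \(\phi_{\#}\mu_\theta\), this is exactly the assumed moment condition. For the two empirical pushforwards, finiteness is automatic: each is a finite average of Dirac masses at points \(\phi(D_t)\) or \(\phi(D_i)\) in \(\mathcal Z\), and \(d_{\mathcal Z}(\phi(D),z_0)<\infty\) for every \(D\). Since \((\mathcal Z,d_{\mathcal Z})\) is Polish, \(W_{2,\mathcal Z}\) is a genuine metric on \(\mathcal P_2(\mathcal Z)\) \citep{villani2009optimal}.

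Next, apply the reverse triangle inequality to get
\[
\bigl|\hat J_T(\theta)-J(\theta)\bigr|
=
\Bigl|
W_{2,\mathcal Z}\bigl(\phi_{\#}\hat\mu_{\theta,T},\phi_{\#}\hat\mu\bigr)
-
W_{2,\mathcal Z}\bigl(\phi_{\#}\mu_\theta,\phi_{\#}\hat\mu\bigr)
\Bigr|
\leq
W_{2,\mathcal Z}\bigl(\phi_{\#}\hat\mu_{\theta,T},\phi_{\#}\mu_\theta\bigr).
\]
This step is pathwise: it holds for every realization of the trajectory, with \(\hat\mu\) held fixed.

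Finally, since the hypotheses of the proposition include the second-moment condition of Theorem~\ref{thm:empirical}, that theorem gives \(W_{2,\mathcal Z}(\phi_{\#}\hat\mu_{\theta,T},\phi_{\#}\mu_\theta)\to0\) almost surely. On this probability-one event the right-hand side above tends to zero, hence so does \(|\hat J_T(\theta)-J(\theta)|\), which proves the proposition.

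The only real point of care is the first step, guaranteeing finiteness so that subtracting the distances is legitimate. This depends on the moment assumption and on the Polish structure of \(\mathcal Z\) assumed in Section~\ref{sec:learning_objective}. The regularization term \(\lambda\Omega(\pi_\theta)\) does not depend on \(T\) and cancels in the difference, so the conclusion is unchanged if it is included in both objectives.
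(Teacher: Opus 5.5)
Your proposal is correct and follows the paper's proof: the reverse triangle inequality bounds $|\hat J_T(\theta)-J(\theta)|$ by $W_{2,\mathcal Z}(\phi_{\#}\hat\mu_{\theta,T},\phi_{\#}\mu_\theta)$, and the pushforward part of Theorem~\ref{thm:empirical} sends this to zero almost surely. Your preliminary check that all three pushforward measures have finite second moments, so the distances are finite and can legitimately be subtracted, is a small addition that the paper leaves implicit.
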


\begin{proof}
By Theorem~\ref{thm:empirical},
\[
W_{2,\mathcal Z}
\left(
\phi_{\#}\hat{\mu}_{\theta,T},
\phi_{\#}\mu_\theta
\right)
\longrightarrow 0
\qquad\text{almost surely}.
\]
The reverse triangle inequality for \(W_{2,\mathcal Z}\) gives
\[
\left|
\hat J_T(\theta)-J(\theta)
\right|
\leq
W_{2,\mathcal Z}
\left(
\phi_{\#}\hat{\mu}_{\theta,T},
\phi_{\#}\mu_\theta
\right).
\]
The right-hand side converges to zero almost surely, proving the result.
\end{proof}
Together, the stationary-law, ergodic, and objective consistency results justify learning policies from simulated trajectories. We next discuss the policy representations, topology aware proposals, task specific objectives, and simulation based optimization required for implementation.

%%%%%%%%%%%%%%%%%%%%%%%%%%%%%%%%%%%%%%%%%%%%%%%%
\section{Approximation and Algorithmic Considerations}
\label{sec:algorithms}

% The theoretical framework developed in Sections~\ref{sec:framework}
% and~\ref{sec:theory} provides a principled formulation of
% policy controlled stochastic dynamics on PD space.
We now describe the practical realization of the policy controlled stochastic dynamics on PD space  for simulation and learning.
Because the stationary distribution and the distributional objective are
not available in closed form, training must rely on finite simulated
trajectories, empirical discrepancy estimation, and iterative policy
optimization.
Figure~\ref{fig:algorithmic_framework} summarizes the computational
workflow. 
% Starting from observed PDs, we construct an
% empirical target distribution and iteratively simulate policy controlled
% diagram trajectories using topology aware edit operations. The generated
% empirical distribution is compared with the target distribution through
% a geometry aware discrepancy, and the resulting feedback is used to
% update the policy parameters.

\begin{figure}[t]
\centering
\includegraphics[width=\textwidth]{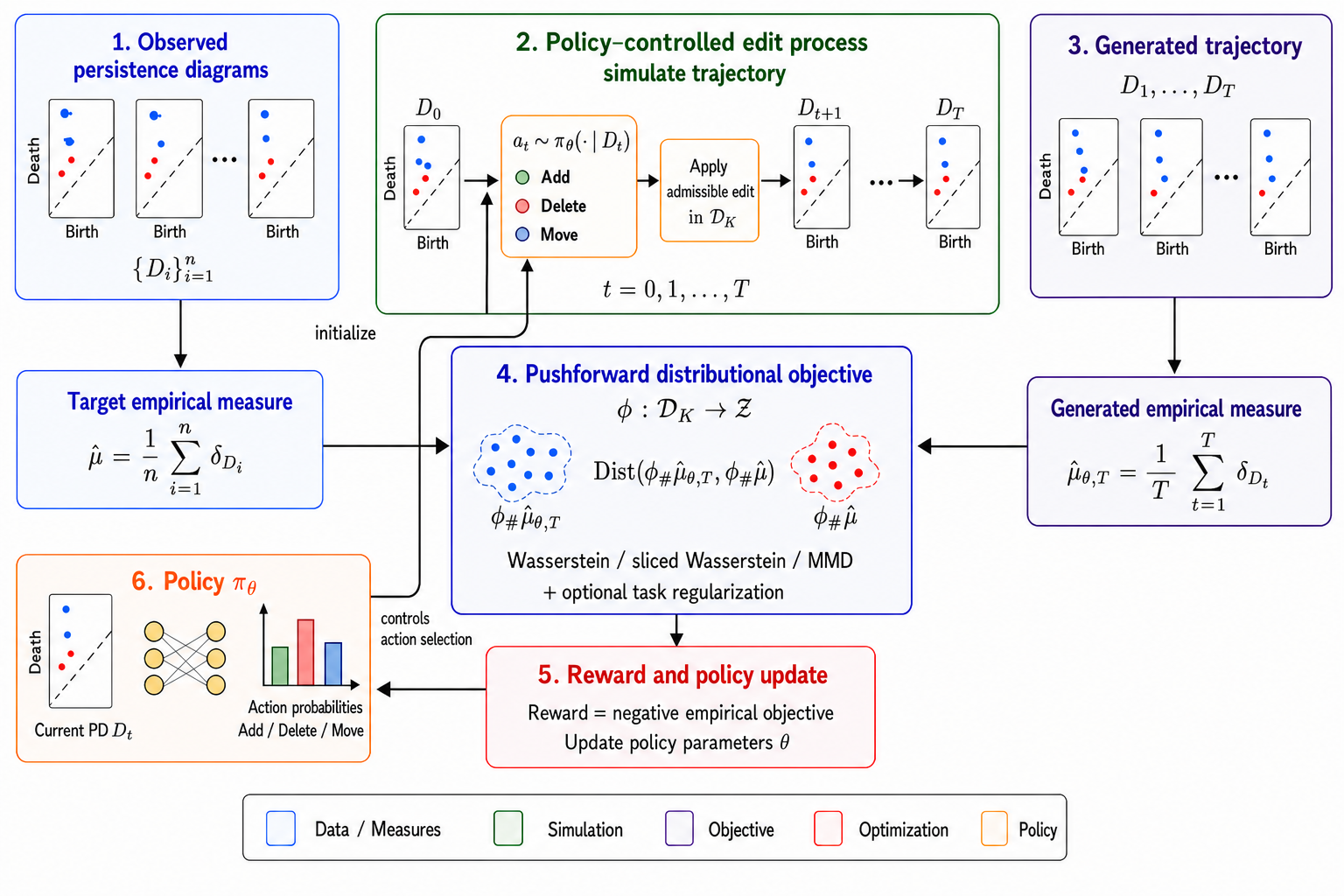}
\caption{Algorithmic workflow for policy controlled PD generation. 
% Observed PDs define an empirical target distribution. A parameterized policy generates trajectories through topology-aware edit operations. After burn-in, the generated empirical distribution is compared with the target distribution using a geometry aware discrepancy, and the resulting loss is used to update the policy parameters.
}
\label{fig:algorithmic_framework}
\end{figure}

\subsection{Policy Conditioning for Variable-Cardinality Diagrams}
\label{sec:policy_representation}

A practical challenge in implementing RL on PD
space is that the number of points in a diagram may vary across states
and along a trajectory. Consequently, a PD cannot be passed directly
to a policy model that expects an input vector of fixed dimension. 
% This
% is a computational issue rather than a restriction of the underlying
% stochastic framework: the state remains the complete diagram
% $D\in\mathcal D_K$, and the transition kernel continues to act directly
% on PD space.
One practical solution is to introduce a measurable policy feature map
\[
\xi:\mathcal D_K\longrightarrow\mathbb R^m
\]
that converts each diagram into a fixed-dimensional vector. Possible
components of $\xi(D)$ include diagram cardinality; summaries of birth,
death, and persistence values; total and maximum persistence;
persistence entropy; counts above prescribed persistence thresholds;
and finite-dimensional summaries of Betti curves or other stable
topological representations relevant to the task. 
% The particular features should be chosen
% according to the actions and structural information relevant to the
% task.

In a feature based implementation, the policy uses $\xi(D)$ to compute
action probabilities and, when necessary, parameters of the associated
proposal distributions. We continue to write the policy as
$\pi_\theta(a\mid D)$, with the understanding that its computational
dependence on $D$ may be mediated through $\xi(D)$. Because $\xi$ is a
deterministic function of the current diagram, this construction still
defines a Markov policy on $\mathcal D_K$. The feature vector affects
how the policy selects actions, but it does not replace the PD as the
state of the stochastic process.
A fixed-dimensional feature map is not the only possible
implementation. Permutation-invariant set networks, attention based
models, and other variable cardinality architectures may be used to
condition the policy directly on the points of a diagram. 
% The proposed
% framework therefore requires only a well-defined measurable policy on
% PD space, not a particular vector representation or policy
% architecture.
The policy-feature map $\xi$ is distinct from the map $\phi$ used in
the distributional objective. The map $\xi$ provides computational
inputs for action selection, whereas $\phi$ specifies the
persistence aware quantities whose pushforward distributions are
compared during learning.
\subsection{Topology Aware Proposal Mechanisms}
\label{sec:topology_aware_proposals}

The edit operations introduced in Section~\ref{sec:framework} may be
implemented using proposal mechanisms that explicitly exploit the
geometry of PD space. Rather than relying on uniform
or isotropic proposals, we use topology aware mechanisms that bias the
stochastic dynamics toward realistic diagram configurations while
preserving dominant structure. 
The proposal mechanisms  are summarized in Table~\ref{tab:proposal_mechanisms}.

% \paragraph{Topology-aware add proposals.}
% New points are proposed in birth–persistence coordinates
% $
% z=(b,p)$, and $p>0$,
% using an empirical proposal distribution estimated from observed
% training diagrams. To separate topological noise from meaningful
% persistent structure, we use a two-regime mixture model:
% \[
% q_{\mathrm{add}}(z)
% =
% \pi_{\mathrm{noise}} q_{\mathrm{noise}}(z)
% +
% \pi_{\mathrm{signal}} q_{\mathrm{signal}}(z),
% \]
% where $\pi_{\mathrm{noise}}+\pi_{\mathrm{signal}}=1.$
% Here, $q_{\mathrm{noise}}$ models low-persistence features near the
% diagonal, while $q_{\mathrm{signal}}$ models higher-persistence
% features corresponding to stable topological structure. 
% \paragraph{Persistence-aware deletion.}
% Uniform deletion can remove highly informative topological features.
% Instead, each point is assigned a deletion probability
% \[
% w_i \propto \exp(-\beta p_i),
% \qquad \beta>0.
% \]
% Thus, low-persistence features near the diagonal are more likely to be
% removed, while highly persistent features are preserved.

% \paragraph{Geometry-aware move proposals.}
% To perturb existing points while respecting diagram geometry, move
% operations are performed in birth–persistence coordinates:
% \[
% (b_i,p_i)
% \mapsto
% (b_i+\varepsilon_b,; p_i+\varepsilon_p),
% \]
% where
% \[
% (\varepsilon_b,\varepsilon_p)
% \sim
% \mathcal N(0,\Sigma).
% \]
% The covariance matrix (\Sigma) may be anisotropic, allowing different
% scales of perturbation in birth and persistence directions. This gives
% finer control over perturbations near the diagonal and in
% high-persistence regions.

\begin{table}[h!]
\centering
\caption{Topology-aware proposal mechanisms for policy-controlled edit operations.}
\label{tab:proposal_mechanisms}
\renewcommand{\arraystretch}{1.35}
\begin{tabularx}{\textwidth}{
|>{\centering\arraybackslash}p{2.1cm}
|>{\centering\arraybackslash}p{4.2cm}
|X|}
\hline
\textbf{Edit} & \textbf{Proposal Rule} & \textbf{Purpose} \\
\hline

{Add}
&
$
q_{\mathrm{add}}(z)
=
\pi_{\mathrm{noise}} q_{\mathrm{noise}}(z)
+
\pi_{\mathrm{signal}} q_{\mathrm{signal}}(z)
$
&
New PD points are sampled from a two-regime mixture separating near diagonal noise from high persistence signal. 
\\
\hline

{Delete}
&
$
w_i \propto \exp(-\beta p_i)
$
&
Deletion probabilities are inversely weighted by persistence $p_i = d_i-b_i$, making low persistence features more likely to be removed while preserving stable topological structure.
\\
\hline

{Move}
&
$
(b_i,p_i)\mapsto
(b_i+\varepsilon_b,\,
p_i+\varepsilon_p)
$
&
Existing points are perturbed using anisotropic Gaussian noise, allowing different scales of movement in birth and death directions.
\\
\hline
\end{tabularx}
\end{table}

% Together, these topology-aware proposals improve simulation realism by
% encouraging exploration in regions supported by observed data while
% suppressing unstable topological noise.

\subsection{Task-Specific Objective Specification}
\label{sec:task_objective}

Section~\ref{sec:learning_objective} defines distributional learning
through the representation $\phi$ and the discrepancy $\Dist$.  The map $\phi$
determines which properties of a PD are compared, while $\Dist$
determines how differences between the resulting probability measures
are quantified. Taking $\phi$ to be the identity map compares complete
PD distributions. Alternatively, $\phi$ may return selected
persistence aware quantities, such as a vector of persistence
statistics, a discretized Betti curve, a diagram-complexity measure, or
a rare event indicator. Several characteristics can be considered
simultaneously by allowing $\phi(D)$ to contain multiple components.

% For a trajectory $D_1,\ldots,D_T$, the empirical objective is
% \[
% \hat J_T(\theta)
% =
% \Dist\!\left(
% \phi_{\#}\hat{\mu}_{\theta,T},
% \phi_{\#}\hat{\mu}
% \right),
% \qquad
% \hat{\mu}_{\theta,T}
% =
% \frac{1}{T}
% \sum_{t=1}^{T}\delta_{D_t}.
% \]
The task specific objective design does not require introducing
separate global, structural, and application dependent losses. These
requirements can instead be expressed through the choice of $\phi$,
the metric structure of $\mathcal Z$, and the discrepancy $\Dist$.
When necessary, the optional regularization term
$\Omega(\pi_\theta)$ introduced in
Section~\ref{sec:learning_objective} may be used to account for
requirements that are not naturally expressed through distributional
matching.

% The map $\phi$ used here has a different role from the policy-feature
% map $\xi$ introduced in Section~\ref{sec:policy_representation}.
% The map $\xi$ provides computational inputs for action selection,
% whereas $\phi$ determines the quantities compared in the
% distributional objective. The two maps need not have the same
% codomain, components, or level of resolution.

\subsection{Simulation Based Policy Optimization}
\label{sec:policy_optimization}

The stationary distribution $\mu_\theta$ is generally unavailable in
closed form, and edit operations can produce discrete changes in both
the locations and the number of PD points. Consequently, the objective
must be approximated using trajectories generated by the current
policy. Although the states along a trajectory are dependent rather
than independent samples, the ergodic results developed in
Section~\ref{sec:theory} justify the use of trajectory averages for
sufficiently long runs.
For fixed policy parameters $\theta$, the policy controlled dynamics
generate $D_1,\ldots,D_T$ and hence the empirical measure
$\hat{\mu}_{\theta,T}$. The discrepancy between
$\phi_{\#}\hat{\mu}_{\theta,T}$ and $\phi_{\#}\hat{\mu}$ provides
trajectory-level feedback. 
% A terminal reward may be defined as the
% negative empirical objective, so maximizing expected reward corresponds
% to reducing the distributional discrepancy. When stepwise feedback is
% preferred, the same objective may be decomposed or used to construct a
% consistent reward-shaping scheme, as described in the reward section.

The appropriate optimization method depends on the policy
parameterization and the action mechanism. Likelihood ratio
policy gradient estimators and actor--critic methods can be used when
the stochastic policy is differentiable with respect to $\theta$.
Value-based or gradient-free methods may be preferable when actions are
discrete, proposal mechanisms are not differentiable, or evaluating
$\Dist$ is computationally expensive. The framework therefore does not
require one particular RL algorithm.

Several practical issues must be handled during optimization. The set
of admissible actions may depend on the current diagram; for example,
deletion is unavailable for the empty diagram, and all proposed points
or movements must remain in $K$. Invalid actions should therefore be
masked or assigned well-defined transition behavior. Exploration must
also be sufficient to visit different regions of PD space without
placing excessive probability on uninformative configurations.
Exploration schedules, policy-entropy regularization, multiple
trajectories, and batched discrepancy estimates may be used to improve
stability and reduce simulation variability.

Algorithm~\ref{alg:rl_pd} summarizes the general training procedure.

\begin{algorithm}[H]
\caption{Policy controlled learning on PD space}
\label{alg:rl_pd}
\begin{algorithmic}[1]

\Require Observed PDs $\{D_i\}_{i=1}^{n}$, initial policy
$\pi_\theta$, trajectory length $T$, measurable representation
$\phi$, and discrepancy $\Dist$
\State Form the empirical target measure
$\displaystyle
\hat{\mu}=\frac{1}{n}\sum_{i=1}^{n}\delta_{D_i}$.
\While{the stopping criterion is not satisfied}
\State Initialize $D_0\in\mathcal D_K$.
\For{$t=0,1,\ldots,T-1$}
\State Evaluate the policy at $D_t$, optionally using the policy
features $\xi(D_t)$.
\State Sample an admissible action
$A_t\sim\pi_\theta(\,\cdot\mid D_t)$.
\State Generate the next diagram according to
$D_{t+1}\sim P(\,\cdot\mid D_t,A_t)$.
\EndFor
\State Form the trajectory empirical measure
$\displaystyle
\hat{\mu}_{\theta,T}
=
\frac{1}{T}\sum_{t=1}^{T}\delta_{D_t}$.
\State Evaluate
$\displaystyle
\hat J_T(\theta)
=
\Dist\!\left(
\phi_{\#}\hat{\mu}_{\theta,T},
\phi_{\#}\hat{\mu}
\right)$.
\State Construct reward feedback from
$-\hat J_T(\theta)$ and update $\theta$ using the selected
RL method.
\EndWhile

\Return The optimized policy $\pi_\theta$.
\end{algorithmic}
\end{algorithm}

% If optional policy regularization is required, the empirical objective
% in Algorithm~\ref{alg:rl_pd} is replaced by
% \[
% \hat J_T(\theta)
% =
% \Dist\!\left(
% \phi_{\#}\hat{\mu}_{\theta,T},
% \phi_{\#}\hat{\mu}
% \right)
% +
% \lambda\Omega(\pi_\theta).
% \]

% \subsection{Connection to the Experiments}

% The same algorithmic structure supports the experimental studies
% reported in Section~\ref{sec:experiments}. In the synthetic setting, the
% goal is to test whether the policy-induced dynamics can recover
% distributional features such as rare high-persistence events. In the
% real-data settings, the same framework is used to learn simplified
% topological representations that preserve dominant persistent structure
% while reducing unstable near-diagonal features. Thus, the experiments
% differ primarily in the choice of input diagrams, reward or
% regularization terms, and evaluation metrics, while sharing the same
% policy-controlled diagram-space dynamics.

% \section{Algorithmic Realization}

% 5.1 Policy Representation and State Features

% 5.2 Topology-Aware Proposal Mechanisms

% 5.3 Trajectory Simulation and Empirical Measure Estimation

% 5.4 Simulation-Based Policy Optimization

%%%%%%%%%%%%%%%%%%%%%%%%%%%%%%%%%%%%%%%%%%%%%%%%
\section{Experiments}
\label{sec:experiments}

We evaluate the proposed framework on one synthetic and one real-data
case studies designed to examine how learned distributions on
PD space can support different scientific objectives.
The
experiments demonstrate how distributional learning enables inference on
rare topological events, construction of topology preserving
representations, and characterization of heterogeneous topological
structure across complex datasets.
% The synthetic experiment provides a controlled setting for evaluating
% whether the learned policy can recover rare but topologically
% significant events that are difficult to capture under conventional
% stochastic generators. The real-data experiments assess the ability of
% the framework to learn parsimonious yet informative topological
% representations from complex high-dimensional data while preserving
% dominant persistent structure. 
Across all experiments, the underlying
diagram space dynamics remain unchanged; differences arise primarily
through the choice of state summaries, task specific objective terms,
and evaluation criteria.

\subsection{Synthetic Rare Event Recovery}
\label{sec:exp_rare}

We first consider a controlled synthetic experiment designed to test
whether policy ontrolled PD dynamics can recover a low probability
topological event. The purpose of this experiment is to isolate the
task specific distributional setting introduced in
Section~\ref{sec:learning_objective}: the representation $\phi$ is
chosen to retain a scientifically relevant rare event indicator, and
the policy is trained to match its pushforward distribution.

\subsubsection{Synthetic data and rare event definition.}
We simulated \(n=400\) planar point clouds from two regimes. In the common regime, three Gaussian clusters are centered near the vertices of a triangle. In the rare regime, noisy bridge points are added along the triangle edges, increasing the likelihood of a persistent one-dimensional cycle while preserving the overall clustered shape (Figure~\ref{fig:rare_synthetic_setup}, upper row). Each point cloud was independently assigned to the rare regime with probability \(0.10\); the simulated sample contained \(49\) rare-regime clouds, or \(49/400=0.1225\). We then computed the corresponding PDs and applied the topological rare-event criterion defined below (Figure~\ref{fig:rare_synthetic_setup}, lower row). This criterion depends on the resulting PD rather than the generating regime label. It classified \(48\) diagrams as rare, giving an empirical rare-event probability of \(48/400=0.120\).

For each point cloud, we computed its finite $H_1$ PD using a
Vietoris--Rips filtration. 
% For a diagram $D$, define its maximum
% persistence by
% \[
% p_{\max}(D)
% =
% \max_{(b,d)\in D}(d-b),
% \]
% with $p_{\max}(\varnothing)=0$. 
We selected
\[
\tau
=
\max\left\{
0.15,\,
Q_{0.88}\bigl(
\{p_{\max}(D_i)\}_{i=1}^{n}
\bigr)
\right\},
\]
where $Q_{0.88}$ denotes the empirical $0.88$ quantile and $p_{\max}(D)=
\max_{(b,d)\in D}(d-b)$ is the maximum persistence point in a persistence diagram $D$. This gave
$\tau=0.2402$. The rare-event indicator is
\[
R_\tau(D)
=
\mathbf 1\{p_{\max}(D)>\tau\}.
\]
The empirical target probability was therefore
\[
\omega_{\mathrm{data}}
=
\int_{\mathcal D_K}R_\tau(D)\,\hat{\mu}(dD)
=
\frac{1}{n}
\sum_{i=1}^{n}R_\tau(D_i)
=
0.120.
\]
In the notation of Section~\ref{sec:learning_objective}, the primary
task specific representation is
\[
\phi_{\mathrm{syn}}(D)=R_\tau(D),
\qquad
\phi_{\mathrm{syn}}:\mathcal D_K\longrightarrow\{0,1\}.
\]
Consequently, $(\phi_{\mathrm{syn}})_{\#}\hat{\mu}$ is a Bernoulli
distribution with success probability $\omega_{\mathrm{data}}$. Matching
the success probability therefore matches the complete pushforward
distribution on $\{0,1\}$.

\begin{figure}[t]
\centering
\includegraphics[width=0.65\linewidth]
{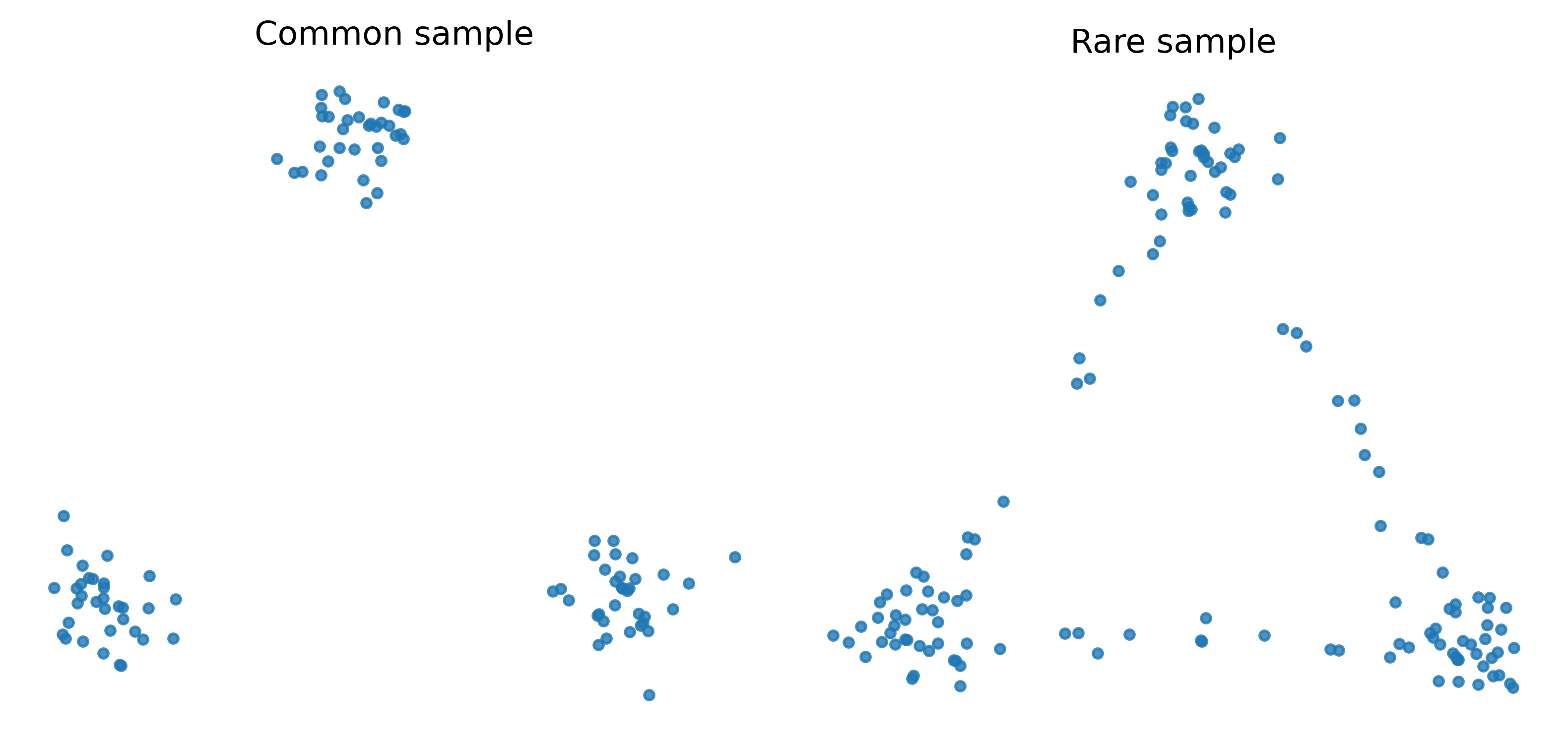}

\vspace{0.4em}

\includegraphics[width=0.75\linewidth]
{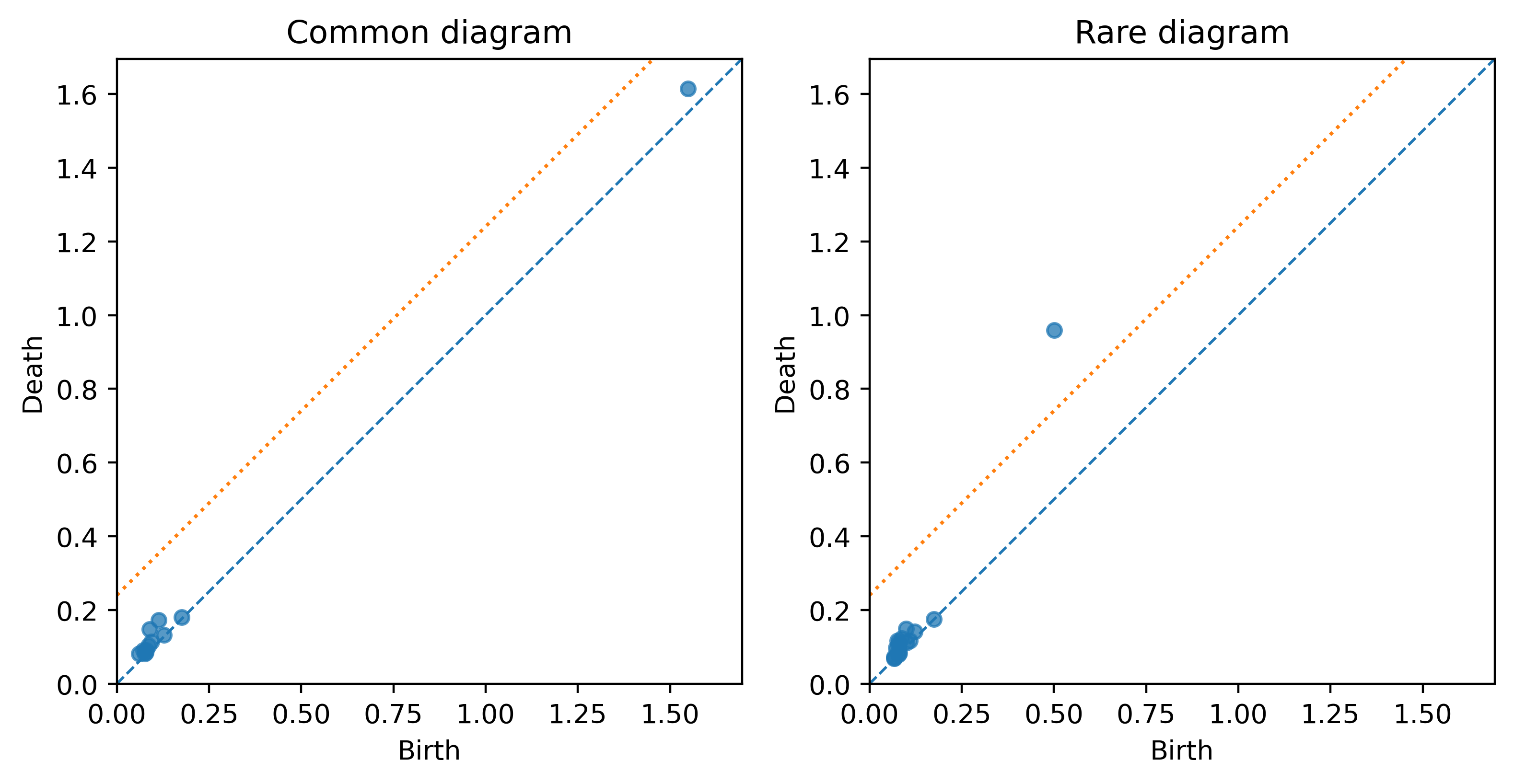}

\caption{Synthetic construction for rare-event recovery.
The upper row shows representative point clouds from the common and
rare regimes. The lower row shows their corresponding $H_1$ PDs. In
the PD panels, the dashed line is the diagonal and the dotted line is
$d=b+\tau$, above which a point has persistence greater than $\tau$.}
\label{fig:rare_synthetic_setup}
\end{figure}

\subsubsection{Policy controlled edit process.}
The state of the controlled process is the current PD. For policy
conditioning, each diagram is represented by a ten-dimensional feature
vector $\xi(D)$ containing scaled cardinality; the mean, standard
deviation, and maximum of persistence; the fraction and number of
points with persistence greater than $\tau$; the mean and standard
deviation of birth; the mean death; and the indicator $R_\tau(D)$.
As discussed in Section~\ref{sec:policy_representation}, these features
are computational inputs to the policy and do not replace $D$ as the
state.
The policy selects among six edit actions:
$
\mathcal A
=
\{
\texttt{add\_low},
\texttt{add\_high},
\texttt{delete\_low},
\texttt{delete\_random},
\texttt{move\_small},
\texttt{move\_large}
\}.
$
The two add actions sample, with small jitter, from empirical pools of
PD points whose persistence \(d-b\) is at most or greater than \(\tau\), respectively. The deletion actions
either favor low persistence points or select a point uniformly. The
move actions perturb a uniformly selected point at one of two scales in
birth--death coordinates. All edited points are returned to the
admissible region $K$. A two-hidden-layer policy network maps
$\xi(D)$ to a categorical distribution over these six actions.

\subsubsection{Empirical objective and reward.}

% The factor $5$ introduces an asymmetric penalty for overshooting the
% target probability. It is an implementation-specific stabilization
% choice rather than a requirement of the general framework.

For a retained training trajectory $D_1,\ldots,D_T$, let
\[
\hat \omega_{\theta,T}
=
\int_{\mathcal D_K}
R_\tau(D)\,\hat{\mu}_{\theta,T}(dD)
=
\frac{1}{T}
\sum_{t=1}^{T}R_\tau(D_t).
\]
The squared difference
$(\hat \omega_{\theta,T}-\omega_{\mathrm{data}})^2$ compares the empirical
pushforward distributions through their Bernoulli parameters. In fact,
for the linear kernel on $\{0,1\}$, this difference is squared MMD\@.
We therefore define the primary task specific discrepancy by
\[
\Dist_{\mathrm{rare}}\!\left(
(\phi_{\mathrm{syn}})_{\#}\hat{\mu}_{\theta,T},
(\phi_{\mathrm{syn}})_{\#}\hat{\mu}
\right)
=
(\hat \omega_{\theta,T}-\omega_{\mathrm{data}})^2.
\]
Let $\delta_T
=
\hat{\omega}_{\theta,T}
-
\omega_{\mathrm{data}}.$
During training, we used an asymmetric version of this discrepancy to
discourage overestimation of the rare-event frequency:
\[
\ell_{\mathrm{rare}}(\delta_T)
=
\left(
1+4\,\mathbf{1}_{\{\delta_T>0\}}
\right)\delta_T^2.
\]
Thus, positive deviations receive five times the squared-error penalty
of negative deviations.
The implementation also included regularization for mean maximum
persistence, mean cardinality, and diagram complexity. Define
$
\hat m_{\max,T}
=
\frac{1}{T}\sum_{t=1}^{T}p_{\max}(D_t),$ 
$m_{\max,\mathrm{data}}
=
\frac{1}{n}\sum_{i=1}^{n}p_{\max}(D_i),
$
and
$
\hat m_{N,T}
=
\frac{1}{T}\sum_{t=1}^{T}|D_t|$, 
$m_{N,\mathrm{data}}
=
\frac{1}{n}\sum_{i=1}^{n}|D_i|.
$

Let
$\lambda_{\mathrm{rare}},
\lambda_{\max},
\lambda_{\mathrm{card}},
\lambda_{\mathrm{complexity}}\geq0$
be user-specified weights controlling the overshoot penalty and the
relative contributions. Define the
empirical regularization by
\[
\hat\Omega_T^{\mathrm{syn}}(\theta)
={}
\lambda_{\max}\, \bigl(\hat m_{\max,T}-m_{\max,\mathrm{data}}\bigr)^2 + \lambda_{\mathrm{card}}\,
\bigl(\hat m_{N,T}-m_{N,\mathrm{data}}\bigr)^2
+ \lambda_{\mathrm{complexity}}\,\hat m_{N,T}.
\]
The complete empirical objective is therefore
\[
\widehat J_T^{\mathrm{syn}}(\theta)
=
\lambda_{\mathrm{rare}}
\ell_{\mathrm{rare}}(\delta_T)
+
\widehat\Omega_T^{\mathrm{syn}}(\theta).
\]
The corresponding terminal reward was
$
r_{\mathrm{terminal}}
=
-\hat J_T^{\mathrm{syn}}(\theta).
$
Thus, the dominant component of the reward directly targets the
pushforward distribution of the rare event indicator, while the
optional regularization provides weaker control of persistence
magnitude and diagram size.

\subsubsection{Training and reference generator.}
The policy was trained for $350$ epochs with $10$ simulated episodes
per epoch. Each episode contained $55$ edits. The first $15$ edited
states were treated as burn-in and excluded, and the remaining $T=40$
states were reindexed as $D_1,\ldots,D_T$ when constructing
$\hat{\mu}_{\theta,T}$. 
Policy parameters were updated with a
REINFORCE estimator using a moving reward baseline.
% , an entropy weight of
% $0.02$, and Adam with learning rate $5\times10^{-4}$. 
Every $25$
epochs, the current policy was evaluated using $1000$ generated
diagrams; the checkpoint with the smallest rare event probability error
was retained.

As a non-learning comparison, we implemented a self contained generator inspired by the generator for random PDs~\citep{Papamarkou2022randomPD}. We call it random persistence diagram generator (RPDG) style generator and it starts from a randomly selected observed diagram and chooses add, delete, and move operations with fixed probabilities. The original generic proposal settings produced diagrams that were poorly aligned with the coordinate scale, persistence distribution, and cardinality of the observed diagrams. We therefore adapted the proposal distributions to the present data while retaining the generator’s fixed, non-learning add--delete--move mechanism.
% To calibrate the add proposal to the observed data, pooled empirical PD points are divided according to whether their persistence \(d-b\) is at most or greater than \(\tau\). The high persistence pool is selected with its observed frequency among all pooled points, after which a point is sampled uniformly from the selected pool and perturbed by small Gaussian jitter. Delete operations remove a uniformly selected point, whereas move operations perturb the birth and persistence coordinates of a uniformly selected point; all proposed diagrams are projected back to the admissible wedge. For final evaluation, the generator produced \(500\) diagrams following \(60\) burn-in edits, with one diagram retained after every \(30\) subsequent edits.
\begin{table}[t]
\centering
\caption{Recovery of the target rare event probability in the synthetic
experiment.}
\label{tab:rare_event_results}
%\small
\setlength{\tabcolsep}{4pt}
\begin{tabular}{lcc}
\toprule
Method
& \shortstack{Rare event\\probability}
& \shortstack{Absolute\\error}\\
\midrule
Target
& $0.120$
& $0.000$\\
RPDG style  reference
& $0.012$
& $0.108$\\
RL policy
& $0.084$
& $0.036$\\
\bottomrule
\end{tabular}
\end{table}

\begin{figure}[t]
\centering
\includegraphics[width=0.75\linewidth]
{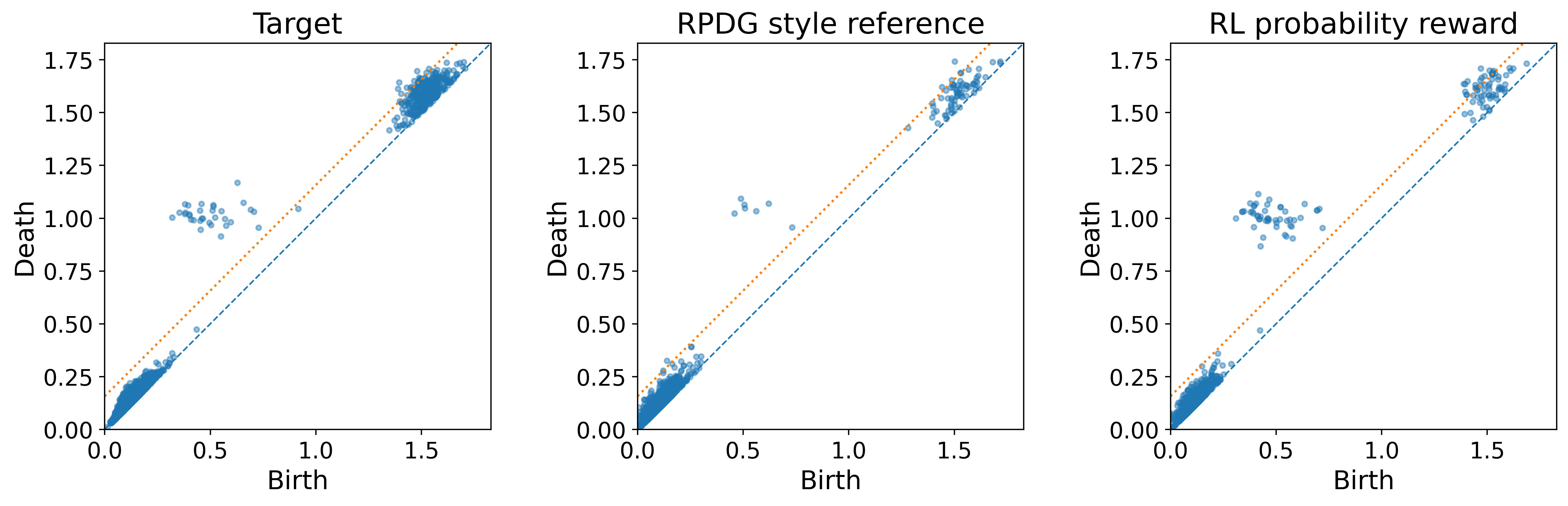}

\vspace{0.5em}

\includegraphics[width=0.75\linewidth]
{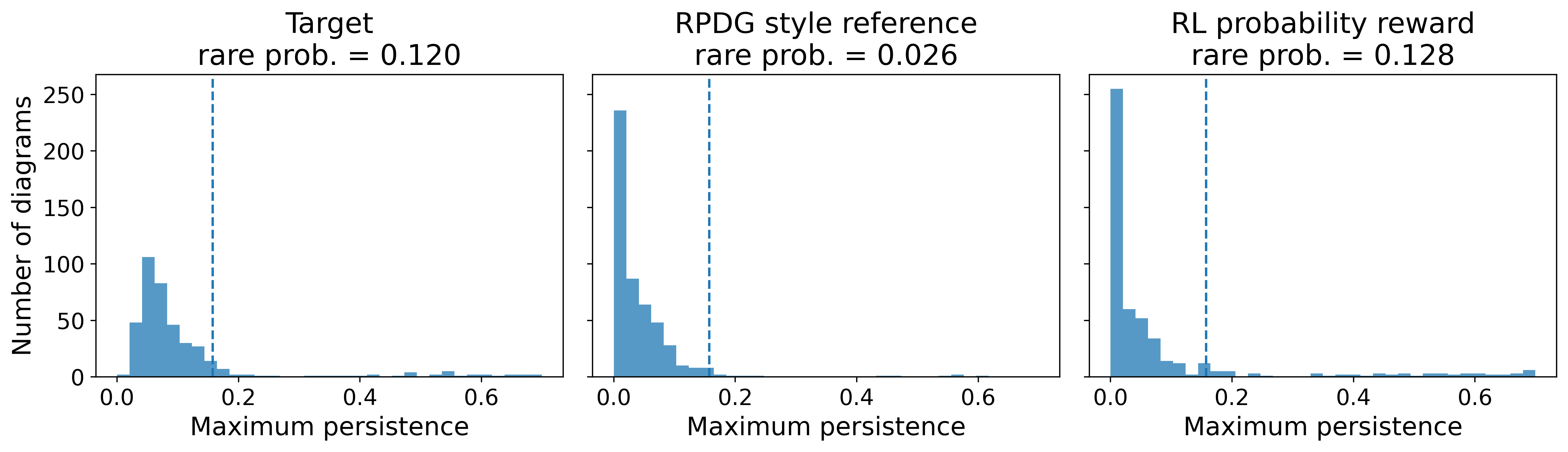}

\vspace{0.5em}

\includegraphics[width=0.5\linewidth]
{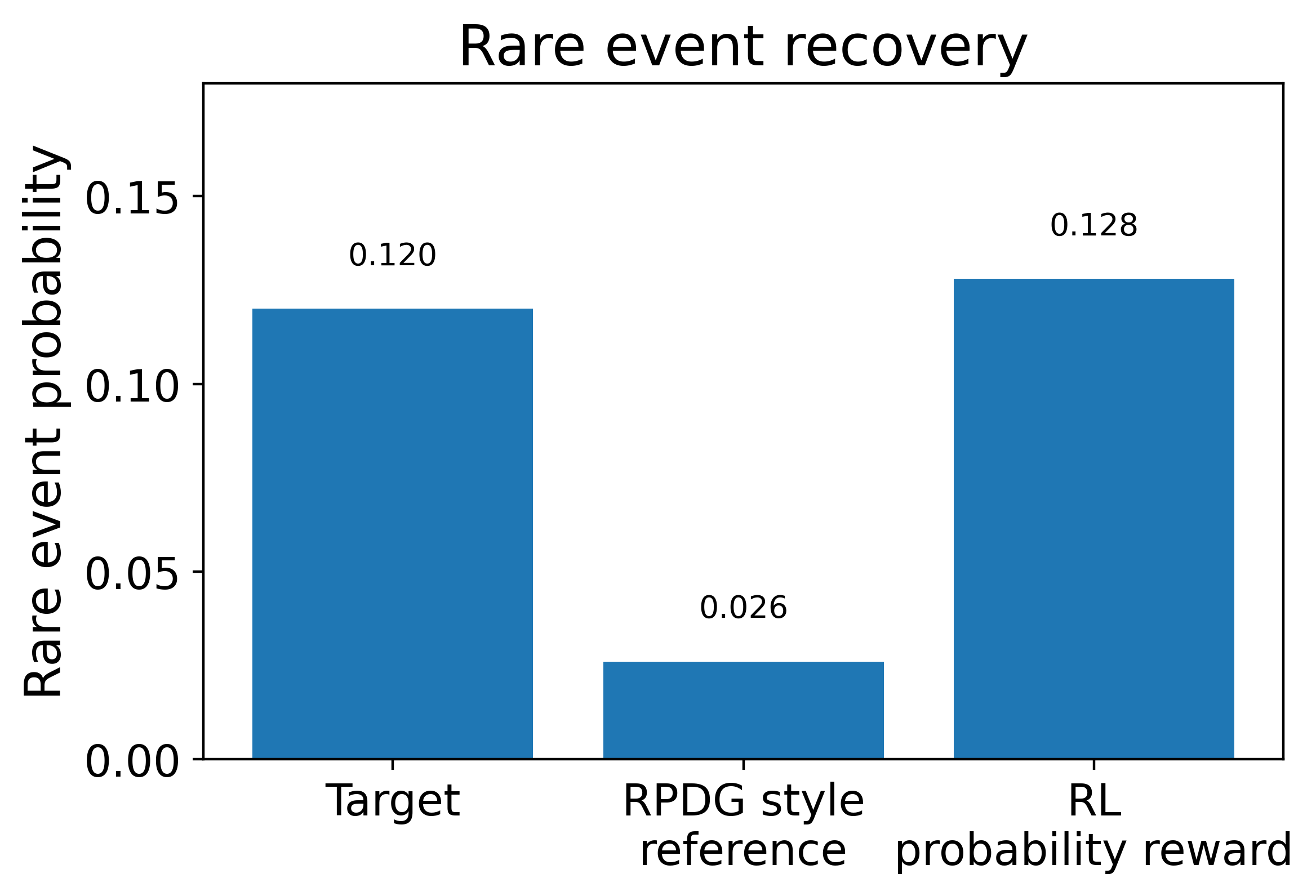}

\caption{Synthetic rare-event recovery results. The upper row compares
the pooled PD points, with the dotted line indicating
$d=b+\tau$. The middle row compares the distributions of maximum
persistence. The lower panel reports the target and generated
rare-event probabilities.}
\label{fig:rare_recovery_results}
\end{figure}

\subsubsection{Results and interpretation.}
The RPDG style reference produced a rare event probability of $0.012$,
giving an absolute error of $0.108$. The learned policy produced a
probability of $0.084$ and reduced the absolute error to $0.036$.
% Thus, the RL error was one third of the compared method's one, corresponding
% to a $66.7\%$ reduction.
Equivalently, the learned policy recovered
$70\%$ of the target rare event frequency, whereas the RPDG style 
reference recovered only $10\%$. Figure~\ref{fig:rare_recovery_results} shows how this improvement
appears in PD space. The pooled diagrams in the upper row locate the
generated points relative to the threshold line, while the middle row
aggregates this information at the diagram level through maximum
persistence. The RPDG style  reference concentrates most generated
diagrams below the threshold. In contrast, the learned policy increases
the upper tail mass, producing substantially more diagrams that satisfy
$R_\tau(D)=1$. The lower row reports the resulting improvement in the
target quantity directly.

This improvement illustrates how the pushforward objective and sequential control work together. By choosing
\(\phi_{\mathrm{syn}}(D)=R_\tau(D)\), the distributional feedback directly measures the rare topological event of interest, while the RL policy adapts its sequence of edit probabilities to reduce disagreement with the target distribution. Unlike fixed random edit dynamics, the learned process therefore moves substantially closer to the target rare event frequency. Other distributional characteristics can be incorporated through a richer task specific representation and discrepancy.

\subsection{Topology Preserving Compression of ABIDE PDs}
\label{sec:exp_abide}

We evaluate the proposed framework on resting-state functional magnetic resonance imaging (rs-fMRI) from the Autism Brain Imaging Data Exchange (ABIDE)~\citep{dimartino2014abide}. In neuroimaging studies, PDs may contain many topological features for each participant and may be compared repeatedly across large, heterogeneous cohorts. This experiment examines whether a diagnosis independent policy can learn a sequence of point deletions that reduces the cost of repeated PD comparisons while preserving individual diagram topology and the population geometry of the complete cohort. Because the analyzed cohort contains participants with autism spectrum disorder (ASD) and typically developing controls, we additionally examine whether population geometry is preserved separately within each diagnostic group.
% The evaluation has four components: comparison with a minimum persistence deletion heuristic, visualization of the compression trajectory, analysis of the learned deletion behavior, and external validation of neighborhood preservation. 
Figure~\ref{fig:abide_pipeline} summarizes the complete pipeline.

\begin{figure*}[t]
\centering
\includegraphics[width=\textwidth]
{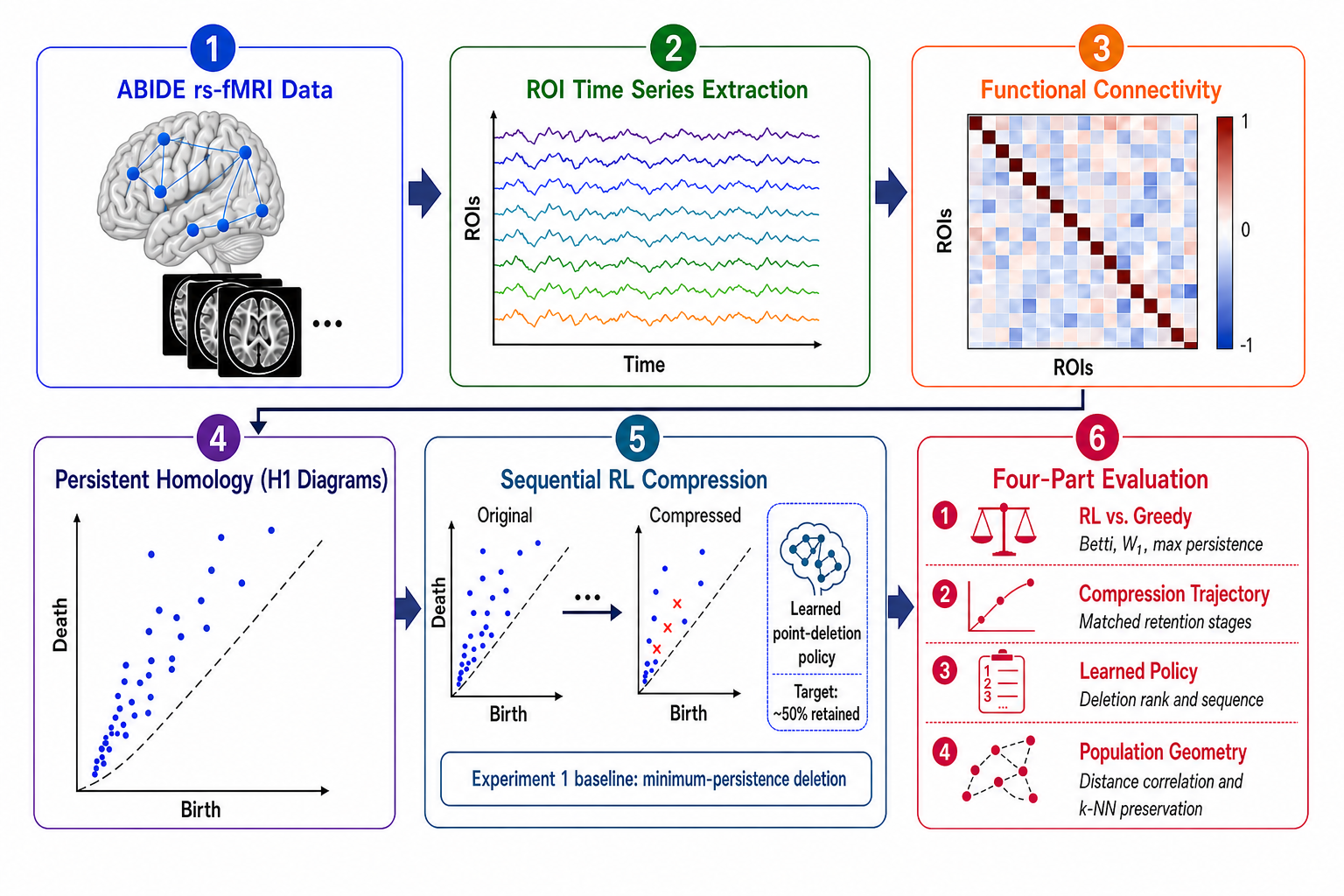}
\caption{ABIDE experimental pipeline.}
\label{fig:abide_pipeline}
\end{figure*}

\subsubsection{Data and PD construction.}
We use preprocessed region of interest (ROI) time series from 60 ABIDE participants, comprising 24 individuals with ASD and 36 controls, obtained with the C-PAC preprocessing pipeline and the Automated Anatomical Labeling atlas.
ROIs with zero temporal variance are removed separately for each
subject before computing the Pearson correlation matrix
$C=(C_{ij})$.  Connectivity is converted to the symmetric
dissimilarity
\[
\Delta_{ij}=1-\lvert C_{ij}\rvert,
\qquad
\Delta_{ii}=0.
\]
Thus, strong positive and strong negative correlations are both treated
as strong functional connectivity and enter the filtration early.  We
use $\Delta$ as the edge-weight matrix of a Rips-type clique
filtration: at threshold $\epsilon$, an edge $(i,j)$ is present when
$\Delta_{ij}\leq\epsilon$, and a higher dimensional simplex is present
when all of its edges are present.  We then compute $H_1$ points.
Across the 60 subjects, the resulting diagrams contain 3,715 points in
total, with a mean cardinality of 61.92.

\subsubsection{Sequential compression problem.}
For an original diagram
$D=\{(b_i,d_i)\}_{i=1}^{N}$, the state at step $t$ is the current
diagram $D_t$.  An action selects one of its points for deletion:
\[
\mathcal A(D_t)=\{1,\ldots,\lvert D_t\rvert\},
\qquad
D_{t+1}=D_t\setminus\{(b_{a_t},d_{a_t})\}.
\]
The episode begins at $D_0=D$ and ends when
$\lvert D_t\rvert=\lceil N/2\rceil$, where \(\lceil x\rceil\) denotes the smallest integer greater than or equal to \(x\).  Consequently, all methods are
compared at the same subject specific target cardinality.

\subsubsection{Finite horizon objective and reward.}
This experiment uses the finite horizon form of the general learning
framework rather than matching a stationary pushforward distribution.
Let
$
\operatorname{ret}_t
=
\frac{|D_t|}{|D|}
$
be the retained fraction.  We measure subject level distortion using
the $L^2$ distance $d_{\mathrm{Betti}}$ between Betti curves, the
 Wasserstein-1 distance $d_{\mathrm{pers}}$ between the
multisets of persistence values, and the maximum-persistence deviation
\begin{equation} \label{eq:d_max}
    d_{\max}(D,D_t)
=
\left|
p_{\max}(D)
-
p_{\max}(D_t)
\right|.
\end{equation}
The two persistence based terms play complementary roles: \(d_{\mathrm{pers}}\) measures changes across the complete persistence value distribution, whereas \(d_{\max}\) prevents the loss of the single most persistent feature, whose contribution to a distributional distance may be diluted in diagrams containing many points.
Unlike the synthetic experiment, the ABIDE experiment does not match a
generated distribution of PDs to an observed population distribution.
Instead, it uses a finite horizon, diagram level discrepancy that
compares each compressed diagram with its original diagram.
For a diagram $D=\{(b_i,d_i)\}_{i=1}^{N}$, and persistence values
$p_i=d_i-b_i$, let
\[
F_D(s)
=
\frac{1}{|D|}
\sum_{i=1}^n
\mathbf 1
\left\{
p_i\leq s
\right\}
\]
denote the empirical cumulative distribution function of its
persistence values. We define the persistence distribution discrepancy
between the original and current diagrams as
\begin{equation} \label{eq:d_pers}
d_{\mathrm{pers}}(D,D_t)
=
\int_{0}^{\infty}
\left|
F_D(s)-F_{D_t}(s)
\right|\,ds.
\end{equation}
This quantity is the Wasserstein-1 distance between
the empirical distributions of the persistence values. It is distinct
from the PD Wasserstein distance used later to compare
complete PDs in the cohort analysis.
We define the task specific
discrepancy by the weighted sum of the quantities in Equations \eqref{eq:d_max}, \eqref{eq:d_pers}, and $d_{\mathrm{Betti}}$ as
\[
\Dist_{\mathrm{task}}(D,D_t)
={}
\lambda_{\mathrm{Betti}}
d_{\mathrm{Betti}}(D,D_t)
+
\lambda_{\mathrm{pers}}
d_{\mathrm{pers}}(D,D_t)\\
+
\lambda_{\max}
d_{\max}(D,D_t).
\]
For a fixed original diagram $D$, the finite horizon compression
objective is
\[
J_D(D_t)
=
\Dist_{\mathrm{task}}(D,D_t)
-
\lambda_{\mathrm{comp}}(1-\operatorname{ret}_t).
\]
The first term
measures the topological distortion introduced by the deletions, while
the second term rewards reduction in diagram cardinality.
The step reward is the decrease in the task objective produced by the
selected deletion:
\[
r_t
=
J_D(D_t)-J_D(D_{t+1}).
\]
Thus, an action receives a larger reward when it removes a point while
causing only a small increase in topological discrepancy. 
% All four
% weights are set to one in the reported experiment. 
Because every
completed episode has the same terminal cardinality, the total amount
of compression is fixed; the policy must therefore distinguish
deletion sequences according to their effects on the three topological
distances.
% The Q-learning implementation optimizes the
% discounted cumulative reward using discount factor $\gamma=0.95$.

\subsubsection{Policy representation and training.}

The policy is conditioned on a seven dimensional feature
vector $\xi(D)$ of the
current state : retained fraction; mean, standard deviation, maximum,
and total persistence; and the mean and maximum of the Betti curve.
Each candidate point is described by five features: normalized birth,
death, and persistence, its current persistence rank, and its nearest
neighbor distance in the diagram.  A linear action-value model uses
the state features, candidate features, and their pairwise interactions
to score every admissible deletion.  Actions are selected by an
$\varepsilon$-greedy rule, and the action-value parameters are updated
by Q-learning with learning rate $0.01$, discount factor $0.95$, and
an exploration rate that decays from $1.0$ to a minimum of $0.05$.
The agent is trained for 2,000 episodes on one representative
46-point diagram and then applied, without subject-specific
retraining, to all 60 subjects.  This design tests whether the learned
sequential rule transfers across diagrams of different cardinalities.

\noindent \emph{Minimum persistence baseline.}
The comparison method is the existing greedy heuristic that deletes
the currently least-persistent point,
\[
a_t^{\mathrm{greedy}}
=
\arg\min_{i\in\mathcal A(D_t)}\,p_i.
\]
This is a natural local baseline because it treats the feature nearest
the diagonal as the safest deletion at each step.  Unlike the learned
policy, it does not evaluate the cumulative consequences of a sequence
of edits.

\noindent \emph{Experiment 1: Final individual level fidelity.}
Both methods start from the same subject diagram and stop at the same
target cardinality. Table~\ref{tab:abide_greedy} reports the final
distortions and retained fraction over all 60 subjects.
\begin{table}[t]
\centering
\caption{ABIDE compression at approximately $50\%$ retention. Values
are mean $\pm$ standard deviation over 60 subjects; lower distortion
is better.}
\label{tab:abide_greedy}
\small
\begin{tabular}{lccc}
\toprule
Method
& $d_{\mathrm{Betti}}$
& $d_{\mathrm{pers}}$
& Retained fraction\\
\midrule
RL policy
& $0.982\pm0.310$
& $0.0220\pm0.0051$
& $0.5045\pm0.0046$\\
Minimum persistence
& $0.988\pm0.303$
& $0.0222\pm0.0052$
& $0.5045\pm0.0046$\\
\bottomrule
\end{tabular}
\end{table}
At the same retained cardinality, the learned policy has modestly
lower mean Betti curve and persistence distribution distortion than
the greedy heuristic.  Both methods preserve the original maximum
persistence for every subject, so the corresponding maximum persistence
deviation is zero and is omitted from the table.  These results do not
establish a large separation in final fidelity; instead, they show
that the learned policy matches or slightly improves the two reported
individual level criteria while following a substantially different
deletion strategy.

\noindent \emph{Experiment 2: Sequential compression trajectories.}
Figure~\ref{fig:compression_trajectory} shows a representative diagram
at matched points along the two deletion trajectories.  The learned
policy does not simply reproduce the minimum persistence rule: it
selects deletions according to their predicted cumulative effect on
the multi-term objective.

\begin{figure}[t]
\centering
\includegraphics[width=0.75\textwidth]
{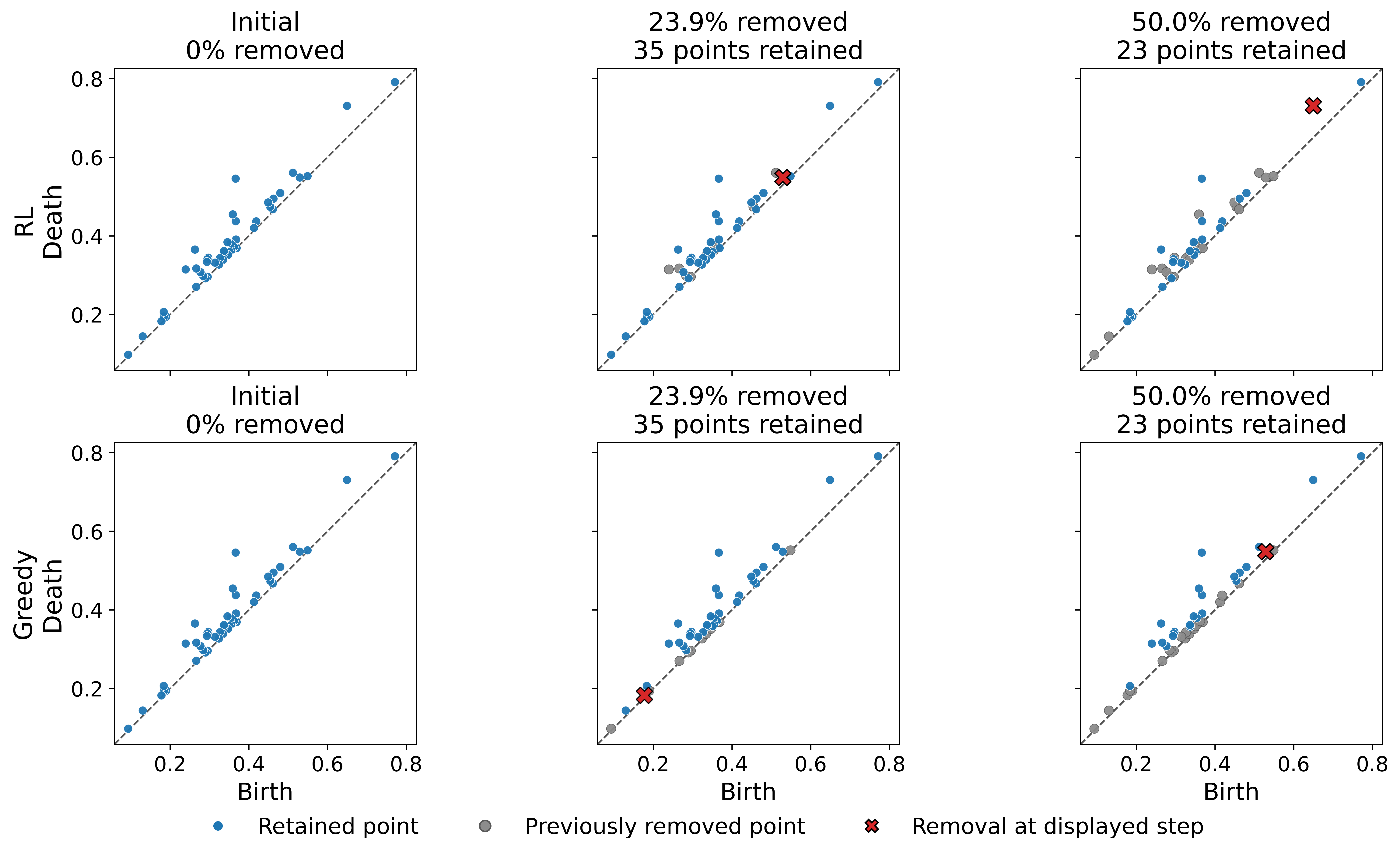}
\caption{Representative sequential compression trajectories for the
RL policy and the minimum persistence heuristic.
% Columns show the
% original diagram, intermediate states at matched retained fractions,
% and the final diagrams at approximately $50\%$ retention.
}
\label{fig:compression_trajectory}
\end{figure}

\noindent \emph{Experiment 3: learned deletion behavior.}
We examine one representative compression trajectory to determine
whether the learned policy simply repeats the minimum persistence rule.
At each step, we rank the points remaining in the diagram by
persistence, with rank one assigned to the point having the smallest
persistence. The greedy method always deletes the point with rank one.
The RL policy and the greedy method never select the same point at the
same step. Because the two methods can delete some of the same points
at different steps, their final sets of deleted points partially
overlap; the Jaccard overlap between these sets is $0.353$. The points
selected by the RL policy have a mean persistence rank of $17.65$ and a
median rank of $19$. Thus, the learned policy does not reproduce the
minimum persistence rule.

Table~\ref{tab:abide_policy_analysis} shows how the deletion choices
change during the trajectory. Compared with the greedy method, the RL
policy deletes a point with higher persistence in $100\%$ of the early
steps, $75\%$ of the middle steps, and $50\%$ of the late steps. The
policy therefore uses different deletion behavior at different stages
of compression rather than applying one fixed ranking rule throughout
the episode. These values describe the representative trajectory and
are not intended as cohort wide averages.
\begin{table}[t]
\centering
\caption{Deletion behavior in the representative RL trajectory.
``Higher persistence'' is the percentage of steps at which the RL
policy deletes a more persistent point than the greedy heuristic.}
\label{tab:abide_policy_analysis}
\small
\begin{tabular}{lccc}
\toprule
Stage & Steps & Higher persistence & Mean RL rank\\
\midrule
Early  & 7 & $100\%$ & 23.71\\
Middle & 8 & $75\%$  & 15.88\\
Late   & 8 & $50\%$  & 14.13\\
\bottomrule
\end{tabular}
\end{table}

\noindent \emph{Experiment 4: External validation of population geometry.}
\label{sec:population_geometry}
The policy is trained using diagram level fidelity terms only; it is
not given pairwise cohort distances, diagnostic labels, or participant
neighborhoods. We therefore use these quantities as an external test of
whether the compressed diagrams retain information needed for cohort
comparisons. Let $W=(W_{ij})$ and $\widetilde W=(\widetilde W_{ij})$
denote the pairwise PD Wasserstein distance matrices
before and after RL compression, respectively. Global distance
preservation is measured by the Pearson correlation between the upper-triangular entries of the original and compressed distance matrices:
\[
\rho_{\mathrm{dist}}
=
\operatorname{corr}
\left(
\operatorname{vec}_{\triangle}(W),
\operatorname{vec}_{\triangle}(\widetilde W)
\right).
\] For local geometry, let $\mathcal N_k(i)$ and
$\widetilde{\mathcal N}_k(i)$ be the sets of the $k$ nearest neighbors
of participant $i$ before and after compression. We define
\[
\operatorname{NP}_k
=
\frac{1}{n}
\sum_{i=1}^{n}
\frac{
\left|
\mathcal N_k(i)\cap\widetilde{\mathcal N}_k(i)
\right|
}{k}.
\]

Because the cohort contains both participants with ASD and controls, we compute these measures
for the complete cohort and separately within each diagnostic group.
% For the group specific analyses, the distance matrices and nearest
% neighbors are restricted to participants in the corresponding group.
This provides a robustness check of whether the overall preservation
result holds in both components of the cohort. Results are shown in Table \ref{tab:abide_population_geometry} and Figure \ref{fig:abide_population_geometry}.
\begin{table}[t]
\centering
\caption{Preservation of population geometry after approximately
$50\%$ of each diagram is retained. 
% For the complete cohort row,
% neighbors are selected from all 60 participants. For the diagnostic
% group rows, neighbors are selected only from within the corresponding
% group. Neighborhood preservation values are mean $\pm$ standard
% deviation across participants.
}
\label{tab:abide_population_geometry}
\small
\begin{tabular}{lccc}
\toprule
Analysis set
& $\rho_{\mathrm{dist}}$
& $\operatorname{NP}_5$
& $\operatorname{NP}_{10}$\\
\midrule
Complete cohort ($n=60$)
& $0.9950$
& $0.8267\pm0.1625$
& $0.8967\pm0.0736$\\
Autism group ($n=24$)
& $0.9971$
& $0.8833\pm0.1167$
& $0.9625\pm0.0576$\\
Control group ($n=36$)
& $0.9929$
& $0.8556\pm0.1403$
& $0.9444\pm0.0607$\\
\bottomrule
\end{tabular}
\end{table}
% For the complete cohort, the original and compressed distance matrices
% have correlation $\rho_{\mathrm{dist}}=0.9950$. The compressed diagrams
% also retain, on average, $82.7\%$ of the five nearest neighbors and
% $89.7\%$ of the ten nearest neighbors of each participant.
% Preservation remains high when the analysis is performed separately
% within each diagnostic group. The distance-matrix correlations are
% $0.9971$ for the autism group and $0.9929$ for the control group, while
% the corresponding neighborhood preservation values exceed $0.85$ for
% $k=5$ and $0.94$ for $k=10$.

Thus, after approximately half of every diagram has been removed, the
RL compressed diagrams closely preserve both global pairwise geometry
and local neighborhoods. The group specific results further show that
the overall preservation is not driven solely by one diagnostic group.
Because cohort distances, diagnostic labels, and neighborhoods do not
appear in the reward, these findings provide external evidence that the
learned edit policy retains meaningful cohort organization beyond the
diagram level criteria on which it was trained.

\begin{figure}[t]
\centering
\includegraphics[width=\textwidth]
{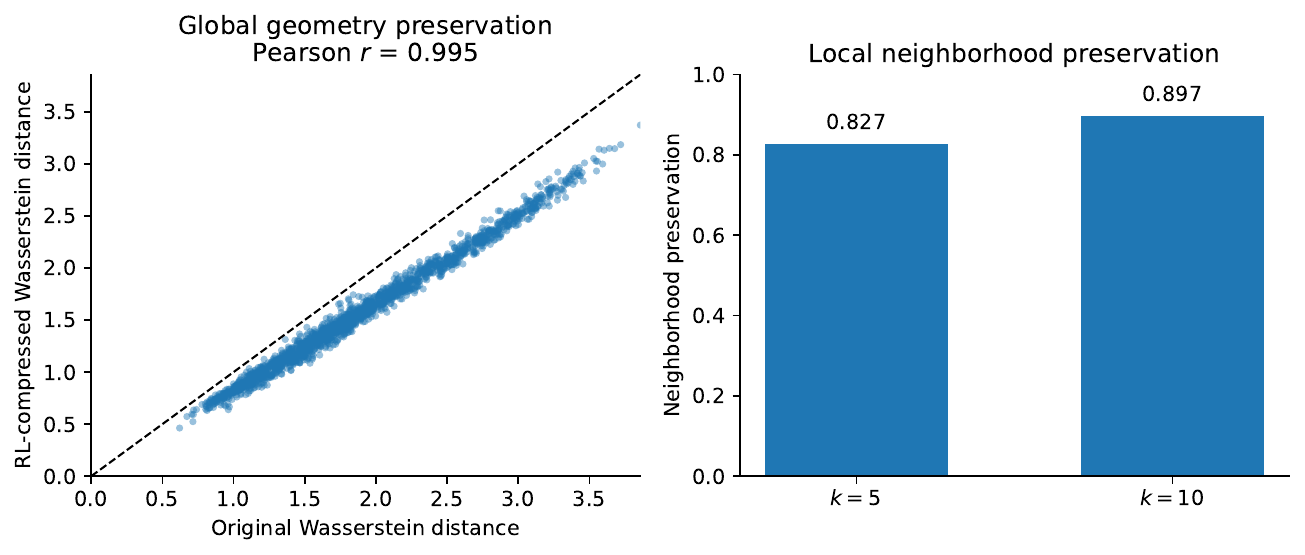}
\caption{Population geometry after RL compression. The first panel
compares the strict upper-triangular entries of the original and
compressed pairwise persistence-diagram Wasserstein distance matrices
for the complete cohort. The second panel reports complete-cohort
nearest-neighbor preservation for $k=5$ and $k=10$.}
\label{fig:abide_population_geometry}
\end{figure}
% Taken together, the four analyses show what the sequential formulation
% adds to a conventional deletion rule.  The learned policy attains
% comparable or modestly improved subject-level fidelity at matched
% compression, follows a measurably different and stage-dependent edit
% trajectory, transfers from one training diagram to the wider cohort,
% and preserves between-subject geometry that was not included in its
% reward.  The evidence therefore supports the more specific conclusion
% that reinforcement learning provides a viable non-greedy mechanism for
% topology-aware sequential compression, rather than the stronger claim
% that it uniformly dominates minimum-persistence deletion.

%%%%%%%%%%%%%%%%%%%%%%%%%%%%%%%%%%%%%%%%%%%%%%%%
\section{Discussion}
\label{sec:discussion}

This work introduces a process based perspective on PDs. Rather than treating a PD only as a static topological
summary, we model it as the state of a controlled stochastic system
that evolves through local, topology aware edits. RL provides a mechanism for selecting these edits according to
their cumulative consequences. The main innovation is therefore not
simply the application of a learning algorithm to features extracted
from PDs; it is the formulation of learnable stochastic dynamics acting
directly on PD space. This formulation supports both long run
distributional modeling and finite horizon tasks such as rare event
recovery and topology preserving compression, respectively.

The theoretical results provide a probabilistic foundation for this
perspective. Under the stated regularity conditions, the
policy controlled edit process is irreducible and aperiodic and
satisfies appropriate drift and minorization conditions, leading to a   
unique stationary probability law. The ergodic results justify using
finite policy generated trajectories to approximate expectations under
this law, while the consistency result connects empirical
distributional objectives to their population counterparts. Together,
these results explain why trajectories produced through successive
local edits can be used for probabilistic learning on a
variable cardinality and non-Euclidean state space. The theory applies
to the stationary formulation, while the same edit based framework can
also be used with task specific objectives.

The two experiments illustrate these complementary uses. In the
synthetic study, the objective emphasizes recovery of a rare
topological regime. The results
show that the policy controlled process can be directed toward
scientifically important but infrequent structure rather than merely
reproducing the most common diagrams. In the ABIDE study, the framework
is used for finite horizon compression. At approximately \(50\%\)
retention, the learned policy produces modestly lower average
Betti curve and persistence distribution distortion than the
minimum persistence heuristic, while both methods preserve maximum
persistence. These results do not establish uniform superiority over
greedy deletion, but they show that a learned sequential policy can
achieve comparable or slightly improved fidelity without being
restricted to a fixed pointwise ranking rule.

An interesting observation of the ABIDE study is that the learned
policy preserves structure that it is never explicitly trained to
preserve. Although the reward is defined entirely through
diagram level fidelity and compression, the compressed diagrams retain
both the global geometry and the large majority of local neighborhoods
of the original cohort. This preservation remains strong when the
ASD and control groups are examined separately, indicating that the
result is not driven by only one component of the cohort. The policy
achieves this without access to diagnostic labels, pairwise cohort
distances, or neighborhood information. 
% Moreover, the representative
% trajectory shows that the policy does not simply imitate
% minimum persistence deletion: it selects different points and changes
% its deletion behavior as compression proceeds.
These findings
show the principal advantage of the sequential formulation.
% : it can
% learn a diagnosis independent and context dependent deletion strategy that
% preserves organization beyond the quantities explicitly included in
% its reward.

Several methodological directions could extend this work and benefit the
broader research community. Future research may develop
richer edit mechanisms, more expressive policies for
variable cardinality diagrams, and adaptive objectives that balance
multiple topological and scientific requirements without relying on
fixed weights. Extending the theoretical analysis to broader policy
classes, continuous action parameters, and data dependent transition
mechanisms would further strengthen the connection between stochastic
process theory and RL on topological spaces.
Scalable training procedures will also be important for large
collections of diagrams and for settings involving several homological
dimensions. Finally, systematic benchmarks, repeated evaluations, and
uncertainty measures for generated or simplified diagrams would provide
a common basis for comparing methods and identifying when sequential
topological editing offers a genuine advantage.

This work establishes a foundation for a broader research direction in
which PDs are not only analyzed as final summaries, but are treated as
states of learnable stochastic processes. By combining controlled local
edits, probabilistic dynamics, and reinforcement learning, the proposed
framework connects sequential decision making with statistical modeling
directly on PD space. The theoretical results show that these dynamics
can possess well-defined long run behavior and support consistent
distributional learning, while the experiments demonstrate that they
can recover targeted rare structure and perform non-greedy compression
without destroying important diagram level or cohort level
organization. The central contribution is therefore a change in how PDs
can be modeled: from static outputs of a filtration to dynamic objects
that can be generated, transformed, and simplified according to a
scientific objective. This perspective opens a new route toward
adaptive and task aware methods for topological data analysis.
%%%%%%%%%%%%%%%%%%%%%%%%%%%%%%%%%%%%%%%%%%%%%%%%

\appendix

\section{Technical Proofs for Section \ref{sec:theory}}
\label{appendix:markov_proofs}

% This appendix provides the technical foundations for
% Theorem~\ref{thm:stationary}. We first construct a reference measure on
% persistence diagram (PD) space and then establish the probabilistic
% properties of the policy induced Markov chain (MC) required for existence and uniqueness of the stationary distribution.

This appendix gives sufficient conditions for irreducibility,
aperiodicity, geometric drift, and minorization of the policy-induced
Markov chain. Together, these conditions imply the stationary and
geometric-convergence results in Theorem~\ref{thm:stationary}.

\subsection{Reference Measure on PD Space}
\label{appendix:reference-measure}

To establish irreducibility of the policy induced MC, we
introduce a reference measure on PD space. 
For each $k\ge 0$, let $\mathcal D_K^k = \{D\in\mathcal D_K:\ |D|=k\}$
denote the subset of PDs with exactly $k$
points. Recall the definitions of $K$ and $\mathcal D_K$ from Section \ref{sec:state_action}. For $k\geq1$, let
$
K^k
=
\underbrace{K\times\cdots\times K}_{k\text{ times}}
\subset\mathbb R^{2k}$
denote the space of ordered $k$-tuples of admissible PD
points. Define the canonical quotient map
\[
\kappa_k:K^k\longrightarrow\mathcal D_K^k,
\qquad
\kappa_k(x_1,\ldots,x_k)
=
\{\!\{x_1,\ldots,x_k\}\!\},
\]
which sends an ordered tuple to the corresponding unordered PD. In particular, tuples that differ only by a permutation are
mapped to the same diagram.
Since $\kappa_k$ is Borel
measurable, for any Borel set $E\subset\mathcal D_K^k$ we define
\[
\psi_k(E)
=
\operatorname{Leb}_{2k}\!\bigl(\kappa_k^{-1}(E)\bigr),
\]
where $\operatorname{Leb}_{2k}$ denotes $2k$-dimensional Lebesgue
measure on $\mathbb R^{2k}$. Here, $\kappa_k^{-1}(E)$ denotes the preimage of the set $E$, not an
inverse function. The map $\kappa_k$ is generally not injective because
tuples that differ only by a permutation represent the same PD. Because
$\kappa_k^{-1}(E)\subseteq K^k$, the measure is evaluated only on the
admissible coordinate region $K^k$. For $k=0$, define
$
\psi_0=\delta_{\varnothing},
$
the Dirac measure concentrated at the empty diagram $\varnothing$.
Let $(w_k)_{k\geq0}$ be a strictly positive summable sequence; for
example, let $w_0=1$ and $w_k=2^{-k}$ for $k\geq1$. Define
\[
\psi(E)
=
\sum_{k=0}^{\infty}
w_k\,\psi_k(E\cap\mathcal D_K^k),
\qquad
E\in\mathcal B(\mathcal D_K).
\]
Then $\psi$ is a $\sigma$-finite measure on $\mathcal D_K$.
The reference measure is connected to the transition mechanism as
follows. If $E\in\mathcal B(\mathcal D_K)$ satisfies $\psi(E)>0$, then
there exists some $k\geq0$ such that
\[
\psi_k(E\cap\mathcal D_K^k)>0.
\]
Consequently,
\[
\operatorname{Leb}_{2k}
\left(
\kappa_k^{-1}(E\cap\mathcal D_K^k)
\right)
>0.
\]

Suppose that the policy assigns positive probability to the delete action
at every nonempty diagram and to the add action at every diagram.
Assume also that every existing point can be selected for deletion with
positive probability and that the conditional add-proposal density is
strictly positive on $K$. Starting from any finite diagram $D$, the
chain can then reach the empty diagram through a finite sequence of
delete operations with positive probability. From the empty diagram, it
can reach $E\cap\mathcal D_K^k$ through $k$ consecutive add operations
with positive probability. Therefore, every set of positive $\psi$
measure is accessible from every initial diagram, which establishes
$\psi$-irreducibility. 

Furthermore, for $n\geq0$, let
\begin{equation} \label{eq:n-step-prob}
P_\theta^n(E\mid D)
=
\Pr(D_n\in E\mid D_0=D)
\end{equation}
denote the $n$-step transition probability of the policy induced MC. By convention,
$
P_\theta^0(E\mid D)=\mathbf 1_E(D)
$ \citep{meyn2012markov}.

\subsection{Propositions supporting Theorem~\ref{thm:stationary}}
\label{appendix:props}

In this section we establish fundamental probabilistic properties required for Theorem~\ref{thm:stationary}, including
irreducibility, aperiodicity, drift, and minorization of the policy induced Markov
chain on $\mathcal D_K$.
\begin{proposition}
\label{prop:irreducible_app}
Assume that the policy assigns positive probability to every admissible
add and delete action, every point of a nonempty diagram can be selected
for deletion with positive probability, and the conditional add-proposal
density is strictly positive on $K$. Then the policy induced Markov
chain on $\mathcal D_K$ is $\psi$-irreducible.
\end{proposition}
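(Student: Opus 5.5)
We show that for every $D\in\mathcal D_K$ and every $E\in\mathcal B(\mathcal D_K)$ with $\psi(E)>0$ there is an $n\geq0$ with $P_\theta^n(E\mid D)>0$, using the $n$-step kernel in \eqref{eq:n-step-prob}. The path goes through the empty diagram. First delete down to $\varnothing$, then build up to the right cardinality stratum by repeated adds. By the construction of $\psi$ in Section~\ref{appendix:reference-measure}, $\psi(E)>0$ gives some $k\geq0$ with $\psi_k(E\cap\mathcal D_K^k)>0$. In particular, if $k\geq1$, then $\operatorname{Leb}_{2k}(\kappa_k^{-1}(E\cap\mathcal D_K^k))>0$.

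\emph{Step 1 (reaching $\varnothing$).} Let $m=|D|$. By hypothesis, at any nonempty diagram the policy selects delete with some probability $\pi_\theta(\text{delete}\mid D')>0$. Every point can then be chosen for deletion with positive probability, and each deletion lowers the cardinality by exactly one. The chain therefore reaches $\varnothing$ in $m$ steps with probability bounded below by a finite product of positive numbers. Since the diagram is finite, we can simply follow one specific deletion order, and no uniformity over states is needed. If $k=0$, we are done with $n=m$, because $\psi_0=\delta_\varnothing$ forces $\varnothing\in E$.

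\emph{Step 2 (filling the stratum).} For $k\geq1$, write the probability of going from $\varnothing$ into $E\cap\mathcal D_K^k$ in $k$ add steps as an iterated integral. Let $q_{\mathrm{add}}(\cdot\mid D')$ be the conditional add density at diagram $D'$. Then
\[
P_\theta^k(E\mid\varnothing)\geq\int_{K^k}\mathbf 1\{\kappa_k(x_1,\ldots,x_k)\in E\}\prod_{j=1}^{k}\pi_\theta(\text{add}\mid D_{j-1})\,q_{\mathrm{add}}(x_j\mid D_{j-1})\,dx_1\cdots dx_k ,
\]
where $D_{j-1}=\kappa_{j-1}(x_1,\ldots,x_{j-1})$. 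The integrand is strictly positive on $\kappa_k^{-1}(E\cap\mathcal D_K^k)$, which has positive Lebesgue measure. Hence the integral is positive. Note that we do not need a uniform positive lower bound on the integrand: a nonnegative function that is strictly positive on a set of positive measure has positive integral. We need only that the integrand is jointly measurable, which follows from measurability of the policy and the proposal kernels. Combining with Step 1 via Chapman--Kolmogorov gives $P_\theta^{m+k}(E\mid D)\geq P_\theta^m(\{\varnothing\}\mid D)\,P_\theta^k(E\mid\varnothing)>0$.

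\emph{Main obstacle.} The delicate step is Step 2. Two things must be justified there. First, the projection $\Pi$ and the jitter used in implementation do not destroy absolute continuity. We avoid this by assuming, as the proposition does, that the add proposal has a density strictly positive on $K$ after any projection. Second, we should make explicit that $\pi_\theta(\text{add}\mid\cdot)$ and $q_{\mathrm{add}}$ are measurable in the current diagram, so the iterated integral is well defined via Tonelli. A minor point is that $\kappa_k^{-1}$ is permutation invariant, so the ordering of successive adds is immaterial. Everything else is routine bookkeeping.
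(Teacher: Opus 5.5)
Your proof is correct and follows the paper's route. You delete down to $\varnothing$, rebuild the stratum $\mathcal D_K^k$ with $k$ adds, and glue the two stages via Chapman--Kolmogorov; your iterated integral over $K^k$ just makes explicit the paper's one-line claim that positive-Lebesgue subsets of $K^k$ are reached with positive probability.

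The one difference is that you allow $n=0$ when $D=\varnothing\in E$, whereas the paper requires $n\geq1$. The paper closes that case with an add-then-delete loop giving $P_\theta^2(\{\varnothing\}\mid\varnothing)>0$. You should add that line, or note that an $n\geq0$ bound from every starting diagram upgrades to an $n\geq1$ bound by integrating it against one step $P_\theta(\cdot\mid D)$.
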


\begin{proof}
Fix an initial diagram $D\in\mathcal D_K$ and a measurable set
$E\in\mathcal B(\mathcal D_K)$ satisfying $\psi(E)>0$. We show that
there exists an integer $n\geq1$ such that
$
P_\theta^n(E\mid D)>0,
$
where $P_\theta^n(E\mid D)$ is defined in Equation \eqref{eq:n-step-prob}.
Let $m=|D|$ denote the number of points in the initial diagram. By the
definition of $\psi$, there exists some $k\geq0$ such that, with
$
E_k
=
E\cap\mathcal D_K^k,
$
we have
$
\psi_k(E_k)>0.
$

First suppose that $k\geq1$. By the assumptions on the policy and the
delete mechanism, the chain can remove all $m$ points of $D$ and reach
the empty diagram in $m$ steps with positive probability:
\[
P_\theta^m(\{\varnothing\}\mid D)>0,
\]
where the expression equals one when $m=0$.
By the construction of $\psi_k$,
\[
\operatorname{Leb}_{2k}
\left(
\kappa_k^{-1}(E_k)
\right)
>0.
\]
Starting from $\varnothing$, consider $k$ consecutive add operations.
Because the policy selects the add action with positive probability and
the conditional add proposal density is strictly positive on $K$, every
subset of $K^k$ with positive $2k$-dimensional Lebesgue measure is
generated with positive probability. Therefore,
\[
P_\theta^k(E_k\mid\varnothing)>0.
\]
By the Chapman--Kolmogorov identity,
\[
P_\theta^{m+k}(E\mid D)
=
\int_{\mathcal D_K}
P_\theta^k(E\mid D')\,
P_\theta^m(dD'\mid D).
\]
Restricting the integral to the intermediate state
$\varnothing$ and using $E_k\subseteq E$, we obtain
\[
P_\theta^{m+k}(E\mid D)
\geq
P_\theta^m(\{\varnothing\}\mid D)\,
P_\theta^k(E\mid\varnothing)
\geq
P_\theta^m(\{\varnothing\}\mid D)\,
P_\theta^k(E_k\mid\varnothing)
>0.
\]
It remains to consider $k=0$. In this case,
$\psi_0(E_0)>0$ implies that $\varnothing\in E$. If $m\geq1$, repeated
delete operations give
\[
P_\theta^m(E\mid D)
\geq
P_\theta^m(\{\varnothing\}\mid D)
>0.
\]
If $m=0$, the chain starts at $\varnothing$. It can perform one add
operation followed by one delete operation, both with positive
probability, and return to $\varnothing$. Hence,
$
P_\theta^2(E\mid\varnothing)>0.
$
Thus, for every $D\in\mathcal D_K$ and every
$E\in\mathcal B(\mathcal D_K)$ satisfying $\psi(E)>0$, there exists
$n\geq1$ such that
\[
P_\theta^n(E\mid D)>0.
\]
Therefore, the policy induced MC is $\psi$-irreducible.
\end{proof}

\begin{proposition}
\label{prop:aperiodic_app}
Under the assumptions of Proposition~\ref{prop:irreducible_app},
suppose additionally that the policy selects the \emph{move} action
with positive probability at every singleton diagram and that the move
proposal has positive probability of remaining in \(K\). Then the
policy-induced Markov chain is aperiodic.
\end{proposition}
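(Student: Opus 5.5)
The plan is to use the empty diagram $\varnothing$ as an accessible atom and show that its return times have greatest common divisor one. For a $\psi$-irreducible chain with an accessible atom $\alpha$, the period of the chain equals $\gcd\{n\geq1: P_\theta^n(\{\alpha\}\mid\alpha)>0\}$ \citep{meyn2012markov}. So it suffices to exhibit two return times to $\varnothing$ that are coprime.

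First I would check that $\{\varnothing\}$ is an accessible atom.
\begin{itemize}
\item It is an atom because it is a single state, so the transition law $P_\theta(\cdot\mid\varnothing)$ is trivially the same from every point of $\{\varnothing\}$.
\item It has positive reference measure, since $\psi(\{\varnothing\})=w_0\,\psi_0(\{\varnothing\})=w_0>0$.
\item By Proposition~\ref{prop:irreducible_app} (the $k=0$ case of its proof), $\{\varnothing\}$ is reached from every $D\in\mathcal D_K$ with positive probability.
\end{itemize}

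Next I would exhibit two return times.

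\emph{Return in 2 steps.} Start at $\varnothing$ and take one add step. The add action has positive policy probability, and the add proposal density is positive on $K$. This yields a singleton $\{x\}$ with $x\in K$. Then delete its only point, which has positive probability. Hence $P_\theta^2(\{\varnothing\}\mid\varnothing)>0$, as already noted in the proof of Proposition~\ref{prop:irreducible_app}.

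\emph{Return in 3 steps.} Insert a move between the add and the delete. From the singleton $\{x\}$, the additional hypothesis says the move action is chosen with positive probability and the moved point remains in $K$ with positive probability. This produces a singleton $\{x'\}\in\mathcal D_K^1$. Deleting $x'$ then returns the chain to $\varnothing$. Integrating over the intermediate singletons with Chapman--Kolmogorov, as in the proof of Proposition~\ref{prop:irreducible_app}, gives
\[
P_\theta^3(\{\varnothing\}\mid\varnothing)\geq\int_{\mathcal D_K^1}\int_{\mathcal D_K^1}P_\theta(\{\varnothing\}\mid D'')\,P_\theta(dD''\mid D')\,P_\theta(dD'\mid\varnothing)>0 .
\]
The integrand is positive on a set of positive mass at each stage, so the bound is strictly positive.

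Since $\gcd(2,3)=1$, the atom $\varnothing$ has period one. Hence the chain is aperiodic.

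The main subtlety is the 3-step path. A move must preserve cardinality, so that the state after the move is again a singleton that can be deleted in one step. It must also never produce an invalid state; this is guaranteed by the projection $\Pi$ into $K$, combined with the hypothesis that the move stays in $K$ with positive probability. I also need the measurability of the singleton events used in the integral. If one prefers to avoid atom-based period theory, an alternative is to take $C=\{\varnothing\}$ directly in the small-set definition of the period from \citet{meyn2012markov}. Then $C$ is trivially $1$-small, with $\nu=P_\theta(\cdot\mid\varnothing)$ as the minorizing measure. The same two return times $2$ and $3$ give period $1$.
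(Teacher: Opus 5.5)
Your main argument is correct and is essentially the paper's proof: $\psi(\{\varnothing\})=w_0>0$, the add--delete and add--move--delete paths give return times $2$ and $3$ to $\varnothing$, and $\gcd(2,3)=1$. Your Chapman--Kolmogorov justification of the three-step path and your explicit check that $\{\varnothing\}$ is an accessible atom are slightly more careful than the paper's.

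One caution applies only to your closing aside. The small-set definition of the period requires $\nu_M(C)>0$. With $M=1$ and $\nu=P_\theta(\cdot\mid\varnothing)$, this quantity is $P_\theta(\{\varnothing\}\mid\varnothing)$, which is $0$ when only the add action is admissible at $\varnothing$. Take instead $M=2$ and $\nu=P_\theta^2(\{\varnothing\}\mid\varnothing)\,\delta_{\varnothing}$; then $E_C$ is exactly the set of return times to $\varnothing$.
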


\begin{proof}
By Proposition~\ref{prop:irreducible_app}, the policy induced Markov
chain is $\psi$-irreducible. Recall that
$\psi_0=\delta_{\varnothing}$ and $w_0>0$, so
$
\psi(\{\varnothing\})=w_0>0.
$
Starting from the empty diagram, the chain can return to
$\varnothing$ in two steps by first adding a point and then deleting
that point. Because the policy assigns positive probability to both
actions, the add proposal assigns positive probability to admissible
points in $K$, and the unique point can subsequently be selected for
deletion,
\[
P_\theta^2(\{\varnothing\}\mid\varnothing)>0.
\]
The chain can also return to $\varnothing$ in three steps by adding a
point, moving it within $K$, and then deleting it. The move proposal has
positive probability of producing an admissible perturbation, and the
policy assigns positive probability to each of these actions. Therefore,
\[
P_\theta^3(\{\varnothing\}\mid\varnothing)>0.
\]
Define the set of possible return times to the empty diagram by
\[
\operatorname{per}_{\varnothing}
=
\left\{
n\geq1:
P_\theta^n(\{\varnothing\}\mid\varnothing)>0
\right\}.
\]
The preceding arguments show that both $2$ and $3$ belong to this set
of possible return times. Since $\gcd(2,3)=1$, we have
$
\operatorname{per}(\varnothing)=1.
$
Because the chain is $\psi$-irreducible and
$\psi(\{\varnothing\})>0$, the policy induced MC is
aperiodic.
\end{proof}
% Full proof of aperiodicity goes here.

Irreducibility and aperiodicity describe the accessibility and cyclic
behavior of the chain, but they do not control its behavior far from
the target diagrams. To formulate the Lyapunov drift condition needed
for the subsequent stability analysis, we introduce the one-step
transition operator
\[
(P_\theta f)(D)
:=
\mathbb E_\theta
\left[
f(D_{t+1})\mid D_t=D
\right]
\]
for any nonnegative measurable function
$f:\mathcal D_K\to\mathbb R$.

\begin{proposition}[Geometric Foster--Lyapunov drift]
\label{prop:drift_app}
Let \(\{D_i^*\}_{i=1}^{n}\subset\mathcal D_K\) be fixed target PDs and
define
\[
V(D)
=
1+
\frac{1}{n}
\sum_{i=1}^{n}
W_2(D,D_i^*)^2.
\]
% For a measurable function \(f:\mathcal D_K\to[0,\infty)\), write
% \[
% (P_\theta f)(D)
% =
% \mathbb E_\theta
% \left[
% f(D_{t+1})\mid D_t=D
% \right].
% \]
Suppose that there exist a measurable set
\(\mathcal G_V\subset\mathcal D_K\) and a constant
\(\eta_V\in(0,1)\) such that
\[
\mathbb E_\theta
\left[
V(D_{t+1})-V(D_t)\mid D_t=D
\right]
\leq
-\eta_V V(D),
\qquad D\notin\mathcal G_V.
\]
Assume also that
\[
c_V
=
\sup_{D\in\mathcal G_V}
\left[
(P_\theta V)(D)
-
(1-\eta_V)V(D)
\right]_+
<\infty.
\]
Then
\[
(P_\theta V)(D)
\leq
\lambda_V V(D)
+
c_V\mathbf 1_{\mathcal G_V}(D),
\qquad D\in\mathcal D_K,
\]
where \(\lambda_V=1-\eta_V\in(0,1)\).
\end{proposition}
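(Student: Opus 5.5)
The argument is a direct case split on whether $D$ lies in $\mathcal G_V$, using only the two hypotheses of the proposition. First I will check that all quantities are well defined. Since $V\geq 1$ is nonnegative and measurable, $(P_\theta V)(D)$ is defined as an extended-real number. On $\mathcal D_K$ it is finite: every diagram has coordinates in the compact set $K$, so $W_2(D',D_i^*)$ is finite for each reachable $D'$. I will not need more than this, because finiteness of $(P_\theta V)(D)$ is also implicit in the two hypotheses. I will then rewrite the drift hypothesis using linearity of conditional expectation:
\[
\mathbb E_\theta\left[V(D_{t+1})-V(D_t)\mid D_t=D\right]
=
(P_\theta V)(D)-V(D).
\]

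\emph{Case $D\notin\mathcal G_V$.} The hypothesis gives $(P_\theta V)(D)-V(D)\leq-\eta_V V(D)$. Rearranging,
\[
(P_\theta V)(D)\leq(1-\eta_V)V(D)=\lambda_V V(D).
\]
Since $\mathbf 1_{\mathcal G_V}(D)=0$ here, this is exactly the claimed bound.

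\emph{Case $D\in\mathcal G_V$.} For any real number $x$ we have $x\leq[x]_+$. By the definition of $c_V$ as a supremum over $\mathcal G_V$,
\[
(P_\theta V)(D)-(1-\eta_V)V(D)\leq\left[(P_\theta V)(D)-(1-\eta_V)V(D)\right]_+\leq c_V.
\]
Hence $(P_\theta V)(D)\leq\lambda_V V(D)+c_V=\lambda_V V(D)+c_V\mathbf 1_{\mathcal G_V}(D)$. Finally, $\eta_V\in(0,1)$ gives $\lambda_V\in(0,1)$.

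No step here is a real obstacle; the proposition essentially repackages its hypotheses into the standard form $PV\leq\lambda V+b\mathbf 1_C$ of \citep{meyn2012markov}. The only point needing care is that the difference $(P_\theta V)(D)-V(D)$ is not of the indeterminate form $\infty-\infty$. This is ruled out because $V(D)<\infty$ for every $D\in\mathcal D_K$, and $(P_\theta V)(D)<\infty$ follows from the hypotheses themselves: off $\mathcal G_V$ from the drift inequality, and on $\mathcal G_V$ from $c_V<\infty$.
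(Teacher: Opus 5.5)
Your proof is correct and follows the paper's argument exactly. You rewrite the conditional drift as $(P_\theta V)(D)-V(D)$, rearrange the drift inequality for $D\notin\mathcal G_V$, and use $x\le[x]_+\le c_V$ for $D\in\mathcal G_V$. Your extra finiteness check on $(P_\theta V)(D)$ is a harmless refinement that the paper omits, but only the justification you give at the end, via the two hypotheses, is valid; pointwise finiteness of $W_2(D',D_i^*)$ alone would not bound the expectation.
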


\begin{proof}
Because conditioning on \(D_t=D\) fixes the current diagram,
\[
\mathbb E_\theta
\left[
V(D_{t+1})-V(D_t)\mid D_t=D
\right]
=
(P_\theta V)(D)-V(D).
\]
For \(D\notin\mathcal G_V\), the assumed drift inequality gives
\[
(P_\theta V)(D)-V(D)
\leq
-\eta_V V(D),
\]
and hence
\[
(P_\theta V)(D)
\leq
(1-\eta_V)V(D)
=
\lambda_VV(D).
\]
For \(D\in\mathcal G_V\), the definition of \(c_V\) gives
\[
(P_\theta V)(D)
\leq
\lambda_VV(D)+c_V.
\]
Combining the two cases yields
\[
(P_\theta V)(D)
\leq
\lambda_VV(D)
+
c_V\mathbf 1_{\mathcal G_V}(D).
\]
\end{proof}

% Full proof of drift condition goes here.

The final condition is minorization. A measurable set
\(\mathcal H\subseteq\mathcal D_K\) is called an \(m\)-step small set
if there exist an integer \(m\geq1\), a constant
\(\varepsilon_{\mathrm{min}}>0\), and a probability measure \(\nu\) on
\(\mathcal D_K\) such that
\[
P_\theta^m(E\mid D)
\geq
\varepsilon_{\mathrm{min}}\nu(E)
\]
for every \(D\in\mathcal H\) and every
\(E\in\mathcal B(\mathcal D_K)\). This minorization condition means
that the \(m\)-step transition distributions from all diagrams in
\(\mathcal H\) share a common probabilistic component. In the
geometric drift theorem, the condition is imposed on the same set
\(\mathcal G_V\) that appears in the Foster--Lyapunov inequality.

We establish this condition through a uniform reset mechanism. From
every \(D\in\mathcal G_V\), the chain reaches the empty diagram
\(\varnothing\) within a fixed number \(m_0\) of steps with a uniformly
positive probability. An additional add operation then generates a
singleton PD from a proposal density bounded below on
\(K_{\mathrm{add}}\). This produces a common lower bound for
\(P_\theta^{m_0+1}(E\mid D)\) over all
\(D\in\mathcal G_V\); other possible transitions contribute
nonnegative probability and therefore do not affect this lower bound.

\begin{proposition}[Minorization on the drift set]
\label{prop:minorization_app}
Assume that there exist an integer \(m_0\geq1\) and a constant
\(\alpha_0>0\) such that
\[
P_\theta^{m_0}
\left(
\{\varnothing\}\mid D
\right)
\geq
\alpha_0,
\qquad D\in\mathcal G_V.
\]
Suppose also that, from \(\varnothing\), the policy selects the
\emph{add} action with probability at least \(\alpha_{\mathrm{add}}>0\)
and that the add-proposal density satisfies
\[
q_{\mathrm{add}}(x\mid\varnothing)
\geq
q_{\min}>0,
\qquad x\in K_{\mathrm{add}},
\]
for some Borel set \(K_{\mathrm{add}}\subset K\) with
\(\operatorname{Leb}_2(K_{\mathrm{add}})>0\).

Then \(\mathcal G_V\) is an
\((m_0+1)\)-step small set. In particular, there exist
\(\varepsilon_{\min}>0\) and a probability measure \(\nu\) such that
\[
P_\theta^{m_0+1}(E\mid D)
\geq
\varepsilon_{\min}\nu(E),
\qquad
D\in\mathcal G_V,\quad
E\in\mathcal B(\mathcal D_K).
\]
\end{proposition}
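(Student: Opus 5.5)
The plan is to build the minorizing measure $\nu$ explicitly as the law of the singleton diagram produced by one add step from $\varnothing$, restricted to $K_{\mathrm{add}}$. Concretely, define
\[
\nu(E)
=
\frac{\operatorname{Leb}_2\bigl(\{x\in K_{\mathrm{add}}:\{\!\{x\}\!\}\in E\}\bigr)}{\operatorname{Leb}_2(K_{\mathrm{add}})},
\qquad E\in\mathcal B(\mathcal D_K).
\]
This is the pushforward of normalized Lebesgue measure on $K_{\mathrm{add}}$ under the measurable map $\kappa_1:K\to\mathcal D_K^1$. It is therefore a probability measure on $\mathcal D_K$. Since $K$ is compact, $\operatorname{Leb}_2(K_{\mathrm{add}})$ is finite as well as positive.

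The first step is a one-step lower bound from the empty diagram. From $\varnothing$, the only edit that yields a singleton is an add. Discarding all other actions, which contribute nonnegative probability, gives
\[
P_\theta(E\mid\varnothing)
\geq
\alpha_{\mathrm{add}}\int_{K}\mathbf 1_E(\{\!\{x\}\!\})\,q_{\mathrm{add}}(x\mid\varnothing)\,dx
\geq
\alpha_{\mathrm{add}}\,q_{\min}\int_{K_{\mathrm{add}}}\mathbf 1_E(\{\!\{x\}\!\})\,dx
=
\alpha_{\mathrm{add}}\,q_{\min}\operatorname{Leb}_2(K_{\mathrm{add}})\,\nu(E).
\]
I expect this to be the only delicate point. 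It requires reading the add mechanism at $\varnothing$ as placing the new point according to the density $q_{\mathrm{add}}(\cdot\mid\varnothing)$ with respect to $\operatorname{Leb}_2$ on $K$. It also requires that the projection $\Pi$ act as the identity on points already in $K$, so that the resulting diagram is exactly $\{\!\{x\}\!\}$. I would state this interpretation explicitly. Measurability of $x\mapsto\mathbf 1_E(\{\!\{x\}\!\})$ follows from the Borel measurability of $\kappa_1$ noted in Appendix~\ref{appendix:reference-measure}.

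The second step uses the Chapman--Kolmogorov identity, exactly as in the proof of Proposition~\ref{prop:irreducible_app}. For $D\in\mathcal G_V$,
\[
P_\theta^{m_0+1}(E\mid D)
=
\int_{\mathcal D_K}P_\theta(E\mid D')\,P_\theta^{m_0}(dD'\mid D)
\geq
P_\theta^{m_0}(\{\varnothing\}\mid D)\,P_\theta(E\mid\varnothing)
\geq
\alpha_0\,\alpha_{\mathrm{add}}\,q_{\min}\operatorname{Leb}_2(K_{\mathrm{add}})\,\nu(E).
\]
Setting $\varepsilon_{\min}=\alpha_0\alpha_{\mathrm{add}}q_{\min}\operatorname{Leb}_2(K_{\mathrm{add}})>0$ gives the claimed inequality uniformly in $D\in\mathcal G_V$ and $E\in\mathcal B(\mathcal D_K)$. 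Hence $\mathcal G_V$ is an $(m_0+1)$-step small set.

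A final remark: taking $E=\mathcal D_K$ shows that $\varepsilon_{\min}\leq1$, which is consistent with the definition of a small set.
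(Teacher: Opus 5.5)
Your proposal is correct and matches the paper's proof step for step. It uses the same minorizing measure $\nu$ (normalized Lebesgue measure on $K_{\mathrm{add}}$ pushed forward to singleton diagrams), the same one-step add bound from $\varnothing$, and the same Chapman--Kolmogorov restriction to $\{\varnothing\}$, giving the identical constant $\varepsilon_{\min}=\alpha_0\alpha_{\mathrm{add}}q_{\min}\operatorname{Leb}_2(K_{\mathrm{add}})$; your remarks on finiteness of $\operatorname{Leb}_2(K_{\mathrm{add}})$, measurability of $x\mapsto\{\!\{x\}\!\}$, and the reading of the add mechanism make explicit what the paper leaves implicit.
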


\begin{proof}
Define
\[
\nu(E)
=
\frac{
\operatorname{Leb}_2
\left(
\{x\in K_{\mathrm{add}}:\{\!\{x\}\!\}\in E\}
\right)
}{
\operatorname{Leb}_2(K_{\mathrm{add}})
}.
\]
Starting from \(\varnothing\), the add-proposal assumptions give
\[
P_\theta(E\mid\varnothing)
\geq
\alpha_{\mathrm{add}}q_{\min}
\operatorname{Leb}_2(K_{\mathrm{add}})\nu(E).
\]
For \(D\in\mathcal G_V\), the Chapman--Kolmogorov identity and the
uniform reset assumption therefore imply
\[
\begin{split}
P_\theta^{m_0+1}(E\mid D)
&\geq
P_\theta^{m_0}
\left(
\{\varnothing\}\mid D
\right)
P_\theta(E\mid\varnothing)\\
&\geq
\alpha_0\alpha_{\mathrm{add}}q_{\min}
\operatorname{Leb}_2(K_{\mathrm{add}})\nu(E).
\end{split}
\]
Thus the minorization condition holds with
\[
\varepsilon_{\min}
=
\alpha_0\alpha_{\mathrm{add}}q_{\min}
\operatorname{Leb}_2(K_{\mathrm{add}})
>0.
\]
Hence \(\mathcal G_V\) is an \((m_0+1)\)-step small set.
\end{proof}

\bibliographystyle{plain}
\bibliography{reference}

\newcommand{\noop}[1]{}
\begin{thebibliography}{10}

\bibitem{adams2017persistence}
Henry Adams, Andrew Emerson, Michael Kirby, Rachel Neville, Chris Peterson,
  Patrick Shipman, Sofya Chepushtanova, Eric Hanson, Francesco Motta, and Lori
  Ziegelmeier.
\newblock Persistence images: A stable vector representation of persistent
  homology.
\newblock {\em Journal of Machine Learning Research}, 18(8):1--35, 2017.

\bibitem{Adler2019}
R.J. Adler and S.~Agami.
\newblock Modelling persistence diagrams with planar point processes, and
  revealing topology with bagplots.
\newblock {\em J Appl. and Comput. Topology}, 3:139–183, 2019.

\bibitem{Amezquita2023}
E.~J. Amezquita, F.~Nasrin, K.~M. Storey, and M.~Yoshizawa.
\newblock Genomics data analysis via spectral shape and topology.
\newblock {\em PLOS ONE}, 18(4):1--19, 2023.

\bibitem{Bernstein2020}
A.~Bernstein, E.~Burnaev, M.~Sharaev, E.~Kondrateva, and O.~Kachan.
\newblock Topological data analysis in computer vision.
\newblock In {\em Twelfth International Conference on Machine Vision (ICMV
  2019)}, volume 11433, page 114332H. International Society for Optics and
  Photonics, 2020.

\bibitem{bubenik2015landscapes}
Peter Bubenik.
\newblock Statistical topological data analysis using persistence landscapes.
\newblock {\em Journal of Machine Learning Research}, 16(3):77--102, 2015.

\bibitem{Carrire2019PersLayAN}
Mathieu Carriere, Frederic Chazal, Yuichi Ike, Theo Lacombe, Martin Royer, and
  Yuhei Umeda.
\newblock Perslay: A neural network layer for persistence diagrams and new
  graph topological signatures.
\newblock In {\em Proceedings of the Twenty Third International Conference on
  Artificial Intelligence and Statistics}, volume 108 of {\em Proceedings of
  Machine Learning Research}, pages 2786--2796. PMLR, 26--28 Aug 2020.

\bibitem{carriere2017sliced}
Mathieu Carri{\`e}re, Marco Cuturi, and Steve Oudot.
\newblock Sliced {W}asserstein kernel for persistence diagrams.
\newblock In {\em Proceedings of the 34th International Conference on Machine
  Learning (ICML)}, volume~70 of {\em Proceedings of Machine Learning
  Research}, pages 664--673, 2017.

\bibitem{chazal2015stochastic}
Fr\'{e}d\'{e}ric Chazal, Brittany~Terese Fasy, Fabrizio Lecci, Alessandro
  Rinaldo, and Larry Wasserman.
\newblock Stochastic convergence of persistence landscapes and silhouettes.
\newblock In {\em Proceedings of the Thirtieth Annual Symposium on
  Computational Geometry}, SOCG'14, page 474–483, New York, NY, USA, 2014.
  Association for Computing Machinery.

\bibitem{cohen2007stability}
David Cohen-Steiner, Herbert Edelsbrunner, and John Harer.
\newblock Stability of persistence diagrams.
\newblock {\em Discrete \& Computational Geometry}, 37(1):103--120, 2007.

\bibitem{Deshmukh2023}
V.~Deshmukh, S.~Baskar, T.~E. Berger, E.~Bradley, and J.~D. Meiss.
\newblock Comparing feature sets and machine-learning models for prediction of
  solar flares: Topology, physics, and model complexity.
\newblock {\em Astronomy \& Astrophysics}, 674:A159, 2023.

\bibitem{dimartino2014abide}
Adriana Di~Martino, Chao-Gan Yan, Qian Li, Erin Denio, F.~Xavier Castellanos,
  Kaat Alaerts, Jeffrey~S. Anderson, Michal Assaf, Susan~Y. Bookheimer, Mirella
  Dapretto, et~al.
\newblock The autism brain imaging data exchange: Towards a large-scale
  evaluation of the intrinsic brain architecture in autism.
\newblock {\em Molecular Psychiatry}, 19(6):659--667, 2014.

\bibitem{edelsbrunner2008persistent}
Herbert Edelsbrunner and John Harer.
\newblock Persistent homology---a survey.
\newblock {\em Contemporary Mathematics}, 453:257--282, 2008.

\bibitem{Gabella2021}
M.~Gabella.
\newblock Topology of learning in feedforward neural networks.
\newblock {\em IEEE Transactions on Neural Networks and Learning Systems},
  32(8):3588--3592, 2021.

\bibitem{ghrist2008barcodes}
Robert Ghrist.
\newblock Barcodes: The persistent topology of data.
\newblock {\em Bulletin of the American Mathematical Society}, 45(1):61--75,
  2008.

\bibitem{gretton2012kernel}
Arthur Gretton, Karsten~M Borgwardt, Malte~J Rasch, Bernhard Schölkopf, and
  Alexander Smola.
\newblock A kernel two-sample test.
\newblock In {\em Journal of Machine Learning Research}, volume~13, pages
  723--773, 2012.

\bibitem{haarnoja2018soft}
Tuomas Haarnoja, Aurick Zhou, Pieter Abbeel, and Sergey Levine.
\newblock Soft actor-critic: Off-policy maximum entropy deep reinforcement
  learning with a stochastic actor.
\newblock In {\em International Conference on Machine Learning}, pages
  1861--1870. PMLR, 2018.

\bibitem{Heydenreich2021}
S.~Heydenreich, B.~Br{\"u}ck, and J.~Harnois-D{\'e}raps.
\newblock Persistent homology in cosmic shear: Constraining parameters with
  topological data analysis.
\newblock {\em Astronomy \& Astrophysics}, 648:A74, 2021.

\bibitem{hofer2019learning}
Christoph Hofer, Roland Kwitt, Marc Niethammer, and Andreas Uhl.
\newblock Learning representations of persistence barcodes.
\newblock {\em Journal of Machine Learning Research}, 20(126):1--45, 2019.

\bibitem{Ichinomiya2020}
T.~Ichinomiya, I.~Obayashi, and Y.~Hiraoka.
\newblock Protein-folding analysis using features obtained by persistent
  homology.
\newblock {\em Biophysical Journal}, 118(12):2926--2937, 2020.

\bibitem{kusano2016persistence}
Genki Kusano, Kenji Fukumizu, and Yasuaki Hiraoka.
\newblock Persistence weighted gaussian kernel for topological data analysis.
\newblock In {\em Proceedings of the 33rd International Conference on Machine
  Learning (ICML)}, volume~48 of {\em Proceedings of Machine Learning
  Research}, pages 2004--2013, 2016.

\bibitem{kwitt2015statistical}
Roland Kwitt, Stefan Huber, Marc Niethammer, Weili Lin, and Ulrich Bauer.
\newblock Statistical topological data analysis - a kernel perspective.
\newblock In C.~Cortes, N.~Lawrence, D.~Lee, M.~Sugiyama, and R.~Garnett,
  editors, {\em Advances in Neural Information Processing Systems}, volume~28.
  Curran Associates, Inc., 2015.

\bibitem{Liu2019}
S.~Liu et~al.
\newblock Scalable topological data analysis and visualization for evaluating
  data-driven models in scientific applications.
\newblock {\em IEEE Transactions on Visualization and Computer Graphics},
  26(1):291--300, 2019.

\bibitem{Maroulas2021}
V.~Maroulas, C.~Micucci, and F.~Nasrin.
\newblock Topological learning for classifying the structure of biological
  networks.
\newblock {\em Bayesian Analysis}, 2021.

\bibitem{maroulas2019nonparametric}
Vasileios Maroulas, Joshua~L. Mike, and Christopher Oballe.
\newblock Nonparametric estimation of probability density functions of random
  persistence diagrams.
\newblock {\em Journal of Machine Learning Research}, 20(151):1--49, 2019.

\bibitem{maroulas2019bayesian}
Vasileios Maroulas, Farzana Nasrin, and Christopher Oballe.
\newblock A {B}ayesian framework for persistent homology.
\newblock {\em SIAM Journal on Mathematics of Data Science}, 1(1):48--74, 2019.

\bibitem{meyn2012markov}
Sean Meyn and Richard~L Tweedie.
\newblock {\em Markov Chains and Stochastic Stability}.
\newblock Springer, 2012.

\bibitem{mileyko2011probability}
Yuriy Mileyko, Sayan Mukherjee, and John Harer.
\newblock Probability measures on the space of persistence diagrams.
\newblock {\em Inverse Problems}, 27(12):124007, 2011.

\bibitem{moller2003statistical}
Jesper Møller and Rasmus~Plenge Waagepetersen.
\newblock {\em Statistical Inference and Simulation for Spatial Point
  Processes}.
\newblock CRC Press, 2003.

\bibitem{Nasrin_2024}
Farzana Nasrin, Theodore Papamarkou, Austin Lawson, Na~Gong, Orlando Rios, and
  Vasileios Maroulas.
\newblock Bayesian random persistence diagram generation: An application to
  material microstructure analysis.
\newblock {\em Foundations of Data Science}, 6(3):361--378, 2024.

\bibitem{ng1999policy}
Andrew~Y. Ng, Daishi Harada, and Stuart Russell.
\newblock Policy invariance under reward transformations: Theory and
  application to reward shaping.
\newblock In {\em Proceedings of the Sixteenth International Conference on
  Machine Learning}, pages 278--287. Morgan Kaufmann, 1999.

\bibitem{oudot2015persistence}
Steve~Y. Oudot.
\newblock {\em Persistence theory: From quiver representations to data
  analysis}, volume 209 of {\em Mathematical Surveys and Monographs}.
\newblock American Mathematical Society, 2015.

\bibitem{Papamarkou2022randomPD}
Theodore Papamarkou, Farzana Nasrin, Austin Lawson, Na~Gong, Orlando Rios, and
  Vasileios Maroulas.
\newblock Random persistence diagram generator.
\newblock {\em Statistics and Computing}, 32(5):88, 2022.

\bibitem{reininghaus2015stable}
Jan Reininghaus, Stefan Huber, Ulrich Bauer, and Roland Kwitt.
\newblock A stable multi-scale kernel for topological machine learning.
\newblock In {\em Proceedings of the IEEE Conference on Computer Vision and
  Pattern Recognition (CVPR)}, pages 4741--4748, 2015.

\bibitem{sheehy2021sketching}
Donald~R. Sheehy and Siddharth Sheth.
\newblock Sketching persistence diagrams.
\newblock In {\em 37th International Symposium on Computational Geometry (SoCG
  2021)}, volume 189 of {\em Leibniz International Proceedings in Informatics
  (LIPIcs)}, pages 57:1--57:15. Schloss Dagstuhl--Leibniz-Zentrum f{\"u}r
  Informatik, 2021.

\bibitem{Sun2021}
H.~Sun, W.~Manchester~IV, and Y.~Chen.
\newblock Improved and interpretable solar flare predictions with spatial and
  topological features of the polarity inversion line masked magnetograms.
\newblock {\em Space Weather}, 19(12):e2021SW002837, 2021.

\bibitem{sutton2018reinforcement}
Richard~S Sutton and Andrew~G Barto.
\newblock {\em Reinforcement Learning: An Introduction}.
\newblock MIT Press, 2nd edition, 2018.

\bibitem{Townsend2020}
J.~Townsend, C.~P. Micucci, J.~H. Hymel, V.~Maroulas, and K.~D. Vogiatzis.
\newblock Representation of molecular structures with persistent homology for
  machine learning applications in chemistry.
\newblock {\em Nature Communications}, 11(1):1--9, 2020.

\bibitem{turner2014frechet}
Katharine Turner, Yuriy Mileyko, Sayan Mukherjee, and John Harer.
\newblock Fr{\'e}chet means for distributions of persistence diagrams.
\newblock {\em Discrete \& Computational Geometry}, 52(1):44--70, 2014.

\bibitem{Tymochko2020}
S.~Tymochko, E.~Munch, J.~Dunion, K.~Corbosiero, and R.~Torn.
\newblock Using persistent homology to quantify a diurnal cycle in hurricanes.
\newblock {\em Pattern Recognition Letters}, 133:137--143, 2020.

\bibitem{villani2009optimal}
C{\'e}dric Villani.
\newblock {\em Optimal Transport: Old and New}, volume 338 of {\em Grundlehren
  der mathematischen Wissenschaften}.
\newblock Springer, Berlin, Heidelberg, 2009.

\end{thebibliography}

\end{document}